\let\footnote=\endnote
\begin{document}

\RUNAUTHOR{Chen and Al Kontar}

\RUNTITLE{Online Learning of Optimal Sequential Testing Policies}

\TITLE{Online Learning of Optimal Sequential Testing Policies}

\ARTICLEAUTHORS{
\AUTHOR{Qiyuan Chen, Raed Al Kontar}
\AFF{Industrial and Operations Engineering, \\University of Michigan, Ann Arbor, MI 48105, \\
\EMAIL{cqiyuan@umich.edu}, \EMAIL{alkontar@umich.edu}} 
} 

\ABSTRACT{
This paper studies an online learning problem that seeks optimal testing policies for a stream of subjects, each of whom can be evaluated through a sequence of candidate tests drawn from a common pool. We refer to this problem as the Online Testing Problem (OTP). Although conducting every candidate test for a subject provides more information, it is often preferable to select only a subset when tests are correlated and costly, and make decisions with partial information. If the joint distribution of test outcomes were known, the problem could be cast as a Markov Decision Process (MDP) and solved exactly. In practice, this distribution is unknown and must be learned online as subjects are tested. When a subject is not fully tested, the resulting missing data can bias estimates, making the problem fundamentally harder than standard episodic MDPs. We prove that the minimax regret must scale at least as $\Omega(T^{\frac{2}{3}})$, in contrast to the $\Theta(\sqrt{T})$ rate in episodic MDPs, revealing the difficulty introduced by missingness. This elevated lower bound is then matched by an Explore-Then-Commit algorithm whose cumulative regret is $\tilde{O}(T^{\frac{2}{3}})$ for both discrete and Gaussian distributions. To highlight the consequence of missingness-dependent rewards in OTP, we study a variant called the Online Cost-sensitive Maximum Entropy Sampling Problem, where rewards are independent of missing data. This structure enables an iterative-elimination algorithm that achieves $\tilde{O}(\sqrt{T})$ regret, breaking the $\Omega(T^{\frac{2}{3}})$ lower bound for OTP. Numerical results confirm our theory in both settings. Overall, this work deepens the understanding of the exploration–exploitation trade-off under missing data and guides the design of efficient sequential testing policies.
}

\KEYWORDS{sequential design of experiments, missing data, online learning, Markov decision processes} \HISTORY{}

\maketitle

\section{Introduction}
Testing is a fundamental approach humans use to uncover latent information about the world. In many scenarios, the underlying information of interest, such as the quality of a product or the optimal treatment for a patient, is not directly observable. We refer to this problem as the Online Testing Problem (OTP). To obtain this latent information of interest, practitioners design a series of experiments, referred to as \textit{tests}, where each test reveals a glimpse of information related to the latent information of our interest. When possible, these tests can be conducted sequentially, allowing decisions about subsequent tests to be informed by the outcomes of prior ones. Ideally, conducting a sufficient number of relevant tests can fully reveal the desired latent information, while, in practice, tests often incur costs and exhibit correlations, making exhaustive testing inefficient. Thus, it is essential to balance the value of the information gained from a test against its associated cost. An optimal testing strategy would prioritize tests that offer unique and critical insights while minimizing redundancy and expenses. In cases where test correlations are known, scheduling tests efficiently is possible. However, in many real-world applications, these correlations are unknown and must be learned during operations. This paper addresses this more realistic scenario, formulating it as the \textbf{online testing problem} (OTP).

\subsection{Motivating Applications}
In this section, we introduce some practical applications that motivate the OTP. 

\subsubsection{Quality Control}
Testing is prevalent in quality control (QC). Imagine a bolt manufacturer who needs to assess whether the bolts they produce adhere to quality standards. From the standpoint of an engineer, there are numerous tests available, each providing an incomplete glimpse into a bolt's quality at a certain cost. Despite the long list of candidate tests, most outcomes are correlated: the length of a bolt, for instance, might be positively correlated with its weight, just as its shear strength is tied to its tensile strength \citep{budynas2011shigley}. Each of these tests varies not only in cost but also in fidelity. For example, while fluorescent penetrant inspection (FPI), ultrasonic testing (UT), and magnetic particle inspection (MPI) all serve to detect mechanical defects, each has distinct strengths: while UT has the deepest detection range down the surface of materials, FPI and MPI are generally available at a more affordable cost.

\begin{figure}[htb]
    \centering
    \includegraphics[width=\linewidth]{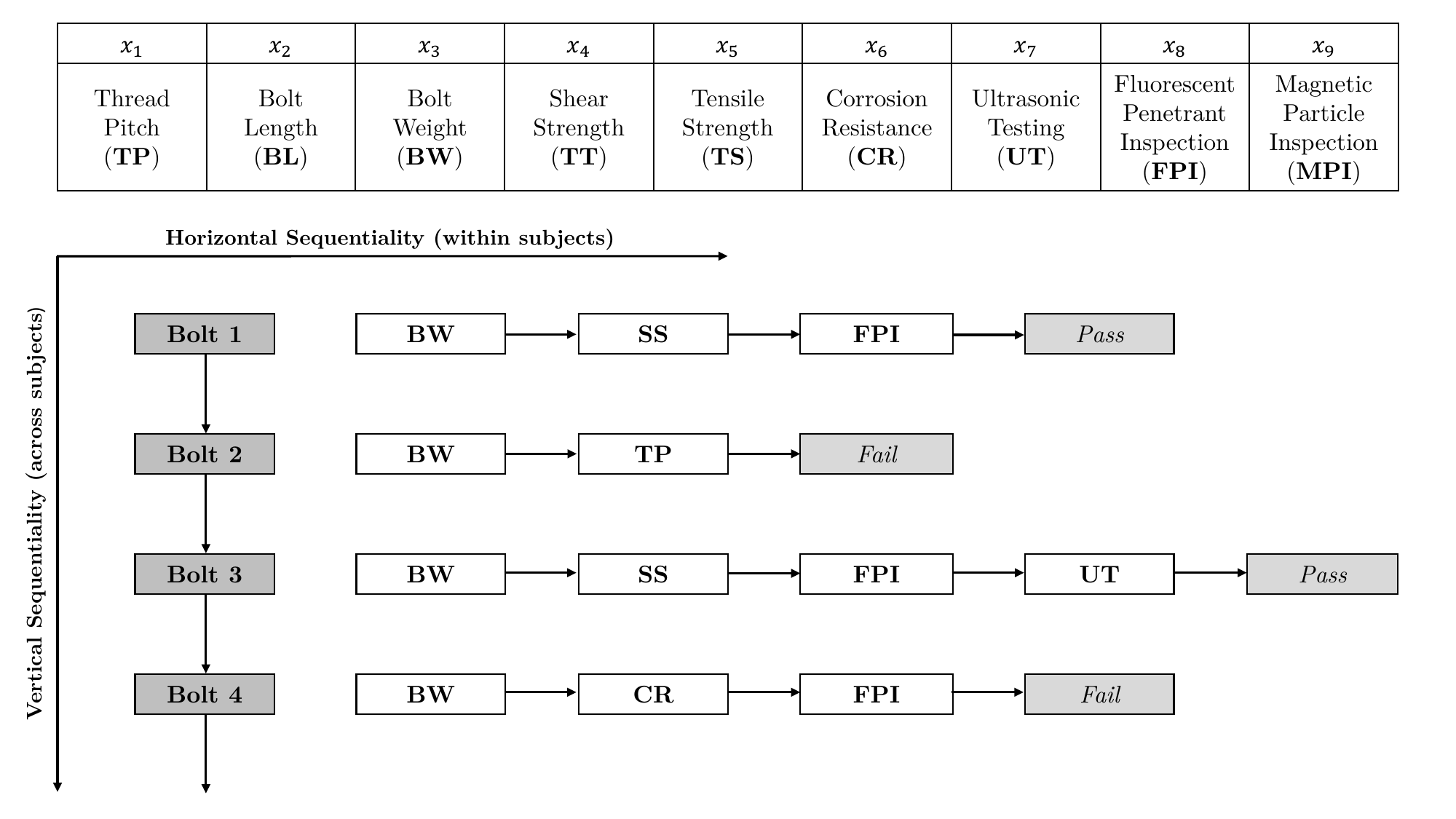}
    \caption{An example of a bolt manufacturer conducting sequential tests for quality control (QC)}
    \label{fig:inspection}
\end{figure}

\begin{table}[htb]
\centering
  \begin{tabular}{cccccccccc}
    \toprule
          & \multicolumn{9}{c}{\textbf{Tests}} \\         
    \cmidrule(lr){2-10}                                   
    \textbf{Subject} & \textbf{TP} & \textbf{BL} & \textbf{BW} & \textbf{SS} &
      \textbf{TS} & \textbf{CR} & \textbf{UT} & \textbf{FPI} & \textbf{MPI} \\  
    \midrule
    \textbf{Bolt 1} & \texttt{NA} & \texttt{NA} & 3 & 5 & \texttt{NA} & \texttt{NA} & \texttt{NA} & 8 & \texttt{NA} \\
    \textbf{Bolt 2} & 2 & \texttt{NA} & 3 & \texttt{NA} & \texttt{NA} & \texttt{NA} & \texttt{NA} & \texttt{NA} & \texttt{NA} \\
    \textbf{Bolt 3} & \texttt{NA} & \texttt{NA} & 6 & 5 & \texttt{NA} & \texttt{NA} & 3 & 4 & \texttt{NA} \\
    \textbf{Bolt 4} & \texttt{NA} & \texttt{NA} & 2 & \texttt{NA} & \texttt{NA} & 8 & \texttt{NA} & 8 & \texttt{NA} \\
    $\vdots$ & $\vdots$ & $\vdots$ & $\vdots$ & $\vdots$ & $\vdots$ & $\vdots$ &
      $\vdots$ & $\vdots$ & $\vdots$ \\
    \bottomrule
  \end{tabular}
\vspace{1em}
\caption{An anecdotal dataset collected during the QC process in Figure \ref{fig:inspection}}
\label{tab:dataset}
\end{table}

We shall highlight that there are two layers of sequentiality in the OTP. Ideally, if the correlations between the tests are known, instead of conducting every test, it makes sense to leverage these correlations to save test costs. There are two perspectives on how the correlation can help. From one perspective, some tests could be saved based on the outcome of another test: for example, if the shear strength of a bolt is strong, then one might consider skipping the tensile strength test. From another perspective, some tests might not be worth their costs and can be eliminated: if a bolt passes FPI, then the cost of UT may not even pay for the reduction in defect rate. These two perspectives together consist of the \textbf{horizontal sequentiality} in OTP (see Figure \ref{fig:inspection}): choosing the optimal next test based on previous test results and terminating tests when information is sufficient. In practice, because such correlation is usually unclear in practice, it has to be learned by observing the test outcomes on different bolts, where comes the \textbf{vertical sequentiality}. 

\subsubsection{Medical Diagnosis}
\label{sec:medical}
The above example of sequential testing with correlated tests also appears naturally in healthcare applications. In the vast realm of medical testing, there are numerous diagnostic tools available, each providing a partial snapshot of the patient’s health at a certain cost and level of invasiveness. Despite the extensive list of potential tests, many of their results are interrelated. Doctors must, therefore, strategically sequence these tests, choosing those most likely to provide critical information while minimizing unnecessary expense, invasiveness, and patient burden. For example, in the diagnosis of gout, useful tools range from blood panels to synovial fluid analysis and dual-energy CT scans. Each test differs not only in cost but also in diagnostic precision. Although a dual-energy CT scan is considered a gold standard in the diagnosis of gout \citep{zhang2006eular}, it is indeed costly compared to its alternatives and can be partially predicted by other clinical and laboratory results that are cheaper to obtain, including examples of Creatinine clearance and monosodium urate (MSU) crystals in synovial fluid \citep{gamala2018gouty}. Therefore, the optimal decision varies across different subjects, and doctors must balance the risks and costs based on every subject's symptoms and outcomes of lab results, which corresponds to the horizontal sequentiality in our setting. The qualities of such decisions, obviously, are based on the doctors' understanding of the correlations of different tests. Apart from learning from statistical studies like \cite{gamala2018gouty}, doctors can also learn the correlation while seeing more patients during clinical practice, which corresponds to the vertical sequentiality in our setting.

\subsubsection{Environmental Monitoring}
The application of sequential testing is not restricted only to different kinds of tests, but it can be the same test in different geographical or temporal contexts. One example of this kind is in environmental monitoring. For example, to monitor key environmental metrics (e.g., air quality, temperature, precipitation) in a country, researchers deploy sensors across different locations \citep{zidek2002designing}. Naturally, the information collected by different sensors is spatially correlated, while the operation of each sensor is associated with a cost. If one can learn the correlations, then it is possible to implement a more efficient solution where one maximizes the information gain minus the operation costs of the sensors. Interestingly (and jumping ahead a bit to Section \ref{sec:onlineMESP}), in cases where the goal is solely to maximize information gain, the reward becomes independent of the missing tests. This structural simplification makes the problem easier (in terms of regret) than the general OTP presented in Section \ref{sec:general}.

Across all these applications, the central objective remains the same: to maximize the reward of final decisions while minimizing the cumulative cost of testing. Achieving this goal is only possible if one understands how test outcomes are correlated, since correlated tests offer diminishing informational returns. Despite the important role correlation plays, it is often unknown a priori in practical settings. Even when the direction of dependence is understood, the exact magnitudes are typically unknown and must be inferred from data. In this paper, we formulate this sequential testing problem as an online learning problem with the hope of learning the optimal testing policy during operations.  Hereon, we denote the units tested (bolts, patients,..) as \textit{subjects}, and the decision makers as \textit{agents}.

\subsection{Brief Problem Description}
\label{sec:general}
Consider an online decision-making problem where an agent conducts sequential tests to probe the optimal decision for a subject indexed by $t\in [T]$. We model the desired latent information as the optimal decision in this decision-making problem. For instance, in the QC example in Figure \ref{fig:inspection}, every $t$ corresponds to a bolt to be examined, and the decisions are classifying bolts as either defective or qualified. For every subject $t$, a vector $\bm{x}^{(t)} \in \mathcal{X} \subset \mathbb{R}^d$ collects the hidden test outcomes as its entries, where each entry is revealed only when the corresponding test is conducted. These tests are designed to identify the optimal decision $y^{(t)} \in \mathcal{Y}$ for the corresponding subject. For the scope of this paper, we assume the environment is stationary and stochastic. In other words, $\bm{x}^{(t)}$ are independent and identically distributed random vectors following a stationary distribution $\mathcal{P}$. If $\mathcal{P}$ is a discrete distribution, we denote $|\mathcal{P}|$ as the size of its support. 
 
To reveal the $i$-th element $x^{(t)}_i$ of a random vector $\bm{x}^{(t)}$, the agent pays a test cost of $c_i$ to conduct the corresponding test. In our model, we assume the observations are noiseless, which do not require repetitions. 

If tests were free, one would conduct all of them, as more information generally leads to better decisions. However, in practice, tests are costly and test outcomes are often highly correlated, reducing the marginal benefits of additional tests. Therefore, it is crucial to strike a balance and sometimes make decisions with incomplete information when the marginal reward gained from the marginal information does not pay for its cost.

To keep the problem non-trivial, we assume the optimal decision of our interest is identifiable with the collection of tests we have. In other words, if we exhaust all the tests (reveal the entire $\bm{x}^{(t)}$), we would know the reward one can take by making a specific decision $y$. We denote this reward function as $f:\mathcal{X} \times \mathcal{Y} \to \mathbb{R}$, which is given to the agent a Priori. As such, the optimal decision for subject $t$ is the one that maximizes the reward, i.e., $y^{(t)}=\max_y f(\bm{x}^{(t)},y^{(t)})$. In the bolt QC example,  $-f(\bm{x}^{(t)},\texttt{fail})$ measures the manufacturing cost of the bolt, and $-f(\bm{x}^{(t)},\texttt{pass})$ measures the risk of selling a part (e.g., goodwill impairment) with quality test outcomes being $\bm{x}^{(t)}$. We highlight that this function $f$ takes only complete vectors as inputs, so the cost for decisions made under partial information (when $\bm{x}^{(t)}$ has missing entries, see Table \ref{fig:inspection}) is unknown.

The goal of the sequential decisions is to achieve a higher expected reward. For every subject $t$, there are two types of decisions that one can take: either conduct a test, where the agent observes the test outcome and pays a cost, or terminate testing and make a decision $y\in \mathcal{Y}$ based on the existing test outcomes. As an example shown in Figure \ref{fig:inspection}, for every bolt, the sequence of decisions happens horizontally from left to right: the agent either chooses a test (labeled in white boxes) or terminates testing and can make a decision of \textit{pass} or \textit{fail} (labeled in grey boxes). Denote the set $S$ as the set of all the tests that were conducted before the final decision. Then, the final reward for subject $t$ is measured as $f(\bm{x}^{(t)},y)-\sum_{i\in S} c_i$. 

Importantly, despite the objective existence of the final reward $f(\bm{x}^{(t)},y)-\sum_{i\in S} c_i$, it is not directly observed when an agent only has an incomplete $\bm{x}^{(t)}$. This becomes a central distinction that drives the OTP away from the conventional episodic MDP framework. In a standard episodic MDP \citep{azar2017minimax}, all the actions, states, and their associated rewards are observed.

\subsection{Related Literature}
\label{sec:review}

Sequential design of experiments has a long history since \cite{wald1948optimum} and \cite{chernoff1959sequential} and still remains an active and impactful research area. It captures a common practice in scientific discoveries: scientists often design experiments sequentially based on the outcomes of the previous experiments and terminate tests once there is enough evidence. 

Sequential design problems can adopt different problem setups depending on the type of experiments and their outcomes. Most of the existing literature on the sequential design of experiments is studied under sequential hypothesis testing \citep[SHT,][]{naghshvar2013active,bibaut2022near}.  Although we position this work under the umbrella of the sequential design of experiments, we are the first to consider learning a testing policy across different subjects online with no prior information, where the existing literature in SHT has little to offer. 

There exist two main streams of research in SHT operating under two different sets of assumptions. One line of research \citep{steven2021time-uniform,steven2022sequential,bibaut2022near} assumes doing a single experiment multiple times, and the goal is to find the optimal number of repetitions on the fly. This setting is most suited for A/B testing problems where there is only one experiment to do. The goal is to identify the optimal number of repetitions for a given experiment. This is fundamentally different than our setting because we have multiple different tests, and our goal is to find what is the best next test. Another line of research \citep{naghshvar2013active} is closer to our setting as it also involves multiple tests to choose from. However, their goal is still to test a single subject, and there is no learning involved since all transition probabilities between states are known beforehand.  

Our OTP, as its name suggests, is categorized as an online learning problem. Bandits \citep{lattimore2020bandit} are one of the most fundamental problems in online learning. Our problem is closer to partial monitoring \citep{cesa2006regret}, which can be viewed as a bandit problem with limited feedback. However, both bandit and partial monitoring usually do not plan for future decisions. The framework closest to our work in online learning for sequential decision-making is known as the Episodic MDP \citep{azar2017minimax,zanette2019tighter,domingues2021episodic}. Episodic MDP is a well-studied problem with a mini-max cumulative regret of $\tilde{\Theta}(\sqrt{T})$ (showing only $T$ dependence here). In Section \ref{sec:missingdata}, we will show that OTP is fundamentally different than episodic MDP due to the effect of missingness. If one were to force this problem into the framework of an Episodic MDP, OTP would have missing reward feedback in particular steps. The only way to approximate this missing reward is to rely on the estimated transition kernel, whose estimation accuracy is itself dependent on the past data. Consequently, there is a direct conflict between exploration and exploitation, as will be shown in the following sections, making the OTP fundamentally harder as the mini-max regret must scale with $\tilde{\Omega}(T^{\frac{2}{3}})$. 

Although we are the \textit{first} to study the testing problem in an online setting, prior applied work by \citet{peng2018} and \citet{yu2023deep} have trained deep reinforcement learning models for automated medical diagnosis in an offline setting (see the example in Section  \ref{sec:medical}). Interestingly, our analysis provides a theoretical justification for their approach of decoupling data collection from model training. Moreover, our findings highlight a critical concern for practitioners: missing data, if not properly addressed, can introduce bias and lead to suboptimal decision policies.

We acknowledge that there is another line of work that shares the setting of \cite{naghshvar2013active} but aims mainly at reducing computational complexity for testing problems \citep{kodialam2001throughput,condon2009algorithms,hellerstein2017max, gan2021greedy,segev2022poly}. Although many testing problems can be formulated as an MDP that can be solved in polynomial time with respect to the number of states, the number of states is usually exponential with respect to the number of tests without special structures. Researchers along this line assume a known data-generating distribution, and their main goal is to develop approximation algorithms that run in $O(\text{poly}(d))$ time. That said, the scope of this paper focuses on the learnability of the OTP rather than its computational complexity. In problems of high dimensions, one could use deep reinforcement learning \citep{yu2023deep} for an approximate solution. 

\subsection{Main Results and Our Contributions}

\label{sec:results}

This paper studies the problem of learning the optimal policy for sequentially selecting tests in an online setting (the Online Testing Problem (OTP)). The focus of this paper is to investigate theoretically what missing data brings to the learning of such a sequential decision-making problem. We demonstrate that the inherent data missingness makes the learning task provably more difficult than standard episodic MDPs, resulting in an elevated regret lower bound of \(\Omega(T^{\frac{2}{3}})\) (Theorem \ref{thm:lb_single}). In response, we propose an algorithm (Algorithm \ref{alg:etcD}) that matches this rate (up to logarithmic factors) where it achieves \(\tilde{O}(d|\mathcal{P}|^{\frac{1}{3}}T^{\frac{2}{3}})\) for discrete distributions (Theorem \ref{thm:etcD}) and \(\tilde{O}(d\sigma^2T^{\frac{2}{3}})\) for Gaussian distributions (Theorem \ref{thm:etcG}).

We then introduce the online cost-sensitive maximum entropy sampling problem (OCMESP), a special case of our general framework. OCMESP is provably more tractable, with a regret upper bound of \(\tilde{O}(d^3\sigma\sqrt{T})\) (Theorem \ref{thm:itr}) under our proposed Algorithm \ref{alg:itr}. All main theoretical results are validated through numerical simulations.

We shall summarize our key contributions as follows:

1. \textbf{Fundamental challenge of missing data:} We pinpoint a core challenge in OTP: the missing data problem. We show that the missingness pattern is Missing At Random \citep[MAR,][]{little2019statistical}, which can introduce bias if not properly addressed. To theoretically capture the impact of this missingness, we construct a lower bound example that reduces the OTP to a bandit setting where the reward of pulling one arm can only be observed by pulling another. This result quantifies the effect of missing data in OTP and highlights the importance of carefully designing the data collection process.

2. \textbf{Performance Guarantees of ETC:} We show that the explore-then-commit (ETC) strategy can match the optimal $T$ dependence, thereby providing a theoretical foundation for decoupling exploration and exploitation. This result supports current practices of training models on offline datasets \citep{peng2018,yu2023deep}, suggesting that such methods can be readily adapted to online settings. We provide convergence guarantees in both discrete and Gaussian settings, which may be of independent interest. Notably, although we adopt an ETC framework, our algorithm requires estimating rewards from transition probabilities, making it fundamentally different from existing theoretical results for standard episodic MDPs.

3. \textbf{Technical contribution in ETC regret upper bound:} While the dependence on $O(T^{\frac{2}{3}})$ is expected for ETC algorithms, the main challenge lies in the dependence of $|\mathcal{P}|$ and $d$. To the best of our knowledge, there exist no results for ETC algorithms in even episodic MDP literature. Instead of considering the dependence on the state and action space, we examine the dependence on the support size $|\mathcal{P}|$ and dimension $d$, which ends up giving a sharper bound, especially when $|\mathcal{P}|$ is sparse. Also, because we do not directly estimate transition probabilities for state-action pairs, the proof does not follow the standard literature in episodic MDP but uses an error propagation argument through dynamic programming. This technique prevents introducing extra dependence on $d$ by combining active actions space and state space sizes into a single $|\mathcal{P}|$.

4. \textbf{A practical setting avoids the general lower bound:} Recognizing the general problem’s inherent difficulty, we show that not all instances balancing information gain and testing costs suffer from the \(\Omega(T^{\frac{2}{3}})\) lower bound. If rewards are independent of missing entries, it is possible to design algorithms with \(\tilde{O}(\sqrt{T})\) regret. To illustrate this, we propose a practical special case, OCMESP, which achieves a regret bound of \(\tilde{O}(d^3\sigma\sqrt{T})\).

\section{The General Online Testing Problem} \label{sec:2}

\subsection{MDP Formulation}
\begin{figure}
    \centering
    \includegraphics[width=\linewidth]{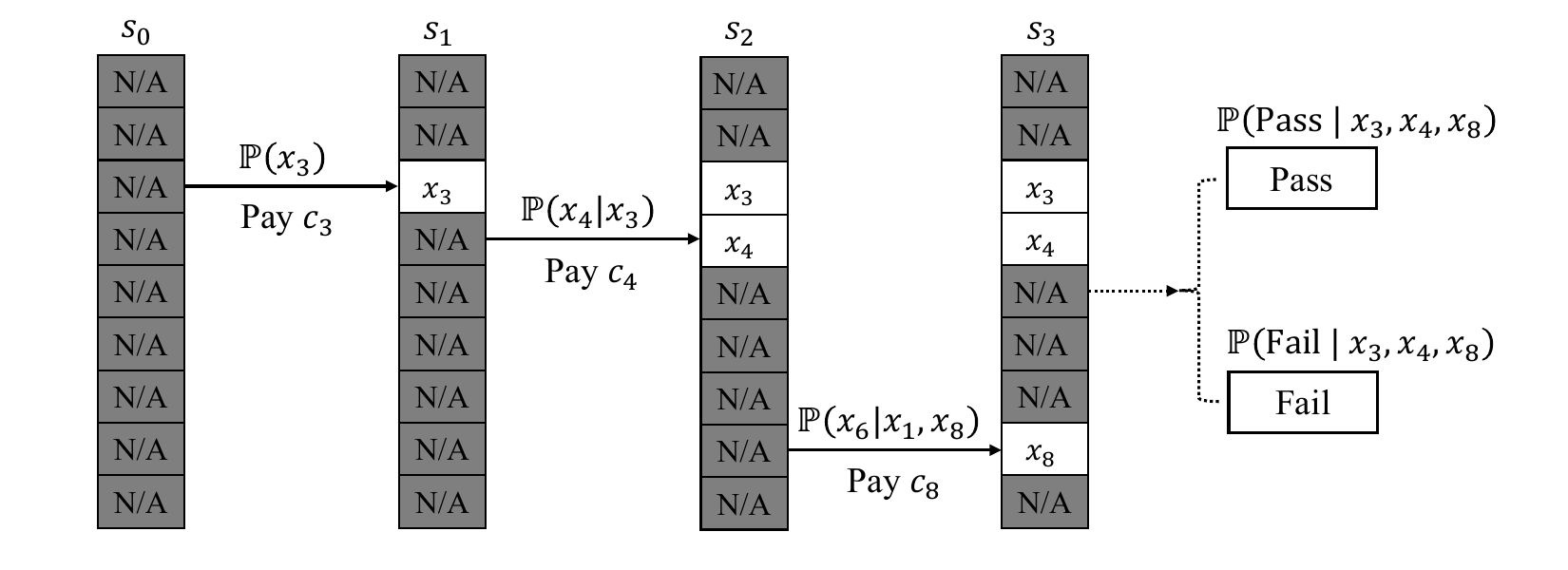}
    \caption{Formulation of the Online Testing Problem (Based on the Example in Figure  \ref{fig:inspection})}
    \label{fig:mdp}
\end{figure}

We formulate the OTP in the language of an MDP. Consider an $T$-episode MDP defined by a $4$-tuple $(\mathcal{S},\mathcal{A},q,r)$ where $\mathcal{S}$ is the state space, $\mathcal{A}$ is the action space, $q$ is the transition kernel, and $r$ is the reward function. At a high level, every episode corresponds to a subject, and the goal of an agent is to perform a sequence of tests until the agent makes a final decision. The action space $\mathcal{A}$ is defined as $\mathcal{A}\triangleq [d]\cup \mathcal{Y}$, where we use $[d]$ to denote the set of tests $\{1,\ldots, d\}$ and $\mathcal{Y}$ is the set of final decisions to be made. At every step within one episode, the agent can either continue testing by choosing an unperformed test in $[d]$ or terminate testing and choose the final decision from $\mathcal{Y}$. The states are represented by a $d$-dimensional vector $\bm{s}$ that keeps track of the test outcomes of a subject, which is simply $\bm{x}$ with its unobserved entries masked by $\texttt{NA}$. Upon making the final decision, an episode terminates to state $\texttt{END}$. As such, $\mathcal{S}\triangleq \left\{[s_1,\dots,s_d] \mid s_i\in \mathbb{R}\cup \{\texttt{NA}\}\right\} \cup \{\texttt{END}\}$. An example is shown in Figure \ref{fig:mdp}.

The transition probabilities of the MDP are given by $q$, where taking action $a$ in state $\bm{s}$ results in a distribution of the next state $\bm{s}' \sim q(\cdot \mid \bm{s},a)$.  For notation purposes, we define operation \(\bm{s} \oplus (a,b)\) as generating a new state by replacing the $a$-th element of $\bm{s}$ with value $b$, i.e.,
\[\bm{s} \oplus (a,b) \triangleq   \bm{s}' , \quad \text{where } s'_i = \begin{cases}
    b, & \text{ if $i=a$}\\
    s_i, & \text{otherwise}
\end{cases}.\]
Due to the definition of our states and actions, given action $a\in [d]$ in a state $\bm{s}$, the next state will be $\bm{s}\oplus (a,x_a)$, which is a random variable following the posterior distribution of the outcome of test $a$ conditioning on observed entries, i.e.,
\(q( \bm{s}\oplus (a,b) \mid \bm{s},a) = \mathbb{P}(x_a = b \mid x_j = s_j \text{ for all } j \text{ such that } s_j \ne \texttt{NA}),\) where $\mathbb{P}$ denotes the probability of an event. To simplify notations, we denote $P^{\bm{s}}(\cdot) = \mathbb{P}(\cdot \mid \bm{x} \overset{\Re}{=} \bm{s})$, where
\(\bm{x} \overset{\Re}{=} \bm{s} \Longleftrightarrow x_j = s_j \text{ for all } j \text{ such that } s_j \ne \texttt{NA}.\) Intuitively, $P^{\bm{s}}$ is the posterior distribution conditioning on current information $\bm{s}$. If the support of $\mathcal{P}$ is continuous, we abuse the notation by letting $P^{\bm{s}}(\cdot) = p(\cdot \mid \bm{x} \overset{\Re}{=} \bm{s})$, where $p$ is a probability density function.

When a test is chosen (i.e., $a\in [d]$), it incurs a corresponding cost of $c_a$, so the reward is given as $r(\bm{s},a) = - c_a$. Otherwise, if the agent makes a decision $y\in \mathcal{Y}$, an episode is terminated. The reward is determined by the quality of the decision, which is a realization of the posterior distribution conditioning on observations $\bm{s}$ (i.e., $r(\bm{s},y) \sim f(\bm{x},y)\mid \bm{s}$). 

An admissible testing policy $\pi$ is a mapping from states to actions, which could be stochastic. The entire purpose of the problem is to learn the optimal policy $\pi^*$ that maximizes the expected reward of an episode, where the expectation is over the randomness of the test outcome vector $\bm{x}$.

Note that the horizon and the initial state distribution are suppressed for simplicity. The horizon can be suppressed because any non-trivial policy terminates within $d$ steps, which is no worse than doing all the tests, so the horizon is implicitly capped at $d$. Without loss of generality, we do not consider the initial state distribution in this paper. We assume all the subjects arrive without contexts (i.e., no test outcome revealed), but our analysis can be adapted to contextual settings. 

\paragraph{Asymptotic Notations}
For asymptotic notations, we use $a = O(b)$ to denote that $a$ is asymptotically less than $b$ and $a = \Omega(b)$ to denote that $a$ is asymptotically greater than $b$. We use $\, \tilde{} \,$ to suppress any parameter-independent poly-log dependence, and we use $a\lesssim b$ to denote $a = \tilde{O}(b)$.

\subsection{Clairvoyant Policy}
\label{sec:clair}
Let $\mathcal{P}$ represent the underlying data-generating distribution of the test outcome vector $\bm{x}$. If $\mathcal{P}$ is known, a clairvoyant agent has full knowledge of the transition probabilities and rewards, allowing it to compute an optimal policy using dynamic programming (DP) shown in Algorithm \ref{alg:dp}. We denote this clairvoyant optimal solution as $\pi^*$. We use $v(\bm{s})$ to denote the value function, which is the expected remaining reward at state $\bm{s}$ when operating policy $\pi$ under data generating distribution $\mathcal{P}$. 

As mentioned above, for any state $\bm{s}$, a policy can choose to either do a test or make a decision. 
If the policy chooses to do test $i$, then it pays a cost of $c_i$ and has a probability of $P^{\bm{s}}({\bm{s}}\oplus (i,x_i))$ going to state ${\bm{s}}\oplus (i,x_i)$.
By the DP updating rule, we can compute the expected remaining reward for taking $i$ at state ${\bm{s}}$ as 
\[Q({\bm{s}},i) = -c_{i}+ \sum_{x_i} v( {\bm{s}}\oplus (i,x_i))\cdot P^{\bm{s}}( x_i) \quad \text{(discrete $\bm{x}$)},\]
\begin{equation}
\label{eq:qvalue}
   Q({\bm{s}},i) = -c_{i}+ \int v( {\bm{s}}\oplus (i,x_i))\cdot P^{\bm{s}}( x_i) \,dx_i \quad \text{(continuous $\bm{x}$)},
\end{equation}
so the optimal test at state ${\bm{s}}$ is $\argmax_i Q({\bm{s}},i)$, and the value function $v( {\bm{s}}) = \max_i Q({\bm{s}},i)$. Alternatively, if a decision $y$ was made in state ${\bm{s}}$, the expected reward $\mathbb{E}[r({\bm{s}},y)]$ can be computed as 
\[\mathbb{E}[r({\bm{s}},y)] = \sum_{\bm{x}} f(\bm{x},y)\cdot P^{\bm{s}}(\bm{x}), \quad \text{(discrete $\bm{x}$)}\]
\begin{equation}
\label{eq:rdecision}
    \mathbb{E}[r({\bm{s}},y)] = \int f(\bm{x},y)\cdot P^{\bm{s}}(\bm{x})\,d \bm{x} \quad \text{(continuous $\bm{x}$)},
\end{equation}
so the optimal decision at state ${\bm{s}}$ is $\argmax_y \mathbb{E}[r({\bm{s}},y)]$. If all the tests are done, there remains no uncertainty in the subject (i.e., $\bm{s}=\bm{x}$), so the optimal action is obviously to make a decision that maximizes $f(\bm{x},y)$. This gives the boundary conditions for our DP. As such, for ${\bm{s}}$ that does not contain missing values, the optimal decision and the value function are, respectively,
\(\pi^*({\bm{s}}) = \argmax_y f(\bm{s},y)\) and \(v({\bm{s}}) = \max_y f(\bm{s},y).\)

If there are still missing values in ${\bm{s}}$, then a policy can choose either to do a test that has not been done yet or make a decision directly. This only requires comparing $\max_y \mathbb{E}[r({\bm{s}},y)]$ and $\max_i Q({\bm{s}},i)$ and picking the one that is larger. This algorithm process is summarized in Algorithm \ref{alg:dp}.

\begin{algorithm}
\caption{Dynamic Programming (DP)}
\label{alg:dp}
\begin{algorithmic}[1]
\State \textbf{Input:} Distribution $\mathcal{P}$
\For{$\bm{s}$ being states that has no \texttt{NA}}
\State Compute $\max_y \mathbb{E}[r(\bm{s},y)]$ using Equation \ref{eq:rdecision}
\State Set $\pi^*(\bm{s})=\argmax_y \mathbb{E}[r(\bm{s},y)], v(\bm{s}) = \max_y \mathbb{E}[r(\bm{s},y)]$
\EndFor
\For{$d' = 1,\dots,d$}
\For{$\bm{s}$ being states that has $d'$ number \texttt{NA}}
\State Compute $\max Q(\bm{s},a)$ using Equation \ref{eq:qvalue}
\State Compute $\max_y \mathbb{E}[r(\bm{s},y)]$ using Equation \ref{eq:rdecision}
\If{$\max Q(\bm{s},a)>\max_y \mathbb{E}[r(\bm{s},y)]$}
\State Set $\pi^*(\bm{s})=\argmax Q(\bm{s},a), v(\bm{s}) = \max Q(\bm{s},a)$
\Else 
\State Set $\pi^*(\bm{s})=\argmax_y \mathbb{E}[r(\bm{s},y)], v(\bm{s}) = \max_y \mathbb{E}[r(\bm{s},y)]$
\EndIf
\EndFor
\EndFor
\State \textbf{Output}: $\pi^*$
\end{algorithmic}
\end{algorithm}

In practice, the data-generating distribution $\mathcal{P}$ is usually not available to us and can only be estimated from past data. One straightforward remedy is to replace the distribution $\mathcal{P}$ with a distribution $\hat{\mathcal{P}}$ that is estimated from the data. An estimated policy $\pi^*$ is then computed using the estimated distribution.  Alternatively, rather than obtaining the policy in two steps, existing literature \citep{peng2018,yu2023deep} directly trains reinforcement learning models on offline datasets, although they lose guarantees in optimality. 

The performance of any policy $\pi$ is measured as its gap compared to the optimal policy $\pi^*$. We note that the clairvoyant only knows the distribution of the $\bm{x}$, but not the realization of $\bm{x}$, and the performance gap (also known as the simple regret) $\ell(\bm{x},\pi)$ is measured as the gap between the rewards accumulated by $\pi$ and $\pi^*$ on subject $\bm{x}$.

Intuitively, with a sufficient amount of data, the hope is that the estimated distribution converges to the true distribution (i.e., $\hat{\mathcal{P}}\to\mathcal{P}$), which gives a better estimation of the value functions (i.e., $\hat{v} \to v$), so the policy computed on the estimated distribution converges to optimality. We use the cumulative regret to characterize the asymptotic behavior of an online learning algorithm, which is defined as $R_T\triangleq\sum_{t=1}^T \ell(\pi^{(t)},\bm{x}^{(t)})$, where $\pi^{(t)}$ is the policy used in episode $t$. If $\mathbb{E}[R_T]$ grows only at a sublinear rate, then the averaged simple regret converges to $0$, proving the convergence of an online algorithm.

\subsection{Missing Data Problem in Data-generating Distribution Estimation} 
\label{sec:missingdata}

One might be tempted to fit the OTP into a standard episodic MDP framework \citep{azar2017minimax}, since both involve learning an optimal policy $\pi^*$ from transitions governed by an unknown kernel $q$. However, a key distinction lies in the reward feedback: in standard episodic MDPs, the reward is observed for each state and action pair, whereas in OTP, the reward for the final decision $f(\bm{x}, y)$ is typically unobserved because the $\bm{x}$ might not be revealed.

This lack of reward feedback is not a technical artifact but a fundamental characteristic of real-world testing scenarios. The goal of testing is to infer the correct decision, so the ground truth is, by design, hidden. For instance, in the quality control example shown in Figure \ref{fig:inspection}, manufacturers rarely perform all available tests on parts that fail early inspection, making it impossible to exactly know whether those parts were actually defective or misclassified. Similarly, in medical diagnosis, doctors often do not receive definitive feedback on whether a diagnosis was correct, especially when symptoms resolve on their own or treatment effects are ambiguous. In the case of gout, for example, a patient may recover in one to two weeks regardless of treatment, so even the patient might never know if the prescribed medication was appropriate.

From a modeling perspective, if there existed a reliable and inexpensive way to reveal the correctness of a decision, it would itself be treated as a test. If such a shortcut were cheap and always available, the testing problem would become trivial: one would simply invoke it every time. But in reality, these shortcuts are either costly, noisy, or unavailable, which reinforces the need to make decisions under uncertainty while balancing information gain and cost.

Due to this missingness in the feedback of $y$, we cannot use the existing algorithms for episodic MDP, and it is unclear how existing algorithms can be adapted to incorporate the missingness. Yet, as suggested by Algorithm \ref{alg:dp}, instead of learning rewards and transition probabilities separately, one can generate a policy by directly estimating $\mathcal{P}$. Intuitively, one can constantly update the estimation of $\mathcal{P}$ with freshly acquired data and improve the decision quality. However, one caveat exists in this method: the test outcome collected in operation has missingness and can introduce bias to the dataset (see an example in Table  \ref{tab:dataset}). Whenever a policy decides to terminate without doing all the tests, the observation of the random vector $\bm{x}$ is left with missing entries. Because the termination is a decision based on the observed entries, this is a typical case of Missing-At-Random \citep[MAR,][]{little2019statistical} by design, where the missingness is dependent on the observed data. Such missingness makes it challenging to estimate the distribution with incomplete data. Unlike standard episodic MDP methods that directly estimate transition probabilities and rewards for state-action pairs, \textit{decision making is now tangled with estimation since the missing pattern is created by the policy one uses}.

\subsection{Mini-Max Lower Bound} \label{sec:minmax}
In this section, we prove the mini-max lower bound of cumulative regret for the OTP. Without further assumptions, OTP is proven harder than standard episodic MDP in terms of its regret lower bound. 

At the core of it all, this hardness is a direct consequence of the conflict between exploration and exploitation in this problem. In most online learning regimes, the agent usually gets feedback on at least the reward of the policy played, so one can at least estimate the quality of the policy by playing it. For example, in a multi-armed bandit problem \citep[MAB,][]{lattimore2020bandit}, even if one chooses to exploit by playing the arm with the highest average reward, they still observe the outcome of that arm, which helps refine its reward estimate. However, exploration and exploitation in the OTP never align with each other. \textit{To exploit, the agent skips a test and makes a final decision with observed evidence. Since the test is skipped, the agent learns nothing about the missing test in such a round.} 

We first consider the simple case where there is only one test (in Theorem \ref{thm:lb_single}), and we then create examples of $d$ tests by stacking the one-dimensional examples (in Theorem \ref{thm:lb}). Suppose a manufacturer (i.e., agent) can use only one test to determine if a part is qualified or not, whose outcome is binary. Specifically, the test outcome $x_1$ is a Bernoulli variable which takes realizations in $\{0,1\}$, with a probability of $p_0$ and $1-p_0$, respectively. If $x_1=1$, then the part is qualified (i.e., $y=1$). Otherwise, if $x_1=0$, then the part is defective and should be rejected (i.e., $y=0$). For simplicity, any false decision results in a loss of $1$. To translate into our formulation, this means that $f(1,0) = f(0,1) = 0$ and $f(0,0)=f(1,1)=1$.

Because there is only one test in every episode, the agent can choose either to perform the test and make the correct decision based on the observed $x_1$ or make a decision without even testing (see Figure \ref{fig:single_lb}).
\begin{figure}[htb]
    \centering
    \includegraphics[width=0.3\linewidth]{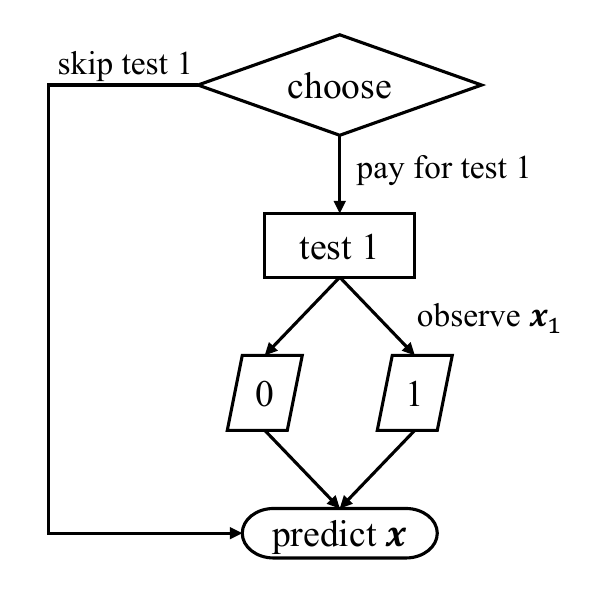}
    \caption{Lower Bound Example (single test)}
    \label{fig:single_lb}
\end{figure}

If the agent chooses to do the test, it must pay a test cost of $c_1 = \frac{3}{4}$. If the agent chooses to make a decision without a test, it might incur a loss due to a false decision.  The \textit{expected} cost (i.e., negative reward) of the three actions can be seen below:

\begin{enumerate}
    \item Action 1: skip the test and predict 0: $\mathbb{E}_{x_1}[f(x_1,0)] = 1-p_0$
    \item Action 2: skip the test and predict 1: $\mathbb{E}_{x_1}[f(x_1,1)] = p_0$
    \item Action 3: do the test (and predict whatever is observed): $c_1 = \frac{3}{4}$
\end{enumerate}

We consider the following two problem instances $\mathcal{I}_1$ and $\mathcal{I}_2$: \[\mathcal{I}_1: p_0 = \frac{1+ \epsilon}{2}, \quad \mathcal{I}_2: p_0 = \frac{1-\epsilon}{2},\]
where $\epsilon>0$ is a small number to be specified later. 

We first consider a simpler version of this problem where the task is only to make one decision. That is, after $N$ observations of $x_1$ are made (i.e., Action 3 is played $N$ times), the agent predicts the action of the lowest cost out of the three.   Clearly, since the observations of $x_1$ are i.i.d., to ease notations, we can assume without loss of generality that the first $N$ samples are observed and one decision is made at $t = N+1$.

First, we define some notations to set up the ground for the proof. Consider the measurable space $(\Psi,\mathcal{F})$ where sample space $\Psi = \{0,1\}^N$ and $\mathcal{F}$ is the power set of $\Psi$. We denote the probability of event $\psi \in \mathcal{F}$ under problem instance $\mathcal{I}_i$ to be \(P_i(\psi)  = \mathbb{P}(\psi\mid \mathcal{I}_i).\) Each problem instance $\mathcal{I}_i$ defines a $P_i$ over $\mathcal{F}$. Following this definition, we let $g:\Psi\to\{1,2\}$ be an estimator of $x_1$ after $N$ observations of $x_1$. 

First, we want to show in Lemma \ref{thm:fail_lb} that any deterministic $g$ should make mistakes with a considerable probability on at least one of the problem instances. Since the loss of any randomized policy is just a convex combination of some deterministic policies, this result generalizes to randomized policies as well. The proof of all the Lemmas and Theorems is deferred to the Appendix. The proof of Lemma \ref{thm:fail_lb} can be found in Section \ref{apx:single_lb} of the Appendix.

\begin{lemma}
\label{thm:fail_lb}
Suppose $N\leq \frac{1}{8\epsilon^2}$. For any $g$, there exists at least one problem instance $\mathcal{I}_i$ such that
\[P_i(g(\psi)=i)< \frac{3}{4}.\]
\end{lemma}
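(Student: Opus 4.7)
I would recognize this as a classical two-point hypothesis testing lower bound, and prove it by reducing to Le Cam's method combined with Pinsker's inequality. The key observation is that under $\mathcal{I}_i$, the cost of the best no-test action equals $(1-\epsilon)/2 < 3/4 = c_1$ (for $\epsilon$ small), so the optimal of the three actions is exactly action $i$. Thus the estimator $g$ is really trying to decide \emph{which instance} the data came from based on $N$ i.i.d.\ Bernoulli samples.

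The plan is to argue by contradiction. Suppose there exists a deterministic $g$ with $P_1(g(\psi)=1) \geq 3/4$ and $P_2(g(\psi)=2) \geq 3/4$ simultaneously. Writing the events $A = \{g = 1\}$ and $A^c = \{g = 2\}$, this gives
\[
P_1(A^c) + P_2(A) \;=\; (1-P_1(A)) + (1-P_2(A^c)) \;\leq\; \tfrac{1}{2}.
\]
By the standard Le Cam two-point inequality (or directly from the definition of total variation distance), we have for any event $A$,
\[
P_1(A^c) + P_2(A) \;\geq\; 1 - \|P_1 - P_2\|_{TV}.
\]
Combining these yields $\|P_1 - P_2\|_{TV} \geq 1/2$.

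Next I would upper bound $\|P_1 - P_2\|_{TV}$ using Pinsker's inequality together with the tensorization of KL divergence. Since $P_i$ is the product measure of $N$ i.i.d.\ Bernoulli$\bigl(\tfrac{1\pm\epsilon}{2}\bigr)$ samples,
\[
\mathrm{KL}(P_1 \,\|\, P_2) \;=\; N \cdot \mathrm{KL}\!\left(\mathrm{Ber}\!\left(\tfrac{1+\epsilon}{2}\right) \,\Big\|\, \mathrm{Ber}\!\left(\tfrac{1-\epsilon}{2}\right)\right) \;=\; N \epsilon \log\!\frac{1+\epsilon}{1-\epsilon},
\]
which I would then bound by $4N\epsilon^2$ using a Taylor/convexity estimate (valid for $\epsilon$ in the relevant regime, and otherwise the two instances are trivially distinguishable). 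Pinsker then gives
\[
\|P_1 - P_2\|_{TV}^2 \;\leq\; \tfrac{1}{2}\mathrm{KL}(P_1\,\|\,P_2) \;\leq\; 2N\epsilon^2.
\]
Plugging in $N \leq \tfrac{1}{8\epsilon^2}$ yields $\|P_1 - P_2\|_{TV}^2 \leq 1/4$, contradicting the lower bound $\|P_1 - P_2\|_{TV} \geq 1/2$ (using strictness from $P_1 \neq P_2$ at the boundary, or by arguing the contradiction is against $\|P_1-P_2\|_{TV} > 1/2$ which must hold if both probabilities exceed $3/4$).

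The main obstacle I anticipate is mostly bookkeeping rather than ideas: getting the constants in the KL bound to line up exactly with $\tfrac{1}{8\epsilon^2}$, and handling the strict-vs-nonstrict inequality carefully so the final contradiction is clean. A small subtlety to handle is whether the boundary case $P_i(g=i) = 3/4$ for both $i$ is excluded, but this follows because the inequality chain above becomes equality only in degenerate cases that the Bernoulli product measures do not realize.
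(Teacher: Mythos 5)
Your proposal is correct and follows essentially the same route as the paper's proof: tensorize the KL divergence over the $N$ i.i.d.\ Bernoulli samples, bound the per-sample KL by $4\epsilon^2$, apply Pinsker, and derive a contradiction from assuming both success probabilities are at least $\tfrac{3}{4}$. Your Le Cam formulation $P_1(A^c)+P_2(A)\ge 1-\|P_1-P_2\|_{TV}$ is just a repackaging of the paper's direct use of $|P_1(A)-P_2(A)|\le\|P_1-P_2\|_{TV}$, and the boundary-strictness issue you flag is resolved the same way in both arguments (the bound $\epsilon\ln\tfrac{1+\epsilon}{1-\epsilon}<4\epsilon^2$ is strict for $\epsilon\in(0,\tfrac12)$).
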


Now, we can prove a lower bound on the mix of $\mathcal{I}_1$ and $\mathcal{I}_2$. Consider a uniform distribution over the two problems, where each has a probability of $0.5$. On the one hand, according to Lemma \ref{thm:fail_lb}, any algorithm that does not have a sufficient number of tests will make a mistake on at least one of the problem instances with a constant probability. On the other hand, if we do a lot of tests, every test will also incur a constant regret. As such, we show in Theorem \ref{thm:lb_single} that the regret should scale with at least $\Omega(T^{\frac{2}{3}})$. The proof of Theorem \ref{thm:lb_single} can be found in Section \ref{apx:single_lb} of the Appendix.

\begin{theorem}
\label{thm:lb_single}
Fix time horizon $T$. For any algorithm $\pi$, there exists a problem instance such that $\mathbb{E}[R_T]$ is at least $\Omega(T^{\frac{2}{3}})$.
\end{theorem}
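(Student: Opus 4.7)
The plan is to reduce the cumulative-regret lower bound to the single-decision statement in Lemma~\ref{thm:fail_lb} by a standard case split on the total amount of exploration. Pick $\epsilon = c\,T^{-1/3}$ for a small absolute constant $c$ (so that $1/(8\epsilon^2) = \Theta(T^{2/3})$), and fix both problem instances $\mathcal{I}_1,\mathcal{I}_2$. First I would record the per-round regret gaps: on either instance the best action has expected cost $(1-\epsilon)/2$, whereas the ``do-the-test'' action costs $3/4$, so each test contributes regret at least $1/4$; and a wrong skip-and-predict action contributes regret exactly $\epsilon$. Let $N_T$ denote the (random) total number of tests performed during the $T$ episodes, and let $n_t \le N_T$ denote the number of tests performed before round $t$.

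Second, I would lift Lemma~\ref{thm:fail_lb} to a statement under the uniform mixture over $\{\mathcal{I}_1,\mathcal{I}_2\}$. At any skip-and-predict round $t$ with $n_t \le 1/(8\epsilon^2)$, the algorithm's prediction is a (possibly randomized) function $g_t$ of the $n_t$ observed Bernoulli draws, so Lemma~\ref{thm:fail_lb} applies and yields $P_1(g_t = 2) + P_2(g_t = 1) \ge 1/4$. Under the uniform mixture this translates to an error probability of at least $1/8$, and hence an expected per-round regret of at least $\epsilon/8$ at every such round.

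Third, I would split on the size of $\mathbb{E}[N_T]$ under the mixture. If $\mathbb{E}[N_T] \ge T^{2/3}/4$, the testing regret alone is at least $(1/4)\mathbb{E}[N_T] \ge T^{2/3}/16$, so one instance already suffers $\Omega(T^{2/3})$ regret. Otherwise $\mathbb{E}[N_T] < T^{2/3}/4$, and choosing the constant $c$ so that $1/(8\epsilon^2) \ge 10\,T^{2/3}$, Markov's inequality gives $\Pr(N_T > 1/(8\epsilon^2)) \le 1/40$. On the complementary event the bound $n_t \le 1/(8\epsilon^2)$ holds at every skip round, of which there are at least $T - N_T \ge T/2$ for $T$ large. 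Summing the per-round contribution from the previous paragraph, the expected mixture regret is at least $(\epsilon/8)(T/2)(39/40) = \Omega(\epsilon T) = \Omega(T^{2/3})$. Since the mixture regret is at most the maximum of the two instance regrets, at least one of $\mathcal{I}_1,\mathcal{I}_2$ witnesses the claimed lower bound.

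The main obstacle is the transition from the one-shot statement of Lemma~\ref{thm:fail_lb} to an online cumulative bound, because the decision function at round $t$, the sample count $n_t$, and the identity of the ``harder'' instance all depend on the past history and on the realized policy. Passing to a uniform Bayes mixture is what resolves this, since it replaces the existential $\max_i$ in Lemma~\ref{thm:fail_lb} by an unconditional lower bound of $1/8$ on the mixture error probability; the remaining work is then purely an accounting of how many rounds satisfy $n_t \le 1/(8\epsilon^2)$, which is controlled by Markov's inequality applied to $N_T$. Beyond this, the only care needed is to tune the absolute constants in $\epsilon = c\,T^{-1/3}$ so that both branches of the case split yield $\Omega(T^{2/3})$ simultaneously.
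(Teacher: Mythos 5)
Your proposal is correct and follows essentially the same route as the paper's proof: a uniform Bayes mixture over $\mathcal{I}_1,\mathcal{I}_2$, an appeal to Lemma~\ref{thm:fail_lb} to force a constant mixture error probability whenever fewer than $1/(8\epsilon^2)$ test outcomes are available, a case split on the total amount of testing (testing regret $\Omega(1/\epsilon^2)$ versus prediction regret $\Omega(\epsilon T)$), and the choice $\epsilon=\Theta(T^{-\frac{1}{3}})$ to balance the two. Your explicit treatment of the random test count $N_T$ via Markov's inequality is, if anything, more careful than the paper's argument, which treats the number of tests $N$ as a fixed quantity.
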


Now, to construct the lower bound example that gives dependency on the size of the state space, we construct an example of two tests where the \textit{second test is free} but correlated to the first test (see Figure \ref{fig:multi_lb}). For simplicity, we consider only the algorithms that start by doing test $2$. Because test $2$ is free, it does not hurt to start by doing test $2$. For any optimal policy that does not start by testing $x_2$, we can always find another policy that starts by doing so. If we can prove a lower bound for algorithms that start by testing $x_2$, it will also be a lower bound for all the algorithms. We let $x_2$ take values in $\{1,\dots, \frac{|\mathcal{P}|}{2}\}$, so after doing test 2, we will arrive in $\frac{|\mathcal{P}|}{2}$ states depending on the outcome of test 2. For example, if $x_2=1$, we will be in state $\bm{s} = (\texttt{NA},1)$.

We let the marginal probability of $\mathbb{P}(x_2 = i) =\frac{2}{|\mathcal{P}|}$ for all $i\in [\frac{|\mathcal{P}|}{2}]$. To construct hard instances, we let the conditional distribution of $x_1|x_2$ be either $\mathcal{I}_1$ or $\mathcal{I}_2$ shown previously in the single test case. Because $x_2$ has $\frac{|\mathcal{P}|}{2}$ different realizations, each problem instance in Figure \ref{fig:multi_lb} can have $\frac{|\mathcal{P}|}{2}$ conditional distributions. Hence, there are in total $2^{\frac{|\mathcal{P}|}{2}}$ problem instances constructed, and we assign a uniform distribution over them with each having a probability of $2^{-\frac{|\mathcal{P}|}{2}}$.

The idea of proving the lower bound is to treat every single state of $x_2$ as a separate problem instance we constructed above. If we can equally distribute the realizations across all the $|\mathcal{P}|$, then each subproblem should have approximately $\frac{T}{|\mathcal{P}|}$ rounds, which contributes $O\left(\frac{T^{\frac{2}{3}}}{|\mathcal{P}|^{\frac{2}{3}}}\right)$ regret. By summing up the regrets from all the states, we get the desired outcome.

\begin{figure}
    \centering
    \includegraphics[width=0.7\linewidth]{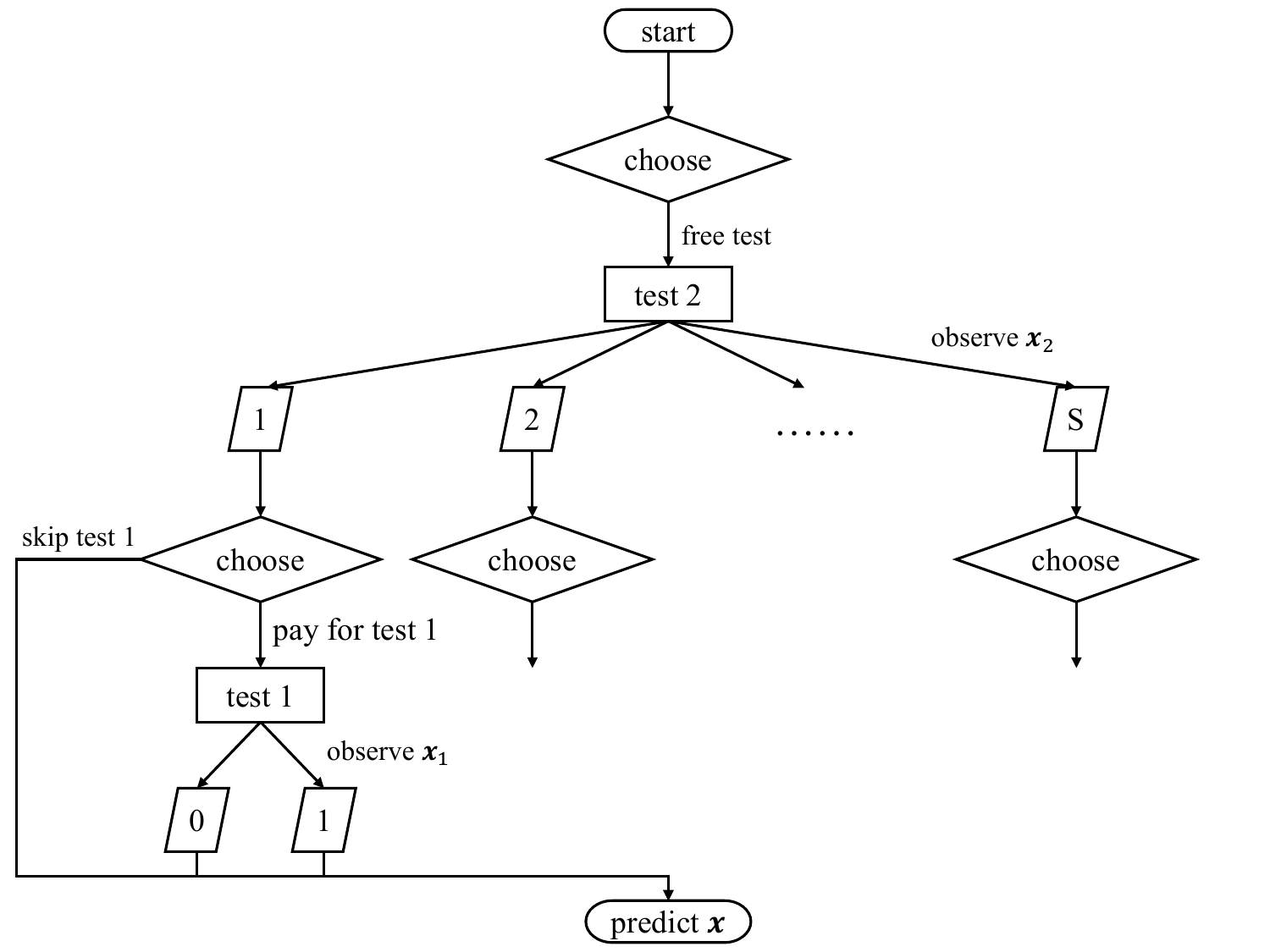}
    \caption{Lower Bound Example (multiple tests)}
    \label{fig:multi_lb}
\end{figure}

\begin{theorem}
\label{thm:lb}
Fix any time horizon $T$ and any $\kappa>0$. For any algorithm, there exists a problem instance with the support of $\mathcal{P}$ being size $\frac{|\mathcal{P}|}{2}$ such that $\mathbb{E}[R_T]$ is at least $\Omega(|\mathcal{P}|^{\frac{1}{3}}T^{\frac{2}{3}})$.
\end{theorem}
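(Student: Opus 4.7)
The plan is to reduce Theorem~\ref{thm:lb} to $|\mathcal{P}|/2$ parallel copies of the single-test problem analyzed in Theorem~\ref{thm:lb_single}, and then aggregate the per-copy regrets via Yao's principle. The construction in the excerpt (Figure~\ref{fig:multi_lb}) is already tailored for this: assigning each of the $|\mathcal{P}|/2$ values of $x_2$ a conditional law $x_1\mid x_2=i$ drawn independently from $\{\mathcal{I}_1,\mathcal{I}_2\}$ creates $|\mathcal{P}|/2$ decoupled single-test instances that the algorithm must solve in parallel.

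First I would argue that one may restrict to algorithms that always perform the free test $x_2$ at the start of every episode. Any policy that skips $x_2$ in some state can be modified to first observe $x_2$ (at zero cost) and then execute its original logic while discarding the observation; this modification can only weakly increase the reward, so a lower bound for the restricted class implies the same bound in general. Next I would invoke Yao's principle with the uniform prior on the $2^{|\mathcal{P}|/2}$ instances described above. Under this prior the conditional laws across different values of $i$ are \emph{independent}, so subjects with $x_2 = j$ carry no information about the conditional law at $x_2 = i$ when $j \ne i$. Consequently, the algorithm effectively plays $|\mathcal{P}|/2$ informationally independent single-test games. Let $T_i$ denote the random number of subjects whose $x_2$-value equals $i$. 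On the $T_i$-subject subsequence of branch $i$, the algorithm faces a single-test instance of horizon $T_i$, so by the argument behind Theorem~\ref{thm:lb_single} its expected regret on that branch is at least $\Omega(T_i^{2/3})$.

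Finally I would sum across branches. Since $T_i \sim \mathrm{Binomial}(T, 2/|\mathcal{P}|)$, a Chernoff estimate yields $T_i \ge T/|\mathcal{P}|$ with at least constant probability once $T/|\mathcal{P}|$ is a sufficiently large constant, hence $\mathbb{E}[T_i^{2/3}] = \Omega((T/|\mathcal{P}|)^{2/3})$. Summing,
\[
\mathbb{E}[R_T] \;\ge\; \sum_{i=1}^{|\mathcal{P}|/2} \Omega\!\left((T/|\mathcal{P}|)^{2/3}\right) \;=\; \Omega\!\left(|\mathcal{P}|^{1/3}\, T^{2/3}\right),
\]
which is the desired bound. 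Yao's principle then converts this average-case bound into a worst-case instance guarantee.

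The main obstacle I anticipate is making the information-theoretic decoupling rigorous: formally, the algorithm's posterior over the conditional law at $x_2=i$ must be shown to depend only on the subjects falling in branch $i$, which rests on the product structure of the prior together with the fact that rewards and terminations in branch $j\ne i$ do not couple to the unknown parameter of branch $i$. A secondary issue is the boundary regime $T \lesssim |\mathcal{P}|$, where the hypothesis $N\le 1/(8\epsilon^2)$ of Lemma~\ref{thm:fail_lb} may fail after restricting to branch $i$; this case is handled separately by noting that $\Omega(|\mathcal{P}|^{1/3}T^{2/3})$ is trivially absorbed into the constant per-episode regret whenever $T \le |\mathcal{P}|$, so one may assume $T \ge |\mathcal{P}|$ throughout the main argument.
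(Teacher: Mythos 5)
Your proposal follows essentially the same route as the paper: restrict to algorithms that begin with the free test $x_2$, treat each of the $|\mathcal{P}|/2$ branches as an independent copy of the single-test lower-bound instance, apply the per-branch $\Omega(T_i^{2/3})$ bound, and sum. One small remark: where the paper passes from $\mathbb{E}[T_i^{2/3}]$ to $(\mathbb{E}[T_i])^{2/3}$ by citing Jensen's inequality (which, for the concave map $x\mapsto x^{2/3}$, actually points the other way), your Chernoff-concentration argument showing $T_i\ge T/|\mathcal{P}|$ with constant probability is the correct justification of that step, so your write-up is, if anything, slightly more careful on this point.
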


Compared to Theorem \ref{thm:lb_single}, Theorem \ref{thm:lb} shows not only the dependence on the $T$, but also the dependence on the support size $|\mathcal{P}|$. The dependence on $|\mathcal{P}|$ turns out to be tight as it is matched by an upper bound in Theorem \ref{thm:etcG} in the next section.

\subsection{Methodology: Explore-Then-Commit} \label{sec:etcproof}

As mentioned earlier, one fundamental reason for having this elevated lower bound is that there is a direct conflict between exploration and exploitation. Therefore, it is natural to have an algorithm that separates the two factors, namely the Explore-Then-Commit (ETC) algorithm. To avoid the problem of missing data, ETC does every possible test for a series of subjects to learn the data-generating distribution $\mathcal{P}$, which is called an exploration phase. After the exploration phase, we can estimate the distribution $\mathcal{P}$ unbiasedly and commit to a policy by solving Algorithm \ref{alg:dp}.

\begin{algorithm}
\caption{Explore-Then-Commit (ETC)}
\label{alg:etcD}
\begin{algorithmic}[1]
\State \texttt{\# Explore}
\State \textbf{Input}: Total episodes $T$
\State Set $N=\lfloor |\mathcal{P}|^{\frac{1}{3}}T^{\frac{2}{3}} \rfloor$
\For{$t=1,\dots,N$}
\State Do all the tests, i.e., observe the entire $\bm{x}^{(t)}$
\State Make decision $y_t = \argmax_{y}f(\bm{x}^{(t)},y)$
\EndFor
\State \texttt{\# Commit}

\State Compute $\hat{\mathcal{P}}$ by setting the probability mass function of $\bm{x}$ as $\mathbb{P}(\bm{x}) = \frac{\sum_{t=1}^{N}\mathds{1}\{\bm{x}^{(t)} = \bm{x}\}}{N}$

\State Compute policy $\hat{\pi}$ using Algorithm \ref{alg:dp} and $\hat{\mathcal{P}}$.

\For{$t=N+1,\dots,T$}
\State Play policy $\hat{\pi}$.
\EndFor
\end{algorithmic}
\end{algorithm}

This algorithm turns out to match the regret lower bound on episode $T$ and support size $\mathcal{P}$. We prove this argument in Theorem \ref{thm:etcD} below.

\begin{theorem}
\label{thm:etcD}
    Given time horizon $T$, number of tests $d$, and bounded costs $\{c_i\}_{i\in [d]}$ and reward function $f$, for any fixed discrete distribution $\mathcal{P}$ with finite support, with a probability of at least $1-\delta$, the cumulative regret of running Algorithm \ref{alg:etcD} is at most $\tilde{O}(d|\mathcal{P}|^{\frac{1}{3}}T^{\frac{2}{3}})$.
\end{theorem}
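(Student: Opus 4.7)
The plan is to split the cumulative regret into an exploration part and a commit part, bound each in isolation, and balance via the choice $N = \lfloor |\mathcal{P}|^{1/3}T^{2/3}\rfloor$.

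\textbf{Exploration phase.} During the first $N$ episodes the algorithm performs all $d$ tests, incurring cost $\sum_i c_i = O(d)$ per episode, while $v(\bm{s}_0)$ is bounded by a constant. Hence each exploration episode contributes $O(d)$ simple regret, and the total exploration regret is at most $O(dN) = O(d|\mathcal{P}|^{1/3}T^{2/3})$.

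\textbf{Concentration of $\hat{\mathcal{P}}$.} Because the $N$ exploration samples are fully observed i.i.d. draws from $\mathcal{P}$, I would apply Bernstein's inequality atom-by-atom to each of the $|\mathcal{P}|$ elements of $\mathrm{supp}(\mathcal{P})$ and take a union bound, then invoke Cauchy--Schwarz (or the standard Devroye--Lugosi estimate for empirical distributions on finite supports) to obtain, with probability at least $1-\delta$,
\[\|\hat{\mathcal{P}}-\mathcal{P}\|_1 \;\lesssim\; \sqrt{|\mathcal{P}|\log(|\mathcal{P}|/\delta)/N}.\]
This will be the only stochastic ingredient; from here the analysis is deterministic on the good event.

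\textbf{Commit phase via DP error propagation.} Let $\hat{v}$ be the value function produced by Algorithm~\ref{alg:dp} with input $\hat{\mathcal{P}}$, let $\hat{\pi}=\operatorname{argmax}$ of $\hat{v}$, and let $v^{\hat{\pi}}$ be the true value of running $\hat{\pi}$ under $\mathcal{P}$. A standard performance-difference argument yields $v(\bm{s}_0)-v^{\hat{\pi}}(\bm{s}_0) \le 2\sup_{\bm{s}}|\hat{v}(\bm{s})-v(\bm{s})|$, so it suffices to control the DP error. I would do this by induction on the number of \texttt{NA}s in $\bm{s}$, starting from the boundary (no \texttt{NA}s), where $\hat{v}=v$ exactly since $f$ is given. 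At each recursion step the Bellman operator introduces an extra error bounded by the $\ell_1$ deviation of the posterior $P^{\bm{s}}$ from its empirical counterpart, times $\|v\|_\infty$; unrolling over the depth $d$ of the DP gives
\[\sup_{\bm{s}}|\hat{v}(\bm{s})-v(\bm{s})| \;\lesssim\; d\,\|\hat{\mathcal{P}}-\mathcal{P}\|_1.\]
The key technical point, flagged by the authors in contribution~3, is that although $|\mathcal{S}|$ can be exponential in $d$, the accumulated posterior errors along any root-to-leaf path of the DP collapse (after taking expectations over the realized trajectory and renormalizing posteriors) to the joint $\ell_1$ error $\|\hat{\mathcal{P}}-\mathcal{P}\|_1$, giving $|\mathcal{P}|$ rather than $|\mathcal{S}|$ dependence. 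Multiplying this per-episode regret by $T-N$ gives commit regret $\tilde{O}(d\sqrt{|\mathcal{P}|/N}\cdot T)$, and substituting $N=|\mathcal{P}|^{1/3}T^{2/3}$ balances the two phases at $\tilde{O}(d|\mathcal{P}|^{1/3}T^{2/3})$.

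\textbf{Main obstacle.} The nontrivial step is the DP error propagation in the commit phase. Naively bounding the error in each $Q(\bm{s},a)$ by the $\ell_1$ deviation of $P^{\bm{s}}$ and summing over states would pay for the state-space cardinality and potentially blow up when some posteriors are conditioned on low-probability events. I would circumvent the second issue by truncating to posteriors whose conditioning event has probability above a threshold that shrinks in $N$ (contributing only to logarithmic terms), and handle the first issue by re-expressing the sum of conditional TV errors, weighted by the distribution of reaching $\bm{s}$ under $\hat{\pi}$, as a single unconditional $\ell_1$ error over the joint support — which is exactly $|\mathcal{P}|$-sized. Ensuring this telescoping is clean, uniformly over the random commit-phase policy $\hat{\pi}$, is where the proof's technical weight will sit.
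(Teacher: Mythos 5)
Your skeleton matches the paper's (explore/commit split, $O(dN)$ exploration regret, concentration of the empirical distribution, DP error control by induction on the number of \texttt{NA}s, balance at $N=|\mathcal{P}|^{1/3}T^{2/3}$), and your final arithmetic is right, but the headline intermediate claim is false as stated: $\sup_{\bm{s}}|\hat{v}(\bm{s})-v(\bm{s})|\lesssim d\,\|\hat{\mathcal{P}}-\mathcal{P}\|_1$ cannot hold uniformly over states, since the Bellman recursion runs through posteriors $P^{\bm{s}}$, and for a conditioning event $\{\bm{x}\overset{\Re}{=}\bm{s}\}$ whose probability is comparable to (or below) the joint $\ell_1$ error the conditional total-variation error can be $\Theta(1)$. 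Consequently the sup-norm performance-difference step you invoke is vacuous here; the whole argument must be phrased in visitation-weighted form from the outset. Your sketched repair (truncate low-probability conditioning events and re-aggregate the conditional errors, weighted by hitting probabilities, into the joint $\ell_1$ error via disjointness of the events $\{\bm{x}\overset{\Re}{=}\bm{s}\}$ over states reachable at a fixed depth under a deterministic policy) can be pushed through, but it is the entire proof, and as sketched it still misses two points: the weighted conditional error is controlled by the restricted joint error only up to the ratio $\mathbb{P}(\bm{x}\overset{\Re}{=}\bm{s})/\hat{\mathbb{P}}(\bm{x}\overset{\Re}{=}\bm{s})$, so you must separately show $\hat{\mathbb{P}}\asymp\mathbb{P}$ on the retained states; and the truncated states contribute a polynomially small, threshold-dependent term (not merely logarithmic factors), with the threshold squeezed between these two requirements, which works only for $T$ sufficiently large. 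The weighted telescoping must also be run for the $\pi^{*}$-side of the regret decomposition, not only under $\hat{\pi}$.

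For comparison, the paper never touches the joint $\ell_1$ error or conditional-probability ratios: it applies Weissman's bound to each posterior separately, using that at most $|\mathcal{P}|^2$ distinct posteriors ever need estimating, to get $\Delta v(\bm{s})\lesssim\sqrt{k(\bm{s})/N(\bm{s})}$ -- an error scaling like $1/\sqrt{\mathbb{P}(\bm{x}\overset{\Re}{=}\bm{s})}$ rather than your $1/\mathbb{P}(\bm{x}\overset{\Re}{=}\bm{s})$ -- and its induction shows this bound does not compound with depth because supports split, $\sum_{x_i}k(\bm{s}\oplus(i,x_i))=k(\bm{s})$, via Cauchy--Schwarz. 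The factor $d$ then enters only through summing $\Delta v(\bm{s})\,\mathbb{P}(\text{hit }\bm{s})$ along a trajectory, using $\sum_{\bm{s}}k(\bm{s})\le d|\mathcal{P}|$ and $\sum_{\bm{s}}\mathbb{P}(\text{hit }\bm{s})\le d$, whereas you pay the $d$ through depth-unrolling of the joint error; both accountings reach the same per-episode bound $\tilde{O}(d\sqrt{|\mathcal{P}|/N})$, but the paper's per-posterior route avoids the truncation and ratio-control issues entirely, which is precisely why its induction closes cleanly. So: right plan and right rate, but the stated sup-norm lemma is a genuine gap, and the deferred "telescoping" step needs the repairs above (or the paper's per-posterior argument) to be a proof.
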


Here we give a proof sketch of Theorem \ref{thm:etcD}, and the complete proof is deferred to Section \ref{apx:etcD} in the Appendix. While the dependence on $\tilde{O}(T^{\frac{2}{3}})$ is expected for ETC algorithms, the main challenge lies in the dependence of $|\mathcal{P}|$ and $d$. To the best of our knowledge, there exist no results for ETC algorithms in even episodic MDP literature since it is usually suboptimal compared to the lower bound $\Omega(\sqrt{T})$. Instead of considering the state and action dependence, we take a different route here by examining the dependence on the support size $|\mathcal{P}|$, which ends up giving a sharper bound. Such a characterization is also tighter and provides more insights when the data-generating distribution $\mathcal{P}$ is sparse. 

To obtain the desired results, one significant step is to bound the error of the reward functions and the transition probabilities estimated using $\hat{\mathcal{P}}$, which boils down to bounding the $L_1$ deviations of the estimated posterior distributions $\hat{P}^{\bm{s}}$. By a careful analysis of the state space, we first show that although the total number of posterior distributions (i.e., $\hat{P}^{\bm{s}}$) to be estimated can be large, the number of non-trivial $\hat{P}^{\bm{s}}$ is at most $|\mathcal{P}|^2$. With this observation, we apply a union bound on the standard $L_1$ deviation bound for empirical distributions \citep{weissman2003inequalities} and get an error bound of $\tilde{O}\left(\sqrt{\frac{k(\bm{s})}{N(\bm{s})}}\right)$, where $k(\bm{s})$ is the size of the support of posterior $\hat{P}^{\bm{s}}$ and $N(\bm{s})$ is the number of samples $\bm{x}$ that satisfies $\bm{x}\overset{\Re}{=}\bm{s}$. 

To aid the proof, we introduce the following new notations.  We use $\hat{P^{\bm{s}}}$, $\hat{\mathbb{E}}[r(\bm{s},y)]$, $\hat{Q}(\bm{s},i)$, and $\hat{v}(\bm{s})$ to denote the estimations of $P^{\bm{s}}$, $\mathbb{E}[r(\bm{s},y)]$, $Q(\bm{s},i)$, and $v(\bm{s})$. Moreover, we use $\Delta \mathbb{P}$, $\Delta \mathbb{E}[r(\bm{s},y)]$, $\Delta Q(\bm{s},i)$, and $\Delta v(\bm{s})$ to denote the absolute deviation of the estimations. For example, we have $\Delta v(\bm{s}) \triangleq |v(\bm{s}) - \hat{v}(\bm{s})|$.

The regret of the algorithm is dictated by the accuracy of value functions $\hat{v}$ (or equivalently $\hat{Q}$). If tests are terminated in state $\bm{s}$ and a final decision is made, the error of the corresponding action is determined by the estimation error in Equation \ref{eq:rdecision}, which is of order $\tilde{O}\left(\sqrt{\frac{k(\bm{s})}{N(\bm{s})}}\right)$. If a test is chosen, its error is determined by the estimation error of $\hat{Q}$ in Equation \ref{eq:qvalue}. We decompose the Equation \ref{eq:qvalue} into three terms: the first term depends on the estimation error of the next state $v(\bm{s}\oplus(i,x_i))$; the second term depends on the estimation error of posterior $\hat{P}^{\bm{s}}$, and the last term is the product of the two errors, which is considered a higher order term. Since the error in $\hat{v}$ is upper bounded by the error in $\hat{Q}$, we can see that the estimation error propagates at most $d$ times through $Q$ in dynamic programming. However, instead of introducing another $d$ dependence through the propagation, we show that the error in $\hat{v}(\bm{s})$ and $\hat{Q}(\bm{s},\cdot)$ can be bounded also by $\tilde{O}\left(\sqrt{\frac{k(\bm{s})}{N(\bm{s})}}\right)$. Obviously, $k(\bm{s})\leq |\mathcal{P}|$, so the expected regret of one exploitation step can hence be controlled roughly by $O\left(d\sqrt{\frac{|\mathcal{P}|}{N}}\right)$, where $N$ is the number of exploration steps. Finally, we choose $N=|\mathcal{P}|^{\frac{1}{3}}T^{\frac{2}{3}}$ to balance between the regret in exploration and exploitation, and obtain a final regret bound of $\tilde{O}(d|\mathcal{P}|^{\frac{1}{3}}T^{\frac{2}{3}})$.

It should be highlighted that this upper bound matches the lower bound's dependence on $|\mathcal{P}|$ and $T$ shown in Theorem \ref{thm:lb} up to logarithmic factors. This provides a theoretical justification for using a ``data collecting - model training'' approach like \cite{yu2023deep}. Here, ``data collecting'' refers to the exploration phase in the algorithm, and ``modeling training'' refers to the exploitation phase. That said, one still needs to be cautious about the missingness in the collected dataset, especially when some tests are early terminated. Such missingness appears not only in the deployment of trained policies, as we have explained in Section  \ref{sec:missingdata}, but it also naturally appears when tests are early terminated by humans. This is a typical example in a medical diagnosis dataset, where doctors obviously do not recommend all the tests for patients.

Note that Theorem \ref{thm:etcD} does not automatically generalize to cases where $\mathcal{P}$ are continuous since the support of $\mathcal{P}$ becomes uncountable. In fact, $L_1$ deviations of empirical distributions do not converge in continuous settings, making it impossible to obtain the same convergence rate as in the discrete setting without further assumptions. As such, we show one example of a parametric case where the $\mathcal{P}$ is a multivariate Gaussian distribution, which also provides a comparison with the special case shown in Theorem \ref{thm:itr}. 

We assume without loss of generality that the covariance matrix has a finite condition number of $\sigma$. Since $\Sigma$ is a covariance matrix, its eigenvalues are at least non-negative, so this assumption effectively excludes the case where the covariance matrix is singular (i.e., $\det(\Sigma)=0$). For a multivariate Gaussian distribution, if the covariance matrix is singular, it means that there is at least one eigenvalue of $\Sigma$ equal to zero, which corresponds to some redundant dimensions. This assumption is without the loss of generality because the redundant dimensions can be identified with high probability by taking $d$ full samples, which only contributes to an additive constant regret and will not affect our analysis on cumulative regrets.

\begin{algorithm}
\caption{Explore-Then-Commit (ETC), Gaussian}
\label{alg:etcG}
\begin{algorithmic}[1]
\State \textbf{Input}: Total episodes $T$
\State \texttt{\# Explore}

\State Set $N=\lfloor \sigma^2 T^{\frac{2}{3}} \rfloor$
\For{$t=1,\dots,N$}
\State Do all the tests, i.e., observe the entire $\bm{x}^{(t)}$
\State Make decision $y_t = \argmax_{y}f(\bm{x}^{(t)},y)$
\EndFor
\State \texttt{\# Commit}

\State Estimate $\hat{\mathcal{P}}$ using the plug estimator where $\hat{\bm{\mu}} = \frac{1}{N}\sum_{n=1}^N \bm{x}^{(n)}$ and $\hat{\Sigma} = \frac{1}{N}\sum_{n=1}^N \bm{x}^{(n)}{\bm{x}^{(n)}}^\intercal$.

\State Compute policy $\hat{\pi}$ using Algorithm \ref{alg:dp}.

\For{$t=N+1,\dots,T$}
\State Play policy $\hat{\pi}$.
\EndFor
\end{algorithmic}
\end{algorithm}

\begin{theorem}
\label{thm:etcG}
    Given time horizon $T$, number of tests $d$, and bounded costs $\{c_i\}_{i\in [d]}$ and reward function $f$, for any fixed Gaussian distribution $\mathcal{P}$ of condition number $\sigma$, with a probability of at least $1-\delta$, the cumulative regret of running Algorithm \ref{alg:etcG} is at most $\tilde{O}(d\sigma^2T^{\frac{2}{3}})$.
\end{theorem}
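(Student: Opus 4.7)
The proof plan follows the same architecture as Theorem \ref{thm:etcD}, replacing the $L_1$ deviation bound for empirical discrete distributions by a concentration argument tailored to the Gaussian family, and then pushing the resulting posterior distribution error through the same dynamic programming error-propagation argument. Concretely, since Algorithm \ref{alg:etcG} performs all $d$ tests during exploration, the $N$ samples $\bm{x}^{(1)},\ldots,\bm{x}^{(N)}$ are i.i.d.\ draws from the true $\mathcal{P}=\mathcal{N}(\bm{\mu},\Sigma)$ with no missing entries. Standard sub-Gaussian concentration yields $\|\hat{\bm{\mu}}-\bm{\mu}\|_2$ and $\|\hat{\Sigma}-\Sigma\|_{op}$ both of order $\tilde{O}(\sigma\sqrt{d/N})$ with probability at least $1-\delta$. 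The exploration phase itself contributes at most $O(N)$ to the cumulative regret because each exhaustive episode incurs only a bounded per-episode loss relative to $\pi^*$.

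The next step is to translate the parameter error into error on the posteriors $P^{\bm{s}}$ that the DP recursion actually consumes. For any conditioning pattern $\bm{s}$, let $O(\bm{s})$ index the observed coordinates and $M(\bm{s})$ the missing ones. The conditional law is Gaussian with mean $\bm{\mu}_M + \Sigma_{MO}\Sigma_{OO}^{-1}(\bm{s}_O-\bm{\mu}_O)$ and covariance $\Sigma_{MM}-\Sigma_{MO}\Sigma_{OO}^{-1}\Sigma_{OM}$. Applying the resolvent identity $\hat{\Sigma}_{OO}^{-1}-\Sigma_{OO}^{-1}=-\hat{\Sigma}_{OO}^{-1}(\hat{\Sigma}_{OO}-\Sigma_{OO})\Sigma_{OO}^{-1}$ and using the condition-number assumption to control $\|\Sigma_{OO}^{-1}\|_{op}\le\sigma$ (an elementary consequence of the interlacing property of principal submatrices), the perturbation of the Schur complement and the conditional mean are each $\tilde{O}(\sigma^2\sqrt{d/N})$ uniformly in $\bm{s}$, on the high-probability event. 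This gives a uniform bound on the Wasserstein (or TV) distance between $\hat{P}^{\bm{s}}$ and $P^{\bm{s}}$ of the same order, which, combined with the boundedness of $f$, converts into $\Delta\mathbb{E}[r(\bm{s},y)]=\tilde{O}(\sigma^2\sqrt{d/N})$ for every $(\bm{s},y)$.

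With uniform posterior error in hand, the DP error propagation argument from the proof of Theorem \ref{thm:etcD} carries over verbatim: at each state $\bm{s}$ with missing entries the Bellman decomposition of $\hat{Q}(\bm{s},i)-Q(\bm{s},i)$ splits into (i) an expected error in $\hat{v}(\bm{s}\oplus(i,x_i))$ under the true posterior, (ii) an error from $\hat{P}^{\bm{s}}$ replacing $P^{\bm{s}}$, and (iii) a higher-order cross term. Because the error one level deeper is again bounded by the same uniform quantity (the $|\mathcal{P}|$ factor in the discrete case is replaced simply by the uniform Gaussian posterior bound), an induction on the number of missing entries gives $\Delta v(\bm{s})=\tilde{O}(d\sigma^2\sqrt{d/N})$ rather than an exponential blow-up. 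Each exploitation episode therefore suffers at most this much regret, yielding total regret $N+T\cdot\tilde{O}(d\sigma^2\sqrt{d/N})$; the choice $N=\lfloor\sigma^2 T^{2/3}\rfloor$ balances the two terms and produces the advertised $\tilde{O}(d\sigma^2 T^{2/3})$ bound (absorbing $\sqrt{d}$ into the $\tilde{O}$ if one wishes, or keeping it explicit).

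The main obstacle I expect is the second step: getting a uniform-in-$\bm{s}$ bound on posterior perturbation without paying extra factors that are exponential in $d$ or that depend on the number of distinct conditioning patterns (which is uncountable). The key leverages are that every principal submatrix of $\Sigma$ inherits the condition number bound $\sigma$, so the perturbation constants in the resolvent identity do not depend on $|O(\bm{s})|$, and that the error in the conditional mean is linear in $\bm{s}_O-\hat{\bm{\mu}}_O$, which is itself sub-Gaussian and can be controlled on the same high-probability event used for $\hat{\Sigma}$. Care is also required in the induction on missing-entry count to ensure that the propagated constant remains $\mathrm{poly}(d,\sigma)$ rather than $(\sigma\sqrt{d})^d$; this is precisely where the argument deviates from the naive bound in standard episodic MDP proofs and follows the same ``combine posterior error with next-step value error at the same scale'' idea used in the discrete proof.
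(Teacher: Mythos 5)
Your plan follows essentially the same route as the paper's proof: concentration for $\hat{\bm{\mu}},\hat{\Sigma}$ from the fully observed exploration samples, a Schur-complement/resolvent perturbation bound on the Gaussian posteriors $P^{\bm{s}}$ using the condition-number assumption, the same induction over the number of missing entries to propagate $\Delta v(\bm{s})$ through the Bellman recursion without exponential blow-up, and the same choice $N=\lfloor\sigma^2 T^{2/3}\rfloor$. One correction, though: the claimed \emph{uniform-in-$\bm{s}$} TV/Wasserstein bound on $\|P^{\bm{s}}-\hat{P}^{\bm{s}}\|_1$ is false as stated, because the conditional-mean perturbation $(\hat{\Sigma}_{MO}\hat{\Sigma}_{OO}^{-1}-\Sigma_{MO}\Sigma_{OO}^{-1})(\bm{s}_O-\bm{\mu}_O)$ grows with $\|\bm{s}_O-\bm{\mu}_O\|$, which is unbounded over states; the paper's corresponding lemma keeps this $\|\bm{x}-\bm{\mu}\|$-dependent term explicit and only removes it at the regret stage by integrating against the law of the episode's $\bm{x}$ (contributing an extra $\sqrt{d}\,\lambda_{\max}$ factor), which is precisely the fix your final paragraph gestures at and should be made the actual argument rather than a caveat. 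Finally, note that a residual $\sqrt{d}$ cannot be absorbed into $\tilde{O}(\cdot)$ (it suppresses only polylogarithmic factors), so your bookkeeping would end at a polynomially comparable but not identical bound such as $\tilde{O}(d^{3/2}\sigma T^{2/3})$ unless the exponents in the posterior-perturbation lemma are tracked as in the paper.
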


The main steps in the proof of Theorem \ref{thm:etcG} are similar to the proof of Theorem \ref{thm:etcD}. The complete proof is deferred to Section \ref{apx:etcG} in the Appendix. The key difference lies in how one can establish error bounds on the $P^{\bm{s}}$. Since we no longer build empirical distributions for $P^{\bm{s}}$, its estimation error is dictated by the error in the estimated mean and covariance matrices. Gaussian distribution is a special case where posterior distributions $P^{\bm{s}}$ have closed forms, but the $L_1$ deviations of the posterior distributions $P^{\bm{s}}$ depend not only on the estimation error of the estimated parameters, but also on the norm of $\bm{s}$, which can be arbitrary large in the worst case. Nonetheless, because we consider only the expected regret, the probability of such cases is small in the sense that the effect of $\bm{s}$ can be controlled by the conditioning number of the covariance matrix. 

Arguably, one limitation of ETC shown in Algorithm \ref{alg:etcD} and \ref{alg:etcG} is that it takes the time horizon $T$ as input. In some practical settings, the number of episodes $T$ might be unclear. One straightforward adaptation to make ETC an any-time algorithm is to use the doubling trick \citep{CesaBianchiLugosi2006}, which divides the time horizon into exponentially growing batches of $2^0,2^1,\dots$ and applies the standard ETC algorithm for each batch. The resulting algorithm will enjoy the same theoretical guarantee without taking $T$ as an input.

\subsection{Numerical Experiments}

We verify our theoretical results with a numerical simulation on a randomly generated instance. The instance has $d=10$ tests where each test gives a binary outcome. The probability of each certain realization is randomly sampled from a Pareto distribution with a shape parameter of $\log_4(5)$ and normalized to keep their summation equal to one. The reward is also set as binary in this simulation, where $f(\bm{x},\bm{y}) = \mathds{1}(\bm{x}=\bm{y})$. In other words, the algorithm is given a reward of $1$ if it can correctly impute $\bm{x}$ at the end of tests, and it gets a $0$ reward otherwise. The horizon is set to be $T=2^{20}$. Figure \ref{fig:general_simulation} reports the mean (in solid lines) and the standard deviations (in shaded areas) of the cumulative regret for our proposed algorithms across five replications. 

The numerical simulation result verifies our theory. The blue line represents the performance of Algorithm \ref{alg:etcD} when $T$ is given, whereas the orange line is using Algorithm \ref{alg:etcD} with the doubling trick, which does not require a predefined $T$. Both algorithms show a clear sign of sublinear growth in their cumulative regret. 

The ETC algorithm, when given episodes $T$, has better empirical performance. Although the ETC algorithm has exactly the same number of explorations in this numerical simulation, ETC with known episodes spends all its commit steps after explorations, which gives a smaller error in the estimation of the distribution $\mathcal{P}$ and thus a smaller regret. 

\begin{figure}
    \centering
    \includegraphics[width=0.7\linewidth]{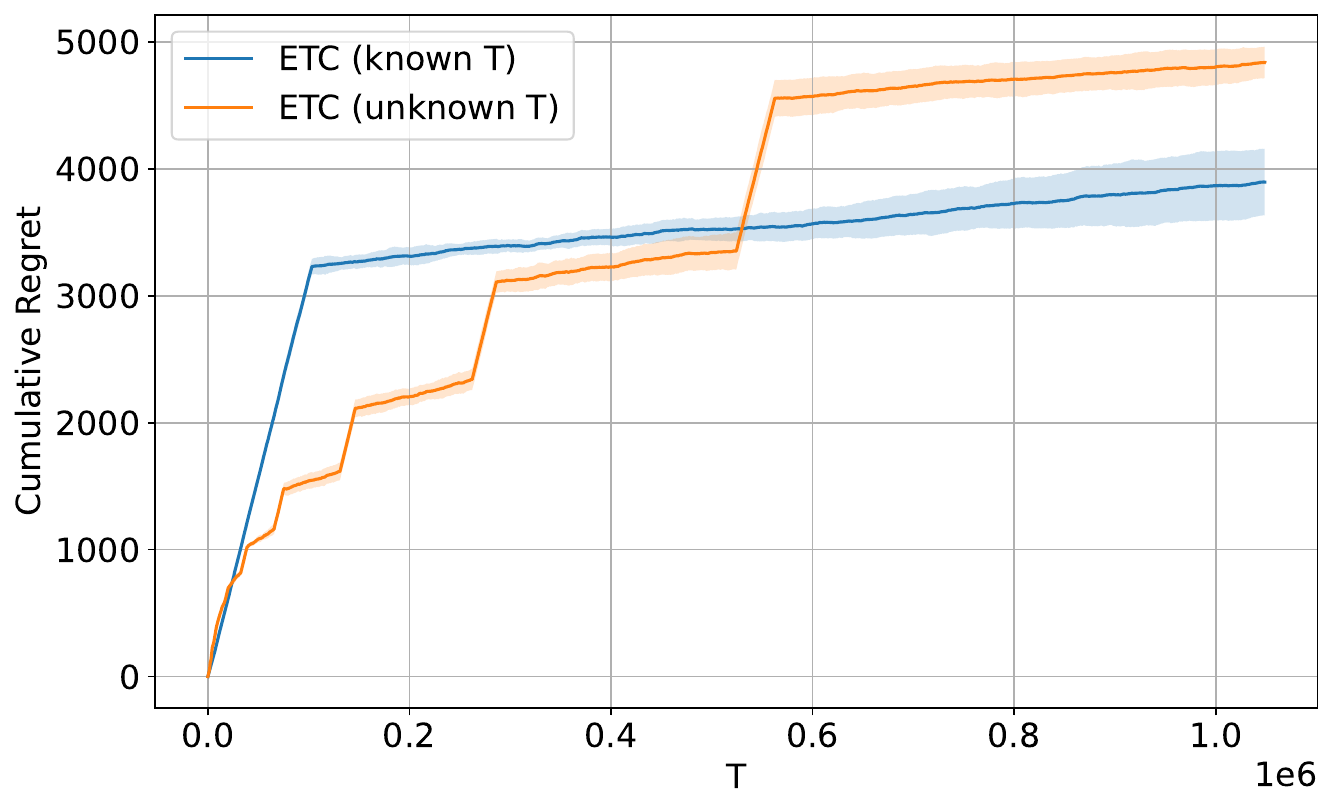}
    \caption{Cumulative regret of Algorithm \ref{alg:etcD} in a discrete setting}
    \label{fig:general_simulation}
\end{figure}

\section{Online Cost-sensitive Maximum Entropy Sampling Problem}
\label{sec:onlineMESP}
It is natural to ask whether the difficulty observed in the general OTP is inherent across all problem instances. Can the elevated regret be mitigated under structural assumptions or simplified settings? To explore these questions, we examine a specialized yet practically meaningful variant: the Online Cost-sensitive Maximum Entropy Sampling Problem (OCMESP). OCMESP retains many key features of OTP, including sequential test selection, cost-sensitive information acquisition, and partial observability, but introduces a crucial simplification: the reward function $f$ depends only on observed outcomes. In doing so, OCMESP serves as a theoretical ablation study of the general OTP, isolating the impact of missing rewards, enabling the design of algorithms with improved regret bounds (specifically, $\tilde{O}(d^3\sigma\sqrt{T})$), and offering a useful lens through which to understand which components of OTP contribute most to its hardness.

\subsection{Problem Formulation}

OCMESP is a generalized online version of the classical maximum entropy sampling problem \citep[MESP,][]{fampa2022maximum}. The classical maximum entropy sampling problem assumes the data $\bm{x}$ are generated by a \textit{given} multivariate normal distribution $\mathcal{N}(0,\Sigma)$ (the means are set to zeros without loss of generality). The goal of MESP is to find the subset $S\subseteq [d]$ of size $n_s$ such that the subset has the largest information entropy. Mathematically, this can be written as 
\begin{equation}
\label{eq:MESP}
    \max_{S\subseteq [d]} ~ \mathcal{H}(S), \quad \text{s.t.} ~ |S|=n_s,
\end{equation}
where the $\mathcal{H}(S)$ is the information entropy \citep{cover-thomas2006}. When $\mathcal{P}$ is Gaussian, it has a closed form solution of $\mathcal{H}(S)=\frac{|S|}{2}\log(2\pi e) + \frac{1}{2}\log\det (\Sigma[S,S])$, where $\Sigma[S,S]$ denotes the covariance matrix of the selected tests and can be generated by selecting the corresponding rows and columns from $\Sigma$. 

In this paper, rather than putting a hard constraint on the number of tests one can do (as shown in Equation \ref{eq:MESP}), we consider a more general version that weighs information gain and test costs as follows:
\begin{equation}
\label{eq:cMESP}
    \max_{S\subseteq [d]} ~ \lambda \cdot \mathcal{H}(S) - \sum_{i \in S} c_i,
\end{equation}
where $\lambda$ is the reward per unit of information. Same as the general OTP, we still aim to choose a sequence of tests $S$, and we can only observe the outcomes of the chosen tests (denoted as $\bm{x}[S]$). 

Similar to OTP, this problem can also be formulated as an MDP as shown in Figure \ref{fig:mdp_ocmesp}. The vertical and horizontal sequentiality holds the same in OCMESP, while the \textit{only} difference is that one does not have to make a decision $y$ at termination. The rewards of the termination step are now defined by $\lambda \cdot \mathcal{H}(S)$. To be aligned with the rewards in OTP, we can rewrite entropy $\mathcal{H}(S)$ as $\lambda \mathbb{E}[-\log p(\bm{x}[S])]$, so this can be treated as a special case of OTP when $f(\bm{x}, S) = \lambda \mathbb{E}[-\log p
(\bm{x}[S])]$. Note that the reward can still not be directly observed because the distribution $\mathcal{P}$ is unknown (and so is its entropy). Thus, it should be estimated using the observed $\bm{x}[S]$.

\begin{figure}
    \centering
    \includegraphics[width=0.8\linewidth]{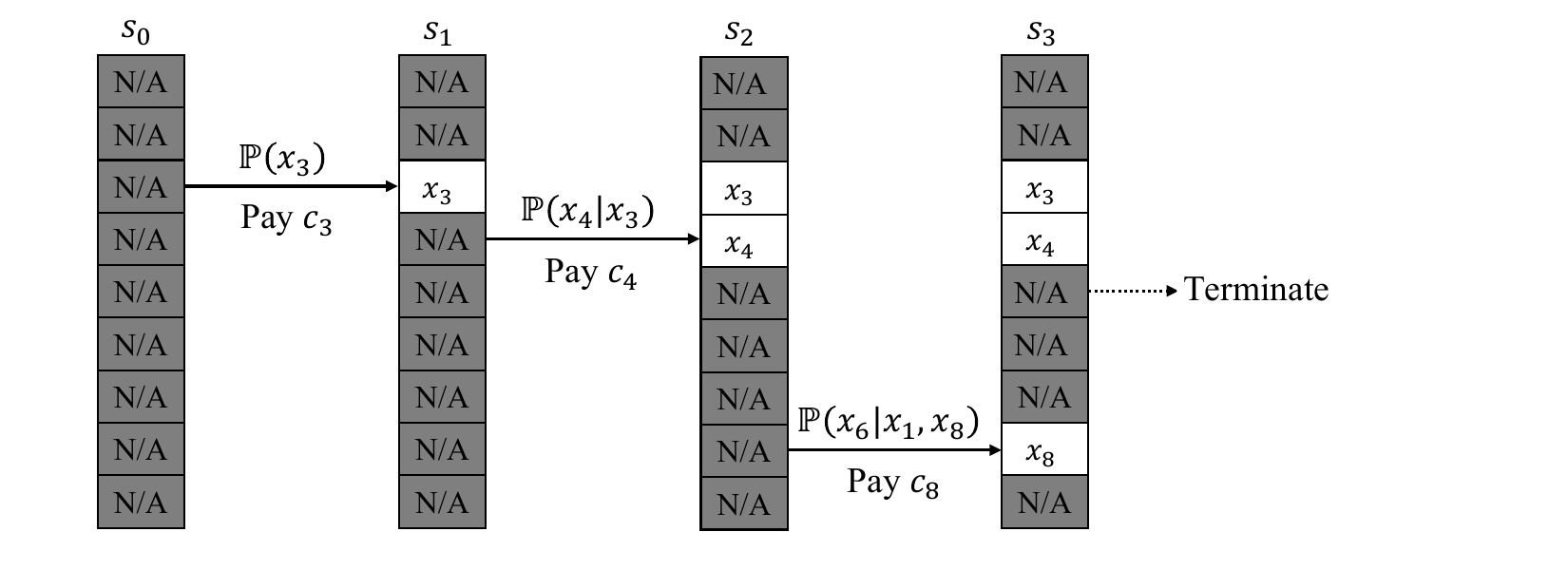}
    \caption{MDP Formulation of OCMESP}
    \label{fig:mdp_ocmesp}
\end{figure}

\subsection{Commonalities and Differences with General Online Testing Problems}
\label{sec:connection}

As mentioned, OCMESP can be treated as a special case of the general OTP if we specify the reward function. OCMESP thus shares some important properties with OTP. First, both problems require balancing information acquisition and testing costs.  Second, both problems select only a subset of tests to evaluate, which leaves missing entries in the dataset. Consequently, the dataset collected in the OCMESP problem will still be like Table  \ref{tab:dataset}. Third, both problems have missing rewards for some actions. For OTP, rewards are missing because they depend on the missing entries, while all the rewards are missing in OCMESP since the distribution $\mathcal{P}$ is unknown, and entropy is not directly observable.

Despite the commonalities, the fundamental difference between the two problems lies in whether or not the missing rewards can be accurately estimated by repeated plays. In OCMESP, the reward function depends only on the test outcomes that are actually observed (i.e., $\bm{x}[S]$) but not the missing ones, i.e., $f(\bm{x},S)=f(\bm{x}[S],S)$. As a result, even though one may still not observe the reward $f(\bm{x},S)$ directly, this quantity can be estimated by repeatedly testing $S$ on multiple subjects since one can estimate $\Sigma[S,S]$ with samples of $\bm{x}[S]$. In other words, repeating the same testing strategy allows one to estimate its average reward over time.

In contrast, OTP is fundamentally harder. The reward depends on the full vector $\bm{x}$, including any tests one chooses to skip. If one makes a decision $y$ without performing all tests, the reward $f(\bm{x},y)$ is missing because $\bm{s}$ is not fully observed. Repeating the same set of tests $S$ doesn't help get a more accurate estimation of $r(\bm{s},y)$ because there is no information to determine whether $y$ was correct or not. In fact, a correct estimation of $r(\bm{s},y)$ does not only rely on an accurate estimation of the distribution of $\bm{x}[S]$, but also on the distribution of the entire $\mathcal{P}$. This inability to estimate rewards through repeated play of a strategy is precisely why the minimax regret in OTP scales faster with time (e.g., $\Omega(T^{\frac{2}{3}})$), compared to more favorable cases like OCMESP.

\subsection{Iterative Elimination Algorithm}
\label{sec:itr}

The structure of OCMESP opens the door to more efficient learning algorithms. Since rewards can be consistently estimated from repeated actions, we can sidestep the core challenge of missingness that complicates the general OTP. Building on this observation, we now propose an iterative elimination algorithm that incrementally refines the set of candidate test subsets using plug-in estimators of entropy and their confidence bounds, ultimately achieving a regret of $\tilde{O}(d^3\sigma\sqrt{T})$.

Since the reward function depends only on the observed entries, the reward of a particular $S$ can be well-approximated by a plug-in estimator of its differential entropy. As such, one can roughly treat this problem as an MAB where every chosen set of tests $S$ can be treated as an arm, although the feedback of the arm is not directly observable. One problem with treating it as a naive MAB problem is that the number of arms in this problem is exponentially larger in $d$, introducing an exponential dependence on $d$ in the cumulative regret. Fortunately, the rewards of most arms are correlated with each other since they are dictated by only $d^2$ parameters. We note the following three properties that will help us design an efficient algorithm.

The first important property is that there exists a non-adaptive policy that is as good as the best adaptive policy. Although one can adaptively choose the next test  (or choose to terminate) based on the observations of the previous test outcomes, the optimal policy requires only choosing the optimal subset $S$, which involves no adaptivity. As such, if we use a non-adaptive policy, the missing data pattern will be Missing-Completely-At-Random \citep[MCAR,][]{little2019statistical} rather than MAR because the missingness no longer depends on the realization $\bm{x}$. Therefore, we can get unbiased estimations of the distribution $\mathcal{P}$ even with missingness in the dataset. 

The second important property is that choosing a set $S$ effectively gives information on all of its subsets. We use the set $\mathcal{C}$ to denote the feasible choice of $S$. For any set $\mathcal{C}$, we call a $S\in \mathcal{C}$ a \textit{parent} of set $\mathcal{C}$ if there exist no $S' \in \mathcal{C}$, such that $S\subset S'$. One can see that for any set $\mathcal{C}$, the number of \textit{parents} is no larger than  $\max_{n_s}|\{S\in \mathcal{C}:|S|=n_{s}\}|$, which is the maximum number of $S$ that should be considered if there were a cardinality constraint. If we use an iterative elimination algorithm (see Algorithm \ref{alg:itr}), it suffices to choose one of its parents to maximize exploration. Therefore, the problem in Equation \ref{eq:cMESP} is no harder than the Problem in Equation \ref{eq:MESP} (in terms of learnability), although it seems to have more feasible solutions.

The third important property is that the $2^d$ possible choices of $S$ are, in fact, parameterized by only $d^2$ parameters. If we design balanced experiments that allocate $T/d^2$ samples to learn each covariance pair (and there are $d^2$ of them), one should expect the estimation error of the covariance plug-in estimator to decay at a rate of $O\left(\sqrt{\frac{d^2}{T}}\right)$.

\begin{algorithm}[htb]
\caption{Iterative Elimination for Online Maximum Entropy Sampling}
\label{alg:itr}
\begin{algorithmic}[1]
\State \textbf{Input:} $\sigma, d, \delta$
\State Initialize $\mathcal{C}$ to be power set of $[d]$.
\State Set $t = 1$

\While{$|\mathcal{C}| > 1$}
    
    \State Update $\mathcal{Q}$ using Equation \ref{eq:remain_pairs}
    \For{$(i,j)\in \mathcal{Q}$}
    \State Update $\mathcal{T}_{ij} = \{t: x^{(t)}_i \ne \texttt{NA} \text{ and } x^{(t)}_j \ne \texttt{NA}\}$ 
    \State Update $\hat{\Sigma}_{ij}^{(t)} = \hat{\Sigma}_{ji}^{(t)} = \frac{1}{\left|\mathcal{T}_{ij}\right|}\sum_{t\in \mathcal{T}_{ij}} x^{(t)}_ix^{(t)}_j$
    \EndFor
    
    \State Update pair $(i^{(t)},j^{(t)})= \argmin_{(i,j)\in \mathcal{Q}} |\mathcal{T}_{ij}|$
    \State Test $S^{(t)} =\argmax_{S: \{i,j\} \subseteq S} |S|$ and observe $\bm{x}^{(t)}[S]$
    \State Compute the confidence interval width \(U^{(t)}\) using Equation \ref{eq:ci}
    \If{$U^{(t)}\leq 1$}
    \State Compute $\hat{\mathcal{H}}^{(t)}(S)$ using Equation \ref{eq:estobj}
    \State Eliminate $S$ for which $\hat{\mathcal{H}}^{(t)}(S) + \lambda \cdot U^{(t)} \leq \max_S \hat{\mathcal{H}}^{(t)}(S)$
    \EndIf
    
    \State Set $t = t + 1$
\EndWhile
\end{algorithmic}
\end{algorithm}

With these properties, we introduce our algorithm for the online learning of problem Equation \ref{eq:cMESP} in Algorithm \ref{alg:itr}. To run the algorithm, we need to input three parameters: (1) $\sigma$: the conditioning number or an upper bound of the condition number of the covariance matrix $\Sigma$, (2) $d$: the dimension of the distribution $\mathcal{P}$, and (3) $\delta$: the high probability constant in $(0,1)$. 

The algorithm starts by initializing an active set $\mathcal{C}$, which contains all the candidate subsets. The goal of iterative elimination is to delete elements from $\mathcal{C}$ such that any remaining subset $S\in \mathcal{C}$ has small regret (to be specified later). At every iteration of the algorithm, we use $\mathcal{Q}$ to keep track of the remaining pairs that appear in at least one of the subsets in $\mathcal{C}$. Mathematically, this means 
\begin{equation}
    \label{eq:remain_pairs}
    \mathcal{Q} = \{(i,j):\exists S\in \mathcal{C} \text{ such that }  \{i,j\} \subseteq S \}.
\end{equation}
It is obvious that only the covariance of the pairs that remain in $\mathcal{Q}$ has to be learned. The covariance for every remaining pair in $\mathcal{Q}$ is estimated using the empirical estimator. We use $\mathcal{T}_{ij} = \{t: x^{(t)}_i \ne \texttt{NA} \text{ and } x^{(t)}_j \ne \texttt{NA}\}$ to denote the observation indices whose observation covers pair $(i,j)$. Then, for a pair $(i,j)$, the estimated covariance is given by $\hat{\Sigma}_{ij}^{(t)} = \hat{\Sigma}_{ji}^{(t)} = \frac{1}{\left|\mathcal{T}_{ij}\right|}\sum_{t\in \mathcal{T}_{ij}} x^{(t)}_ix^{(t)}_j$. One potential concern regarding the estimated covariance matrix is that it might not be positive definite because each entry in the covariance matrix is estimated independently. However, we can show that this situation happens only when observations are limited. In particular, if the confidence interval width is less than $1$, then the resulting covariance matrix must be positive definite.

The algorithm chooses the maximum subset that covers the least-covered pair in $\mathcal{Q}$ (ties are broken arbitrarily). This choice is designed to distribute the observations to all the pairs in a balanced manner. As a result, at time $t$, each pair should be at least covered by $\lfloor\frac{t}{d(d-1)}\rfloor$ observations. However, in practice, one can expect to have more observations than this worst-case, given that the maximum subset usually covers more than one pair. 

After testing the chosen subset, the algorithm then eliminates candidate subsets in $\mathcal{C}$ that are suboptimal with high confidence. To do this, we construct $U^{(t)}$ to uniformly bound the estimation error in $|\hat{\mathcal{H}}^{(t)}(S)-\mathcal{H}^{(t)}(S)|$ due to finite samples. It guarantees that, with high probability of at least $1-\delta$ (of the user's choice), the estimated $\hat{\mathcal{H}}^{(t)}(S)$ for any candidate subset $S$ is within $\lambda U^{(t)}$ of its true value. This justifies our elimination rule. In our algorithm, confidence interval width $U^{(t)}$ is chosen to be
\begin{equation}
\label{eq:ci}
    U^{(t)} = 8\sigma\cdot \max\left\{d^3\sqrt{\frac{\ln(\pi^2 d^2 t^2/\delta)}{ct}}, \frac{d^4(\ln(\pi^2 d^2 t^2/\delta))}{ct}\right\},
\end{equation}
where $c$ is the universal constant in Bernstein’s inequality \citep{vershynin2018high}.
A rigorous derivation of this confidence width requires arguments on matrix algebra and concentration inequalities, which are deferred to Section \ref{apx:itr} in the Appendix.  We can show that if $U^{(t)}<1$, then with a probability of at least $1-\delta$,  the estimation error $|\hat{\mathcal{H}}^{(t)}(S)-\mathcal{H}^{(t)}(S)|$ is at most $\lambda\cdot U^{(t)}$. 

If the width is no less than $1$, we skip the elimination phase and step into the next iteration. Otherwise, we are safeguarded by the confidence interval so that any eliminated subset $S$ is suboptimal with high probability. 

To eliminate a suboptimal subset, we first estimate the rewards for every subset by plugging in the estimated $\hat{\Sigma}$:
\begin{equation}
\label{eq:estobj}
    \hat{\mathcal{H}}^{(t)}(S) = \lambda \left( \frac{|S|}{2}\log(2\pi e) + \frac{1}{2}\log\det \left(\hat{\Sigma}^{(t)}[S,S]\right) \right)- \sum_{i\in S} c_i.
\end{equation}
Then, we eliminate any subset whose estimated reward is $2U^{(t)}$ less than the maximum estimated reward over all remaining subsets (i.e., $\hat{\mathcal{H}}^{(t)}(S) + 2U^{(t)} \leq \max_{S^*} \hat{\mathcal{H}}^{(t)}(S^*)$). 

It is important to note that the use of Iterative Elimination in OCMESP is made possible precisely because the reward function depends only on observed entries. This structural property allows us to repeatedly evaluate the same test subsets and reliably estimate their rewards. In Algorithm \ref{alg:itr}, we can reduce the uncertainty of the rewards in the candidate set $\mathcal{C}$ by testing only the subsets $S$ that are in the set $C$. In contrast, such an algorithm cannot be used in the general OTP setting. We can see this in the lower bound example constructed in Theorem \ref{thm:lb_single}. Since the rewards of Actions $\#1$ and $\#2$ cannot be accurately estimated without playing Action $\#3$, even if one can determine Action $\#3$ is strictly suboptimal and should eliminated, it still has to be played in order to distinguish between Actions $\#1$ and $\#2$, rendering Successive Elimination inapplicable.

\subsection{Theory and Proof Sketch}

Having introduced the Iterative Elimination algorithm, we now turn to its theoretical analysis. The key question is whether this intuitive approach can be rigorously shown to achieve sublinear regret. In Theorem \ref{thm:itr}, we establish that Algorithm \ref{alg:itr} indeed achieves a regret bound of $\tilde{O}(d^3\sigma\sqrt{T})$, confirming that OCMESP is significantly more tractable than the general OTP and highlighting the power of its structural simplification.

\begin{theorem}
\label{thm:itr}
    With a probability of at least $1-\delta$, for sufficiently large $T$, the cumulative regret of Algorithm \ref{alg:itr} is at most
    \(\Tilde{O}(d^3\sigma\sqrt{T})\).
\end{theorem}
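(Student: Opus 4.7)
The plan is to establish the bound through a standard successive-elimination argument, with the main technical content in justifying the uniform confidence-interval width $U^{(t)}$ defined in Equation \ref{eq:ci}. First, I would prove an entry-wise concentration result for the covariance estimator. Since $x_i x_j$ is sub-exponential when $(x_i,x_j)$ is jointly Gaussian with variances controlled by $\sigma$, Bernstein's inequality gives
\[\left|\hat{\Sigma}_{ij}^{(t)}-\Sigma_{ij}\right| \;\lesssim\; \sigma\sqrt{\frac{\log(1/\delta')}{n_{ij}^{(t)}}} + \sigma\,\frac{\log(1/\delta')}{n_{ij}^{(t)}},\]
where $n_{ij}^{(t)}=|\mathcal{T}_{ij}|$. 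A union bound over the at most $d^2$ pairs and over all $t$, with $\delta'=\delta/(\pi^2 d^2 t^2/6)$, makes this hold uniformly with probability at least $1-\delta$.

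Next, I would exploit the ``least-covered pair'' selection rule in Algorithm \ref{alg:itr}: after $t$ rounds each pair has been jointly observed at least $\lfloor t/d(d-1)\rfloor$ times, so $n_{ij}^{(t)}\ge t/d^2$. Substituting this back gives $\|\hat{\Sigma}^{(t)}-\Sigma\|_{\max}\lesssim \sigma\sqrt{d^2\log(t/\delta)/t}$. Translating an entry-wise error into a spectral-norm error for any principal submatrix $\Sigma[S,S]$ picks up an additional factor of $d$. Then a mean-value-theorem application to $\log\det$, whose gradient is $\Sigma[S,S]^{-1}$, bounds the log-det perturbation by $\|\Sigma[S,S]^{-1}\|\cdot \|\hat{\Sigma}-\Sigma\|_F$, contributing another $\sigma d$ factor. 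Multiplied together and inflated by $\lambda$ this reproduces exactly the prefactor $\sigma d^3$ in Equation \ref{eq:ci}, and the requirement $U^{(t)}\le 1$ ensures that $\hat{\Sigma}[S,S]$ stays strictly positive definite along the perturbation path so the bound is well-defined.

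With the confidence interval in hand, the elimination argument is standard. The true optimizer $S^*$ is never eliminated, since $\hat{\mathcal{H}}^{(t)}(S^*)\ge \mathcal{H}(S^*)-\lambda U^{(t)}\ge \max_S\hat{\mathcal{H}}^{(t)}(S)-\lambda U^{(t)}$. For any surviving $S\in\mathcal{C}$ and the subset $S^{(t)}$ actually tested at round $t$, chaining the confidence interval twice yields $\mathcal{H}(S^*)-\mathcal{H}(S^{(t)})\le 3\lambda U^{(t)}$, so the per-step regret is $O(\lambda U^{(t)})$. Summing gives the total regret: rounds with $U^{(t)}>1$ are confined to $t \lesssim \sigma^2 d^6\log(T/\delta)$, contributing only a $\tilde{O}(\sigma^2 d^6)$ burn-in term; for larger $t$ the square-root term in Equation \ref{eq:ci} dominates and
\[\sum_{t=1}^{T} U^{(t)} \;\lesssim\; \sigma d^3 \sum_{t=1}^{T}\sqrt{\frac{\log(t/\delta)}{t}} \;=\; \tilde{O}\!\left(\sigma d^3\sqrt{T}\right),\]
which is the desired bound.

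The main obstacle I anticipate is the third step: carefully propagating the entry-wise Bernstein bound on $\hat{\Sigma}$ through the log-determinant of an arbitrary principal submatrix while keeping the dependence on the condition number $\sigma$ sharp and avoiding blow-up when $\Sigma[S,S]$ is near-singular. This perturbation analysis is the reason the $U^{(t)}\le 1$ guard is needed in the algorithm, and it is what pins down the $d^3\sigma$ leading constant; all other steps, including the union bound, the $n_{ij}\ge t/d^2$ balancing guarantee, and the elimination logic, are comparatively mechanical.
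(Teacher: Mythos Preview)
Your proposal is correct and follows essentially the same approach as the paper: entrywise Bernstein concentration with a union bound over pairs and time, the $n_{ij}\ge t/d^2$ balancing from the least-covered-pair rule, a log-determinant perturbation bound producing the $d^3\sigma$ prefactor and justifying the $U^{(t)}\le 1$ guard, and the standard successive-elimination summation with an $O(d^6\sigma^2)$ burn-in. The only cosmetic difference is that the paper obtains the log-det perturbation via Ipsen's relative determinant bound (Corollary~2.14 of \cite{ipsen2008perturbation}) rather than the mean-value theorem you propose, but both routes give $|\log\det(A+E)-\log\det A|\le O(d\|A^{-1}\|\|E\|)$ under the same positive-definiteness condition.
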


The proof of our main theorem is sketched as follows. First, by Bernstein's inequality, we know that the empirical estimator of covariance converges to the true covariance at a rate of $\Tilde{O}(\frac{1}{\sqrt{n}})$, where $n$ is the number of samples. With some matrix algebra, we then show that the error of the plug-in estimator for the entropy is locally linear with respect to the estimation error of the covariances. Hence, the limiting behavior of reward estimation error will share the rate of $\Tilde{O}(\frac{1}{\sqrt{n}})$. Now, because we balance the observations to evenly cover every pair of tests, every pair should have at least $\lfloor\frac{T}{d^2}\rfloor$ observations up to episode $T$. Note that this is only for the worst-case analysis, and this number should be much larger in practical settings. That said, for any subset that is not eliminated, we can easily see that the regret, if there is any, should be less than the estimation error of the entropy. Finally, the cumulative regret bound can be achieved by simply summing up the simple regrets, which ends the proof of Theorem \ref{thm:itr}. 

\subsection{Numerical Experiments}

Our theoretical result is verified by a numerical simulation on a randomly generated instance. The instance has $d=15$ tests where the joint distribution of the tests is $\mathcal{N}(0,\Sigma)$. The covariance $\Sigma$ is randomly generated by $\Sigma = LL^\intercal + I$, where the elements in $L$ are uniformly generated from $0$ to $1$. We let $\lambda=1$ and showed the performance of the algorithm up to time  $T=2^{15}$. Figure \ref{fig:cMESP} reports the mean (in solid line) and the standard deviations (in shaded area) of the cumulative regret across five replications for running Algorithm \ref{alg:itr}. We can clearly observe a sublinear convergence of cumulative regrets, as predicted by Theorem \ref{thm:itr}.

\begin{figure}[htb]
    \centering
    \includegraphics[width=0.7\linewidth]{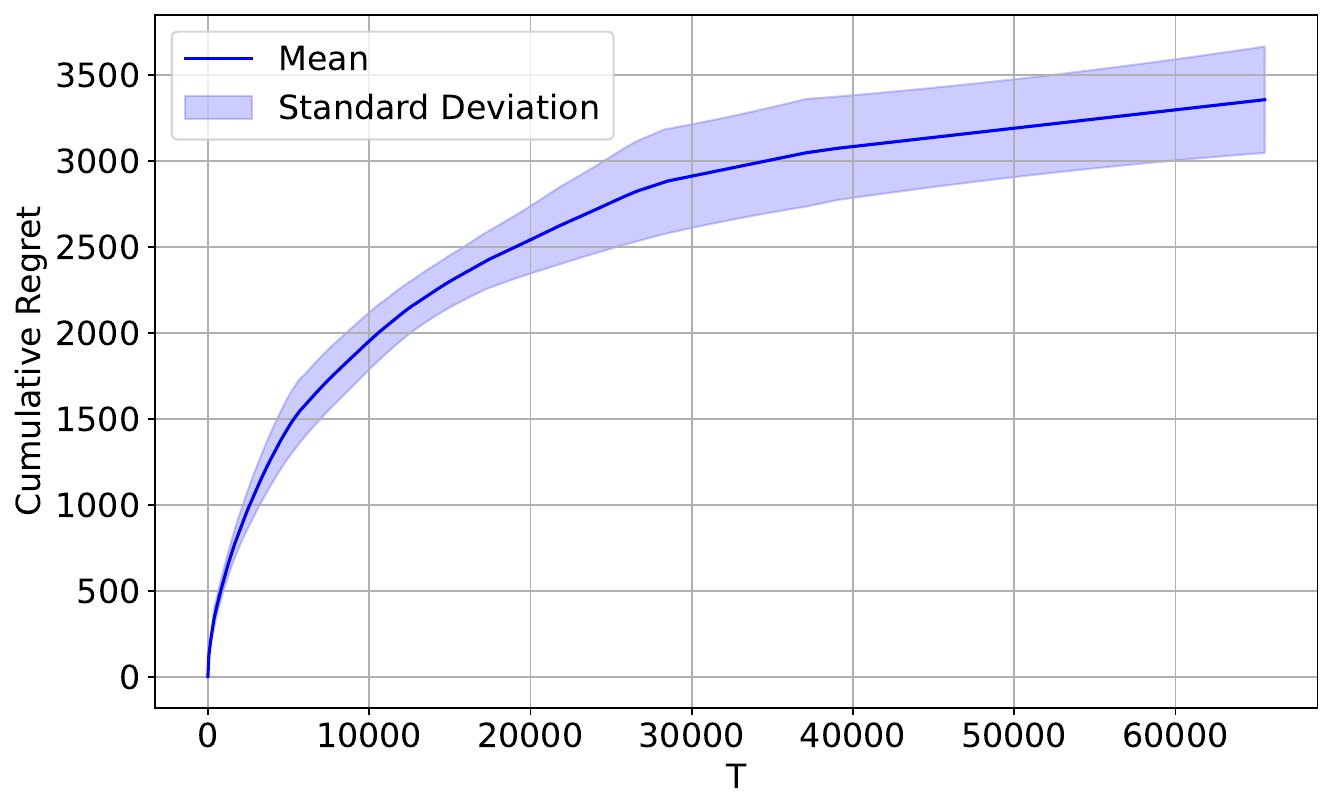}
    \caption{Cumulative Regret of the Iterative Elimination Algorithm}
    \label{fig:cMESP}
\end{figure}

\section{Conclusion}

This work advances the theoretical understanding of the Online Testing Problem.  We establish that the OTP is fundamentally harder than standard episodic MDPs: any algorithm must incur minimax cumulative regret at least on the order of $\Omega(T^\frac{2}{3})$ due to the fact that the rewards depend on the missingness.

To meet this tougher lower bound, we introduced an Explore-Then-Commit strategy.  By fully testing subjects and leaving no missingness, we learn the data-generating distribution $\mathcal{P}$ and then commit to the dynamic-programming policy derived from that estimate. ETC achieves regret $\tilde{\mathcal{O}}(d |\mathcal{P}|^{\frac{2}{3}} T^{\frac{2}{3}})$ for discrete distributions and $\tilde{\mathcal{O}}(d \sigma^2 T^{\frac{2}{3}})$ for Gaussian outcomes, matching the lower bound up to logarithmic factors.  Simulations confirm the predicted sub-linear growth in regret.

To highlight that this elevated regret bound is a consequence of rewards depending on the missingness, we study a variant of OTP, namely the Online Cost-sensitive Maximum Entropy Sampling Problem. In OCMESP, rewards depend only on observed tests, and an iterative-elimination algorithm that leverages plug-in entropy estimates can attain a significantly faster $\tilde{\mathcal{O}}(d^3\sigma\sqrt {T})$ regret bound.

\newpage
\begin{APPENDICES}
\section{Proofs of Theorems}

This section provides the proofs for the Theorems and their supporting Lemmas. We note that notations introduced in the main body are not reintroduced here but are held as is. Unless otherwise stated, we use $\|A\|$ to denote the spectral norm of a matrix $A$.

\subsection{Proof of Theorem 1}
\label{apx:single_lb}
\begin{lemma}[Lemma \ref{thm:fail_lb} in the main body]
Suppose $N\leq \frac{1}{8\epsilon^2}$. For any $g$, there exists at least one problem instance $\mathcal{I}_i$ such that
\[P_i(g(\psi)=i)< \frac{3}{4}.\]
\end{lemma}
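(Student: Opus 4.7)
The plan is to run a standard two-point / Le Cam argument reducing the claim to bounding the total variation distance between the two $N$-fold product laws. Fix a (deterministic) estimator $g:\{0,1\}^N\to\{1,2\}$, and let $P_i$ denote the product Bernoulli$(p_0^{(i)})^{\otimes N}$ with $p_0^{(1)}=(1+\epsilon)/2$ and $p_0^{(2)}=(1-\epsilon)/2$. The claim is equivalent to showing that we cannot have $P_1(g=1)\ge 3/4$ and $P_2(g=2)\ge 3/4$ simultaneously.

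First I would use the standard two-point inequality: for any event $A$, $P_1(A)-P_2(A)\le \mathrm{TV}(P_1,P_2)$. Applied to $A=\{g=1\}$, together with $P_2(g=1)+P_2(g=2)=1$, this yields $P_2(g=2)\le 1-P_1(g=1)+\mathrm{TV}(P_1,P_2)$. So if both probabilities are at least $3/4$, then $\mathrm{TV}(P_1,P_2)\ge 1/2$. Hence it suffices to prove $\mathrm{TV}(P_1,P_2)<1/2$ whenever $N\le 1/(8\epsilon^2)$.

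Next I would apply Pinsker's inequality, $\mathrm{TV}(P_1,P_2)\le \sqrt{\mathrm{KL}(P_1\Vert P_2)/2}$, and tensorize: $\mathrm{KL}(P_1\Vert P_2)=N\cdot\mathrm{KL}(\mathrm{Ber}(p_0^{(1)})\Vert\mathrm{Ber}(p_0^{(2)}))$. A direct computation gives
\[
\mathrm{KL}(\mathrm{Ber}(p_0^{(1)})\Vert\mathrm{Ber}(p_0^{(2)}))=\epsilon\,\log\!\frac{1+\epsilon}{1-\epsilon}.
\]
Using the series expansion $\log\tfrac{1+\epsilon}{1-\epsilon}=2\sum_{k\ge 0}\epsilon^{2k+1}/(2k+1)$, I would bound this single-sample KL by $2\epsilon^2/(1-\epsilon^2)$. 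Combined with $N\le 1/(8\epsilon^2)$, this yields $N\cdot\mathrm{KL}\le 1/(4(1-\epsilon^2))$, and so $\mathrm{TV}\le \sqrt{1/(8(1-\epsilon^2))}$, which is strictly less than $1/2$ as soon as $\epsilon^2<1/2$. Since the theorem will be invoked later with $\epsilon$ arbitrarily small (tuned via the $\Omega(T^{2/3})$ argument), WLOG $\epsilon\le 1/\sqrt{2}$, and the strict inequality $\mathrm{TV}(P_1,P_2)<1/2$ follows, completing the proof for deterministic $g$. For randomized $g$, the loss is a convex combination over deterministic rules, so by averaging there exists a deterministic realization for which the conclusion holds, and hence it holds for the randomized rule as well.

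The main obstacle is cosmetic rather than conceptual: one needs the inequality in the conclusion to be \emph{strict} at $3/4$, so I must keep track of constants carefully in the Pinsker step and ensure the bound on $\mathrm{KL}$ is sharp enough that $N\cdot\mathrm{KL}/2<1/4$ for the prescribed range of $\epsilon$; restricting to the regime $\epsilon\le 1/\sqrt{2}$ (justified since Theorem \ref{thm:lb_single} chooses $\epsilon$ small in terms of $T$) gives the needed slack. No additional machinery is required beyond Pinsker's inequality, tensorization of KL for product measures, and the explicit computation of the Bernoulli KL.
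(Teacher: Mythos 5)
Your proposal is correct and follows essentially the same route as the paper: tensorize the KL between the two product Bernoulli laws, bound the single-sample KL $\epsilon\log\frac{1+\epsilon}{1-\epsilon}$ by $O(\epsilon^2)$, apply Pinsker, and derive a contradiction from $P_1(g=1)\ge 3/4$ and $P_2(g=2)\ge 3/4$ forcing a separation of $1/2$ on the event $\{g=1\}$. Your need to restrict to small $\epsilon$ (e.g.\ $\epsilon^2<1/2$) mirrors the paper's own implicit assumption in bounding $\frac{2\epsilon^2}{1-\epsilon}\le 4\epsilon^2$, and is harmless since the lemma is invoked with $\epsilon=T^{-1/3}$.
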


\begin{proof}{Proof.}
Because we assume the realizations of test outcomes are i.i.d., the KL divergence between distributions $P_1$ and $P_2$ is the summation of individual KL divergence between every pair of Bernoulli random variables (defined as $\text{Ber}(\cdot)$), i.e., $\text{KL}(P_1 \mid P_2) = N \cdot \text{KL}(\text{Ber}(\frac{1+\epsilon}{2}) \mid \text{Ber}(\frac{1-\epsilon}{2}))$. 

For Bernoulli random variables, we have 
\begin{align*}
\text{KL}\left(\text{Ber}(\frac{1+\epsilon}{2})\mid \text{Ber}(\frac{1-\epsilon}{2})\right) &= \frac{1-\epsilon}{2} \cdot \ln \frac{1-\epsilon}{1+\epsilon} + \frac{1+\epsilon}{2} \cdot \ln \frac{1+\epsilon}{1-\epsilon}\\
 &=\epsilon\ln\left(\frac{1+\epsilon}{1-\epsilon}\right)\\
 &=\epsilon\ln\left(1+\frac{2\epsilon}{1-\epsilon}\right)\\
 &\leq \frac{2\epsilon^2}{1-\epsilon}\\
 &\leq 4\epsilon^2
\end{align*}

By Pinker's inequality, for any event $A$, we have 
\begin{align*}
2(P_1(A)-P_2(A))^2 \leq \text{KL}(P_1 \mid P_2) \leq 4N\epsilon^2,
\end{align*} so the distance is upper bounded by $|P_1(A)-P_2(A)| \leq \epsilon\sqrt{2N}< \frac{1}{2}$. 

For contradiction, we assume there exists a policy $\pi$ that can output the correct action with a probability of at least $\frac{3}{4}$. Mathematically, this means $P_1(g(\psi) =1)\geq \frac{3}{4}$ and $P_2(g(\psi) = 2)\geq \frac{3}{4}$. Then it follows that we simultaneously have $P_1(g(\psi) =1)\geq \frac{3}{4}$ and $P_2(g(\psi) = 1) < \frac{1}{4}$, implying $|P_1(g(\psi) =1)-P_2(g(\psi) =1)|>\frac{1}{2}$, which is a contradiction.
\end{proof}

\begin{theorem}[Theorem \ref{thm:lb_single} in the main body]
Fix time horizon $T$. For any algorithm $\pi$, there exists a problem instance such that $\mathbb{E}[R_T]$ is at least $\Omega(T^{\frac{2}{3}})$.
\end{theorem}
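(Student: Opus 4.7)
The plan is to lift the single-decision argument in Lemma \ref{thm:fail_lb} to a sequential one via a two-point Le Cam / Pinsker argument on trajectory distributions. I would fix the same two instances $\mathcal{I}_1, \mathcal{I}_2$ parameterised by $\epsilon>0$, put a uniform prior on them, and bound the average expected regret from below. A matching lower bound on $\max_i \mathbb{E}_{\mathcal{I}_i}[R_T]$ follows since the maximum dominates the average, yielding the desired ``there exists an instance'' statement.

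First I would decompose the regret. Let $\tau_{\text{test}}$ be the (random) number of rounds in which Action $3$ (perform the test) is played, and $\tau_{\text{wrong}}$ be the number of non-test rounds in which the agent outputs the wrong label for the true instance. Reading off the expected costs listed just above the theorem, each play of Action~$3$ contributes a regret of $\tfrac{1}{4}+\tfrac{\epsilon}{2}\ge \tfrac{1}{4}$, while each wrong non-test prediction contributes $\epsilon$. Hence under either instance $\mathbb{E}_{\mathcal{I}_i}[R_T]\ge \tfrac{1}{4}\mathbb{E}_{\mathcal{I}_i}[\tau_{\text{test}}]+\epsilon\,\mathbb{E}_{\mathcal{I}_i}[\tau_{\text{wrong}}]$. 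Averaging over the two instances reduces the problem to lower bounding $\tfrac{1}{4}\bar N+\epsilon\cdot \tfrac{1}{2}(\mathbb{E}_{\mathcal{I}_1}[\tau_{\text{wrong}}]+\mathbb{E}_{\mathcal{I}_2}[\tau_{\text{wrong}}])$, where $\bar N$ is the larger of the two test-counts.

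Next I would argue the information-theoretic side. Since Actions $1$ and $2$ reveal nothing about $x_1$, the only divergence source between the trajectory laws $P^T_{1}$ and $P^T_{2}$ of the first $T$ actions/observations is Action $3$. By the chain rule for KL divergence and the Bernoulli bound $\mathrm{KL}(\mathrm{Ber}(\tfrac{1+\epsilon}{2})\,\|\,\mathrm{Ber}(\tfrac{1-\epsilon}{2}))\le 4\epsilon^{2}$ already established in the proof of Lemma \ref{thm:fail_lb}, one gets $\mathrm{KL}(P^{t-1}_{1}\,\|\,P^{t-1}_{2})\le 4\epsilon^{2}\,\mathbb{E}_{\mathcal{I}_1}[\tau_{\text{test}}^{(t-1)}]\le 4\epsilon^{2}\bar N$. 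Pinsker then gives $|\mathbb{P}_{\mathcal{I}_1}[a_t=1]-\mathbb{P}_{\mathcal{I}_2}[a_t=1]|\le \epsilon\sqrt{2\bar N}$ for every $t$. Combining this with the identity $\mathbb{P}_{\mathcal{I}_1}[a_t=1]+\mathbb{P}_{\mathcal{I}_1}[a_t=2]+\mathbb{P}_{\mathcal{I}_1}[a_t=3]=1$ yields, after a short manipulation, the round-wise inequality $\mathbb{P}_{\mathcal{I}_1}[a_t=2]+\mathbb{P}_{\mathcal{I}_2}[a_t=1]\ge 1-\mathbb{P}_{\mathcal{I}_1}[a_t=3]-\epsilon\sqrt{2\bar N}$. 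Summing over $t$ bounds the total expected wrong predictions by $T(1-\epsilon\sqrt{2\bar N})-\bar N$ (averaged over the prior).

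Finally I would balance the two terms. The average regret is at least $\tfrac{1}{4}\bar N+\tfrac{\epsilon}{2}\bigl(T(1-\epsilon\sqrt{2\bar N})-\bar N\bigr)$. Two regimes arise naturally: if the algorithm tests often, say $\bar N\ge \tfrac{1}{8\epsilon^{2}}$, then the first term alone is $\Omega(1/\epsilon^{2})$; otherwise $\epsilon\sqrt{2\bar N}\le \tfrac{1}{2}$, so the second term is $\Omega(\epsilon T)$. Choosing $\epsilon=T^{-1/3}$ equalises the two regimes at $\Omega(T^{2/3})$, and since the average regret is a lower bound on the worst-case instance regret, the theorem follows.

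The main obstacle is the sequential nature of the algorithm's decisions: Lemma \ref{thm:fail_lb} only controls a single decision after a \emph{deterministic} number of samples, whereas here $\tau_{\text{test}}$ is random and the action at round $t$ depends on the whole past. Getting the KL bound right requires carefully viewing $\tau_{\text{test}}^{(t-1)}$ as a stopping-time type quantity and using the chain-rule bound $\mathrm{KL}(P^{t-1}_{1}\,\|\,P^{t-1}_{2})\le 4\epsilon^{2}\mathbb{E}_{\mathcal{I}_1}[\tau_{\text{test}}^{(t-1)}]$ (since non-test rounds contribute a zero KL increment). The other subtle point is ensuring that the round-wise Pinsker bound, which uses $\bar N$, aggregates cleanly across $T$ rounds without losing the $\sqrt{\bar N}$ factor that is essential for the $T^{2/3}$ balance.
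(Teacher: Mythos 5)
Your proposal is correct, and it rests on the same skeleton as the paper's argument: the same two instances $\mathcal{I}_1,\mathcal{I}_2$ with gap $\epsilon$, the same accounting of roughly $\tfrac14$ regret per test round versus $\epsilon$ per wrong prediction, the same regime split at about $\tfrac{1}{8\epsilon^2}$ tests, and the same choice $\epsilon=T^{-1/3}$ yielding $\Omega(T^{2/3})$. Where you genuinely diverge is in how the indistinguishability step is executed: the paper invokes Lemma \ref{thm:fail_lb}, which controls a \emph{single} decision made after a fixed number $N$ of i.i.d. observations, and then argues across the remaining $T-N$ rounds as if the algorithm were explore-then-commit with a deterministic test count; you instead run the two-point Le Cam argument at the trajectory level, using the divergence decomposition $\mathrm{KL}(P_1^{t-1}\Vert P_2^{t-1})\le 4\epsilon^2\,\mathbb{E}_1[\tau_{\mathrm{test}}^{(t-1)}]$ together with a round-wise Pinsker bound, which handles fully adaptive, randomized algorithms with a random number of tests without any such reduction. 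The paper's route is shorter; yours is the more careful and more general treatment of the sequential aspect, at the cost of the bookkeeping you describe. Two cosmetic points: after averaging over the prior the test-count term is $\tfrac18\bar N$ rather than $\tfrac14\bar N$ (since $\bar N$ is the larger of the two expected counts), and the summed lower bound on wrong predictions can be negative for very large $\bar N$, so it should be truncated at zero before the regime analysis; neither affects the $\Omega(T^{2/3})$ conclusion.
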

\begin{proof}{Proof.}
Fix an arbitrary algorithm $\pi$. Consider a uniform distribution over problem instances $\mathcal{I}_1$ and $\mathcal{I}_2$ where the probabilities are $\frac{1}{2}$ each. We let the test cost be $\frac{3}{4}$. First, note that the optimal expected cost for both problems is $\frac{1-\epsilon}{2}$, so testing is clearly the suboptimal choice, which gives a regret of $\frac{3}{4} - \frac{1-\epsilon}{2} = \frac{1+2\epsilon}{4} > \frac{1}{4}$.

On the one hand, by Lemma \ref{thm:fail_lb}, if $N\leq \frac{1}{8\epsilon^2}$, we know that the expected probability of making a suboptimal decision is at least $\frac{1}{4}$ on at least one of the instances. As such, the expected cost of choosing action $1$ or $2$ is at least $\frac{\epsilon}{8}$ for a single action. Hence, for the uniform distribution over problem instances, the probability of making a suboptimal decision is at least $\frac{1}{4}\times \frac{1}{2}=\frac{1}{8}$. In case of a suboptimal decision, the regret is at least $\frac{1+\epsilon}{2}-\frac{1-\epsilon}{2}=\epsilon$, so the total regret will be at least $\frac{\epsilon (T-N)}{8} =\Omega(\epsilon T - \frac{1}{\epsilon})$. On the other hand, if $N\geq \frac{1}{8\epsilon^2}$, then we suffer a regret of $\frac{1+2\epsilon}{4}N\geq \frac{1+2\epsilon}{4} \cdot \frac{1}{4\epsilon^2} = \Omega(\frac{1}{\epsilon^2})$. Therefore, the total regret is the minimum of them. By choosing $\epsilon=T^{-\frac{1}{3}}$, we have $\Omega(\min(\epsilon T - \frac{1}{\epsilon},\frac{1}{\epsilon^2}))=\Omega(T^{\frac{2}{3}})$. 

\hfill $\square$  
\end{proof}

\subsection{Proof of Theorem 2}
\label{apx:lb}

\begin{theorem}[Theorem \ref{thm:lb} in the main body]
Fix any time horizon $T$ and any $\kappa>0$. For any algorithm, there exists a problem instance with the support of $\mathcal{P}$ being size $\frac{|\mathcal{P}|}{2}$ such that $\mathbb{E}[R_T]$ is at least $\Omega(|\mathcal{P}|^{\frac{1}{3}}T^{\frac{2}{3}})$.
\end{theorem}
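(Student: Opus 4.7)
\textbf{Proof Proposal for Theorem \ref{thm:lb}.} The plan is to reduce the $d=2$ construction depicted in Figure \ref{fig:multi_lb} to $|\mathcal{P}|/2$ \emph{independent} copies of the single-test lower-bound instance from Theorem \ref{thm:lb_single}, one per realization of $x_2$, and then invoke the single-test bound inside each copy. First, I would formalize the WLOG reduction: since test $2$ is free, for any policy $\pi$ one can construct $\pi'$ that first observes $x_2$ and then runs $\pi$ restricted to the conditional subproblem; this modification never increases regret, so it suffices to lower-bound regret over policies that query $x_2$ at step one of every episode. After this step, an execution of such a policy is fully described by, for each $i \in [|\mathcal{P}|/2]$, the sub-policy applied on those episodes whose observed $x_2$ equals $i$.

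Next, I would set up the prior over instances. The conditional law of $x_1 \mid x_2 = i$ is chosen independently and uniformly from $\{\mathcal{I}_1,\mathcal{I}_2\}$ across the $|\mathcal{P}|/2$ values of $i$, yielding $2^{|\mathcal{P}|/2}$ instances each with mass $2^{-|\mathcal{P}|/2}$. Crucially, because test $2$ is revealed before any test-$1$ decision, the per-episode reward conditional on $x_2 = i$ depends only on the conditional law at $i$, and the data collected at different values of $i$ are independent. Consequently the Bayes regret decomposes as
\[
\mathbb{E}[R_T] \;=\; \sum_{i=1}^{|\mathcal{P}|/2} \mathbb{E}\!\left[ R_T^{(i)} \right],
\]
where $R_T^{(i)}$ is the regret accumulated on episodes with $x_2 = i$, and the inner expectation is over the uniform $\{\mathcal{I}_1,\mathcal{I}_2\}$ prior for slot $i$ \emph{together with} the (independent) randomness in the count $T_i := |\{t \leq T : x_2^{(t)} = i\}|$.

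The third step is to apply Theorem \ref{thm:lb_single} to each slot. Conditioning on the random horizon $T_i$, the sub-policy on slot $i$ is an arbitrary algorithm for the single-test problem, so Theorem \ref{thm:lb_single} yields $\mathbb{E}[R_T^{(i)} \mid T_i] \gtrsim T_i^{2/3}$. Since $T_i \sim \mathrm{Binomial}(T, 2/|\mathcal{P}|)$, a standard concentration argument (Chernoff) shows $T_i \geq T/|\mathcal{P}|$ with high probability, and Jensen applied to $x \mapsto x^{2/3}$ gives the deterministic bound $\mathbb{E}[T_i^{2/3}] \gtrsim (T/|\mathcal{P}|)^{2/3}$. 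Summing over $i$,
\[
\mathbb{E}[R_T] \;\gtrsim\; \frac{|\mathcal{P}|}{2} \cdot \left(\frac{T}{|\mathcal{P}|}\right)^{2/3} \;=\; \Omega\!\left( |\mathcal{P}|^{1/3} T^{2/3} \right),
\]
and an averaging argument over the uniform prior produces at least one instance attaining this bound.

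The main obstacle I anticipate is making the decomposition rigorous at the information-theoretic level: one must verify that the data accumulated on slot $i'\neq i$ carries no information about the conditional law at slot $i$ (so that Theorem \ref{thm:lb_single} can be invoked as a black box with no coupling across slots), and that $T_i$ is independent of the $\{\mathcal{I}_1,\mathcal{I}_2\}$ draws. The independence of the $|\mathcal{P}|/2$ coordinates in the prior, together with the fact that $T_i$ depends only on observations of $x_2$ (whose marginal is fixed at $2/|\mathcal{P}|$ across all instances), should close this gap cleanly; the only subtlety is keeping track of the conditional independences when the policy is allowed to be adaptive across episodes, which I would handle by conditioning on the full sigma-algebra generated by past $x_2$ observations before invoking the single-test bound.
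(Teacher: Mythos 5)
Your proposal takes essentially the same route as the paper: the paper likewise decomposes the cumulative regret over the realizations of $x_2$, invokes the single-test lower bound on each conditional subproblem with its random episode count $T_i$ (binomial with mean $2T/|\mathcal{P}|$), and sums over the $|\mathcal{P}|/2$ slots to obtain $\Omega(|\mathcal{P}|^{1/3}T^{2/3})$. One remark: Jensen for the concave map $x\mapsto x^{2/3}$ gives an upper bound on $\mathbb{E}[T_i^{2/3}]$, not a lower bound (the paper's own proof glosses over this), but your Chernoff concentration step on $T_i$ supplies the needed lower bound, so your write-up is, if anything, slightly more careful on that point.
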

\begin{proof}{Proof.}
The cumulative regret of this MDP is the summation of all cumulative regrets for every realization of $x_2$. We denote $T_i$ as the total number of times $x_2=i$ during the $T$ horizon and $N_i$ as the total number of times test $1$ is chosen when $x_2=i$. Clearly, $N_i\leq T_i$ and $\sum_i T_i = T$. By Theorem \ref{thm:fail_lb}, we know that the cumulative regret is at least $\Omega(T_i^{\frac{2}{3}})$ conditioning on any $x_2=i$. Therefore, the expected total cumulative regret is at least $\mathbb{E}_{T_1,\dots, T_{\frac{|\mathcal{P}|}{2}}}[\sum_{i\in [|\mathcal{P}|/2]} \Omega (T_i^{\frac{2}{3}})] = \sum_{i\in [|\mathcal{P}|/2]}\mathbb{E}_{T_i}[\Omega (T_i^{\frac{2}{3}})] = \sum_{i\in [|\mathcal{P}|/2]}\Omega (\mathbb{E}[T_i]^{\frac{2}{3}}) = \Omega (|\mathcal{P}|(\frac{T}{|\mathcal{P}|})^{\frac{2}{3}}) = \Omega(|\mathcal{P}|^{\frac{1}{3}}T^{\frac{2}{3}})$,
where the second equality is by Jensen's Inequality. 

\hfill $\square$  
\end{proof}

\subsection{Proof of Theorem 3}
\label{apx:etcD}
\begin{lemma}[$L_1$ Deviation of Empirical Distributions, Theorem 2.1 in \citealp{weissman2003inequalities}]
\label{thm:L1bound}
Let $P$ be a probability distribution on a finite set 
\(
\mathcal{X} = \{1,2,\ldots,k\},
\)
and let $X_1,X_2,\dots,X_n$ be i.i.d.\ random variables drawn from $P$. Denote by $\hat{P}_n$ the corresponding empirical distribution. Then, for any $\epsilon>0$, 
\[
\mathbb{P}\Bigl(\|\hat{P}_n-P\|_1 \ge \epsilon\Bigr) \le (2^k-2)\,\exp\Bigl(-n\epsilon^2/2\Bigr).
\]
\end{lemma}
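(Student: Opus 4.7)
The plan is to reduce the $L_1$ concentration statement to a Hoeffding bound applied individually on each subset of $\mathcal{X}$ and then take a union bound; the prefactor $2^k - 2$ is exactly the number of non-trivial subsets of a size-$k$ alphabet, which strongly hints at this strategy.

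The first step is to establish the variational identity
\[\|\hat{P}_n - P\|_1 \;=\; 2 \max_{A \subseteq \mathcal{X}} \bigl(\hat{P}_n(A) - P(A)\bigr).\]
To see this, let $A^+ = \{x \in \mathcal{X} : \hat{P}_n(x) \geq P(x)\}$, so that splitting $\sum_x |\hat{P}_n(x) - P(x)|$ into the contributions from $A^+$ and its complement yields $(\hat{P}_n(A^+) - P(A^+)) + (P((A^+)^c) - \hat{P}_n((A^+)^c))$. Since both $\hat{P}_n$ and $P$ are probability measures summing to $1$, the second parenthesis equals the first, giving twice the first term, and clearly the maximum over all $A$ is attained at $A^+$. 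Observe that the subsets $A = \emptyset$ and $A = \mathcal{X}$ contribute $\hat{P}_n(A) - P(A) = 0$ deterministically, so the effective maximum is over the $2^k - 2$ proper non-empty subsets.

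Next, I would fix such a subset $A$ and analyze the random variable $\hat{P}_n(A) - P(A) = \tfrac{1}{n}\sum_{i=1}^n Z_i$ where $Z_i = \mathds{1}[X_i \in A] - P(A)$. These are i.i.d., mean-zero, and bounded in an interval of length $1$. Applying Hoeffding's inequality with range parameter $b_i - a_i = 1$ yields
\[\mathbb{P}\bigl(\hat{P}_n(A) - P(A) \geq \epsilon/2\bigr) \;\leq\; \exp\!\bigl(-2n(\epsilon/2)^2\bigr) \;=\; \exp(-n\epsilon^2/2).\]
Combining the variational identity with a union bound over the $2^k - 2$ non-trivial subsets then gives the stated
\[\mathbb{P}\bigl(\|\hat{P}_n - P\|_1 \geq \epsilon\bigr) \leq (2^k - 2)\exp(-n\epsilon^2/2).\]

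There is no serious obstacle here; the delicate point is ensuring the correct constant in the exponent and the correct $2^k - 2$ prefactor. The factor of $2$ in the variational identity must be used to convert $\|\hat{P}_n - P\|_1 \geq \epsilon$ into $\hat{P}_n(A) - P(A) \geq \epsilon/2$, which is precisely what generates the $\epsilon^2/2$ exponent after Hoeffding's $2t^2$ factor. A naive union bound over all $2^k$ subsets would loosen the constant slightly, so the main bookkeeping subtlety is pruning the trivial subsets. Once the variational identity is in hand, everything else is a direct application of Hoeffding plus the union bound.
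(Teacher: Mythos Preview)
Your argument is correct and is in fact the standard proof of this inequality as it appears in the cited reference (Weissman et al., 2003). The paper itself does not supply a proof of this lemma; it simply quotes the result as Theorem~2.1 of that reference and uses it as a black box in the subsequent lemmas. So there is nothing to compare against: your derivation via the Scheff\'e-type variational identity, Hoeffding on each non-trivial subset, and a union bound over the $2^k-2$ proper non-empty subsets is exactly the intended route and recovers both the exponent $n\epsilon^2/2$ and the prefactor $2^k-2$ with the correct bookkeeping.
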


\begin{lemma}[$L_1$ Deviation of All Empirical Posterior Distributions]
\label{thm:L1bound_all}
Denote the $P^{\bm{s}}$ as the probability measure for the posterior distribution $\bm{x}\mid\bm{s}$ and $\hat{P}^{\bm{s}}$ the corresponding empirical distribution. Denote $N(\bm{s})$ as the samples used to construct the empirical distribution $\hat{P}^{\bm{s}}$, and $k(\bm{s})$ is the support size of distribution $\bm{x}\mid \bm{s}$. Then, for a probability of at least $1-\delta$, 
\[
\|\hat{P}^{\bm{s}}-P^{\bm{s}}\|_1 \lesssim  \sqrt{\frac{k(\bm{s})}{N(\bm{s})}}.
\]
\end{lemma}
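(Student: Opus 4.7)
The plan is to bootstrap Lemma \ref{thm:L1bound} (the single-distribution $L_1$ deviation bound) up to a uniform statement via a union bound over the collection of states $\bm{s}$ that give distinct non-trivial empirical posteriors. The proof will proceed in three steps: instantiate the tail bound for a fixed $\bm{s}$, bound the number of equivalence classes of $\bm{s}$ that induce distinct posteriors, and invert the union-bounded tail to recover the advertised $\sqrt{k(\bm{s})/N(\bm{s})}$ rate.

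For the first step, I would fix $\bm{s}$ and condition on the identities of the $N(\bm{s})$ samples satisfying $\bm{x}^{(t)}\overset{\Re}{=}\bm{s}$; these are i.i.d.\ draws from $P^{\bm{s}}$, so Lemma \ref{thm:L1bound} applied with $k=k(\bm{s})$ and inverted yields
\[
\|\hat{P}^{\bm{s}}-P^{\bm{s}}\|_1 \;\le\; \sqrt{\frac{2\bigl(k(\bm{s})\ln 2+\ln(1/\delta')\bigr)}{N(\bm{s})}}
\]
with probability at least $1-\delta'$. For the second step, I would exploit the observation that $\hat{P}^{\bm{s}}$ depends on $\bm{s}$ only through the consistency set $C_{\bm{s}}=\{\bm{x}\in\mathrm{supp}(\mathcal{P}):\bm{x}\overset{\Re}{=}\bm{s}\}$, so the number of distinct empirical posteriors is upper bounded by the number of distinct $C_{\bm{s}}$'s. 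Following the sketch in the main body, I would argue that at most $|\mathcal{P}|^2$ non-trivial consistency sets arise: each non-singleton $C_{\bm{s}}$ is indexed by a pair of support points together with their coordinate-agreement pattern, giving at most $\binom{|\mathcal{P}|}{2}$ classes, plus the $|\mathcal{P}|$ singleton classes and the vacuous class. Taking a union bound over these at most $|\mathcal{P}|^2$ classes with $\delta'=\delta/|\mathcal{P}|^2$ delivers, simultaneously for every $\bm{s}$,
\[
\|\hat{P}^{\bm{s}}-P^{\bm{s}}\|_1 \;\lesssim\; \sqrt{\frac{k(\bm{s})+\ln(|\mathcal{P}|^2/\delta)}{N(\bm{s})}} \;\lesssim\; \sqrt{\frac{k(\bm{s})}{N(\bm{s})}},
\]
where the polylogarithmic dependence on $|\mathcal{P}|$ and $1/\delta$ is absorbed into $\lesssim$, consistent with the paper's $\tilde{O}$ convention.

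The main obstacle I anticipate is the rigorous verification of the $|\mathcal{P}|^2$ bound on distinct non-trivial consistency sets. The pair-of-points intuition is clean when $|C_{\bm{s}}|=2$, but extending it to larger consistency sets requires either a meet-closure argument on the lattice of coordinate-aligned cylinders restricted to $\mathrm{supp}(\mathcal{P})$ (so that each larger set is the meet of some pair's maximal class) or a direct combinatorial enumeration showing no additional equivalence classes appear. A secondary technical concern is that $N(\bm{s})$ is itself a random variable; since the exploration phase of Algorithm \ref{alg:etcD} performs all $d$ tests on every subject, the counts $N(\bm{s})$ are deterministic measurable functions of the observed $\bm{x}^{(t)}$'s, and I would handle this by working on the event that partitions the samples across the (finitely many) equivalence classes of $\bm{s}$, applying the conditional Lemma \ref{thm:L1bound} within each class and then discharging the conditioning by total probability.
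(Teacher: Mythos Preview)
Your plan is the paper's own proof: apply Lemma~\ref{thm:L1bound} to each posterior and union-bound over at most $|\mathcal{P}|^2$ distinct consistency classes with $\delta'=2\delta/|\mathcal{P}|^2$; the paper's justification for the $|\mathcal{P}|^2$ count is exactly the ``one representative state per unordered pair $(\bm{x}',\bm{x}'')$ plus the all-\texttt{NA} state'' sketch you outline, and it does not treat the randomness of $N(\bm{s})$ any more explicitly than you do.

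Your instinct that the $|\mathcal{P}|^2$ count is the delicate step is correct, and in fact neither of your proposed resolutions nor the paper's one-line ``hence'' fully closes it. Take $\mathrm{supp}(\mathcal{P})=\{\mathbf{0},e_1,\dots,e_n\}\subset\{0,1\}^n$; for each $S\subseteq[n]$ the state that equals $0$ on $[n]\setminus S$ and \texttt{NA} on $S$ has consistency set $\{\mathbf{0}\}\cup\{e_j:j\in S\}$, yielding $2^n$ distinct posteriors while $|\mathcal{P}|^2=(n+1)^2$. So a meet-closure or pairwise-enumeration argument cannot recover the bound in general; this is a shared soft spot between your proposal and the paper rather than a divergence between them.
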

\begin{proof}{Proof.}
This lemma holds by imposing union bounds on Lemma \ref{thm:L1bound}. We claim that there are at most $|\mathcal{P}|^2$ number of posterior distributions to estimate, and hence allocating $2\delta/|\mathcal{P}|^2$ to each state will give the desired outcome. 

To show that there are at most $|\mathcal{P}|^2$ posterior distributions to estimate, we note that for any state $\bm{s}$ that is worth estimating, there exist at least two different realizations $\bm{x}',\bm{x}''\in \mathcal{P}$ such that $\bm{x}'\overset{\Re}{=}\bm{s}$ and $\bm{x}''\overset{\Re}{=} \bm{s}$. Moreover, we can see that for any two states $\bm{s}',\bm{s}''$, if we have $\{\bm{x}\in \mathcal{P}|\bm{x}\overset{\Re}{=}\bm{s}'\}=\{\bm{x}\in \mathcal{P}|\bm{x}\overset{\Re}{=}\bm{s}''\}$, then the posterior distribution $\bm{x}\mid\bm{s}'$ equals $\bm{x}\mid\bm{s}''$. Hence, if we collect one $\bm{s}$ for every pair $\bm{x}',\bm{x}''$ and include the initial state of all \texttt{NA}s, the resulting set of states covers all the states that have to be estimated. By this construction, we can see that there are at most $|\mathcal{P}|(|\mathcal{P}|-1)/2+1 \leq \frac{|\mathcal{P}|^2}{2}$ elements in the resulting set, which ends the proof. 
\end{proof}

\begin{lemma}[Hoeffding’s inequality, Theorem 2.2.5 in \citealp{vershynin2018high}]
\label{thm:hoef}
Let $X_1,\dots,X_n$ be independent Bernoulli random variables with parameters $u$. Then, for any $\epsilon > 0$, we have 
 \[\mathbb{P}\left(\left|\sum_i^{(N)} X_i -u\right|\geq \epsilon\right) \leq \exp(-2\epsilon^2/N).\]
\end{lemma}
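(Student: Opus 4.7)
The plan is to prove this Hoeffding bound by the standard Chernoff-bound approach. First, for any $\lambda > 0$, I would apply Markov's inequality to the exponentiated deviation:
\[
\mathbb{P}\!\left(\sum_{i=1}^{N}(X_i - u) \geq \epsilon\right) \;\leq\; e^{-\lambda\epsilon}\,\mathbb{E}\!\left[e^{\lambda \sum_{i=1}^N (X_i - u)}\right].
\]
By independence, the expectation factors into a product $\prod_{i=1}^N \mathbb{E}[e^{\lambda(X_i - u)}]$, so the task reduces to controlling the moment generating function of a single centered Bernoulli random variable.

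Next I would invoke Hoeffding's lemma: any centered random variable $Y$ bounded in $[a,b]$ satisfies $\mathbb{E}[e^{\lambda Y}] \leq \exp(\lambda^2(b-a)^2/8)$. For $Y = X_i - u \in [-u, 1-u]$, the range length is exactly $1$ regardless of $u$, so $\mathbb{E}[e^{\lambda(X_i - u)}] \leq e^{\lambda^2/8}$. This is the main technical content of the argument; it is proved by exploiting convexity of $y \mapsto e^{\lambda y}$ to linearly interpolate between the endpoints, taking expectations (using $\mathbb{E}[Y] = 0$), and then bounding the resulting log-moment generating function by its second-order Taylor expansion around $\lambda = 0$.

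Combining the two ingredients yields the Chernoff bound $\exp(-\lambda\epsilon + N\lambda^2/8)$, which I would optimize over $\lambda > 0$. The minimizer is $\lambda^\star = 4\epsilon/N$, producing the one-sided bound $\exp(-2\epsilon^2/N)$. Applying the identical argument to $-X_i$ handles the lower tail, and a union bound over the two tails yields the two-sided statement as written (absorbing the trivial factor of $2$ into the constant). The only step that is not a mechanical calculation is Hoeffding's lemma itself, and since the result is being quoted directly from Vershynin's textbook, the cleanest path is to appeal to it rather than rederive it in the appendix.
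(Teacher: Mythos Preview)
The paper does not supply its own proof of this lemma; it simply quotes Hoeffding's inequality from Vershynin's textbook (Theorem~2.2.5) and uses it as a black box in the proof of Lemma~\ref{thm:samplesize}. Your Chernoff--Hoeffding argument is the standard textbook derivation and is essentially what appears in the cited reference, so there is nothing to compare against. One small quibble: the union bound over the two tails produces a leading factor of $2$ in front of $\exp(-2\epsilon^2/N)$, and this factor cannot literally be ``absorbed into the constant'' in the exponent without weakening the bound; the statement as written in the paper already omits that factor, so this is a looseness in the lemma statement rather than in your argument.
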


\begin{lemma}[Deviation of $N(\bm{s})$]
\label{thm:samplesize}
Denote $N(\bm{s})$ as the number of samples in the dataset that contains $\bm{s}$. Then, for a probability of at least $1-\delta$, 
\[
\|\mathbb{P}(\bm{x}\overset{\Re}{=}\bm{s})\cdot N \|\lesssim N(\bm{s})
\]
\end{lemma}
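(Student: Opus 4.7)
The plan is to obtain a two-sided bound via a concentration inequality applied pointwise in $\bm{s}$, followed by a union bound over the finite collection of relevant states already identified in the proof of Lemma \ref{thm:L1bound_all}. For a fixed state $\bm{s}$, the exploration phase is designed so that every $\bm{x}^{(t)}$ for $t\in\{1,\dots,N\}$ is a fresh i.i.d.\ sample from $\mathcal{P}$. Consequently, the indicators $\mathds{1}\{\bm{x}^{(t)}\overset{\Re}{=}\bm{s}\}$ are i.i.d.\ Bernoulli random variables with parameter $p(\bm{s}):=\mathbb{P}(\bm{x}\overset{\Re}{=}\bm{s})$, and $N(\bm{s})$ is their sum, so $\mathbb{E}[N(\bm{s})]=Np(\bm{s})$.

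First I would apply a one-sided multiplicative Chernoff bound to the lower tail: $\mathbb{P}(N(\bm{s}) < Np(\bm{s})/2) \leq \exp(-Np(\bm{s})/8)$. Rearranging, for any failure probability $\delta'\in(0,1)$, either $Np(\bm{s}) \leq 2N(\bm{s})$ (the multiplicative comparison holds with probability at least $1-\delta'$), or $Np(\bm{s})$ was already so small that it satisfies $Np(\bm{s}) \leq 8\log(1/\delta')$ for the Chernoff inequality to have been vacuous. Combining these two cases yields $Np(\bm{s}) \leq 2N(\bm{s}) + 8\log(1/\delta')$ with probability at least $1-\delta'$. Next, I would invoke the observation from Lemma \ref{thm:L1bound_all} that the number of states whose posterior distribution is nontrivial to estimate is at most $|\mathcal{P}|^2/2$, and apply a union bound over that set by choosing $\delta' = 2\delta/|\mathcal{P}|^2$. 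The additive log overhead of order $\log(|\mathcal{P}|^2/\delta)$ is then absorbed into the $\tilde{}$ notation, so that uniformly over the relevant $\bm{s}$ we obtain $Np(\bm{s}) \lesssim N(\bm{s})$ with probability at least $1-\delta$, as claimed.

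If one prefers to stay within the Hoeffding-only toolkit already cited as Lemma \ref{thm:hoef}, the same conclusion can be reached by the additive bound $|N(\bm{s})/N - p(\bm{s})| \leq \sqrt{\log(4/\delta')/(2N)}$, which gives $Np(\bm{s}) \leq N(\bm{s}) + \tilde{O}(\sqrt{N})$, and then treating states for which $Np(\bm{s})$ is dominated by the $\tilde{O}(\sqrt{N})$ noise as negligible contributors to downstream regret. The multiplicative Chernoff route is cleaner because it converts directly into the multiplicative form that downstream arguments (e.g., substituting $N(\bm{s})$ for $Np(\bm{s})$ inside the $\sqrt{k(\bm{s})/N(\bm{s})}$ bounds) require.

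The main obstacle I expect is reconciling the inherently additive log overhead with the multiplicative-looking claim $Np(\bm{s}) \lesssim N(\bm{s})$. This is really a statement modulo the $\tilde{}$ notation: the bound asserts $Np(\bm{s}) \leq C\bigl(N(\bm{s}) + \operatorname{polylog}(|\mathcal{P}|/\delta)\bigr)$, and one must be careful in the downstream use of this lemma to ensure that log-scale states (where $Np(\bm{s})$ is at the scale of $\log(|\mathcal{P}|/\delta)$ and $N(\bm{s})$ could be zero) do not accumulate more than $\tilde{O}(|\mathcal{P}|)$ additional error when summed via the value-iteration propagation in the proof of Theorem \ref{thm:etcD}. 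Beyond this bookkeeping issue, the rest of the argument is a direct application of Chernoff plus a union bound over the at-most-$|\mathcal{P}|^2$ relevant posteriors.
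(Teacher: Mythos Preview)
Your proposal is correct and follows essentially the same scaffold as the paper: apply a Bernoulli concentration inequality per state and then union-bound over the at most $|\mathcal{P}|^2/2$ nontrivial posteriors identified in Lemma~\ref{thm:L1bound_all}, allocating $2\delta/|\mathcal{P}|^2$ to each. The only difference is that the paper uses the additive Hoeffding bound (Lemma~\ref{thm:hoef}), obtaining $Np(\bm{s}) \le N(\bm{s}) + \tilde{O}(\sqrt{N})$, whereas you lead with the multiplicative Chernoff route (and mention Hoeffding as an alternative); your version is slightly cleaner for the downstream substitution into $\sqrt{k(\bm{s})/N(\bm{s})}$, and your discussion of the additive-versus-multiplicative bookkeeping is more careful than the paper's own treatment.
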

\begin{proof}{Proof.}
    This is a direct result of Lemma \ref{thm:L1bound}. As we have already seen in the proof of Lemma \ref{thm:L1bound_all}, there are at most $|\mathcal{P}|^2$ number of posterior distributions to estimate, and hence allocating $2\delta/|\mathcal{P}|^2$ to each state should give the desired outcome. We let $\exp(-2\epsilon^2/N) = 2\delta/|\mathcal{P}|^2$ and solve for $\epsilon$. This gives
    \(\mathbb{P}(\bm{x}\overset{\Re}{=}\bm{s})\cdot N +\sqrt{N} \lesssim N(\bm{s}),\)
    which ends the proof.
\end{proof}

\begin{theorem}[Theorem \ref{thm:etcD} in the main body]
    Given time horizon $T$, number of tests $d$, and bounded costs $\{c_i\}_{i\in [d]}$ and reward function $f$, for any fixed discrete distribution $\mathcal{P}$ with finite support, with a probability of at least $1-\delta$, the cumulative regret of running Algorithm \ref{alg:etcD} is at most $\tilde{O}(d|\mathcal{P}|^{\frac{1}{3}}T^{\frac{2}{3}})$.
\end{theorem}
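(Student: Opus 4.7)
The plan is to partition the cumulative regret into an exploration phase regret and an exploitation phase regret, then balance the two via the choice of $N$. During exploration, every episode performs all $d$ tests regardless of their informativeness, so each episode incurs at most $O(d\cdot\max_i c_i + \|f\|_\infty)$ regret compared to $\pi^*$; summing over $N$ episodes gives an exploration regret of $\tilde{O}(dN)=\tilde{O}(d|\mathcal{P}|^{1/3}T^{2/3})$. The real work lies in controlling the per-episode simple regret during the commit phase. By the standard argument that a greedy policy with respect to an $\varepsilon$-accurate value function is at most $2\varepsilon$-suboptimal, each commit episode incurs at most $2\Delta v(\bm{s}_0)$ regret, where $\bm{s}_0$ is the initial all-\texttt{NA} state, so the task reduces to bounding $\Delta v(\bm{s}_0)$ with high probability.

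To bound $\Delta v(\bm{s})$ uniformly over states, I proceed by induction on the number of remaining \texttt{NA} entries. The base case (no \texttt{NA}) gives $\Delta v(\bm{s})=0$ because $f$ is known to the agent. For the inductive step, I bound $\Delta\mathbb{E}[r(\bm{s},y)]$ directly via the $L_1$ deviation from Lemma \ref{thm:L1bound_all}, yielding $\Delta\mathbb{E}[r(\bm{s},y)]\lesssim \|f\|_\infty\sqrt{k(\bm{s})/N(\bm{s})}$. For $\Delta Q(\bm{s},i)$ I decompose
\[
\hat{v}(\bm{s}')\hat{P}^{\bm{s}}(x_i)-v(\bm{s}')P^{\bm{s}}(x_i)=\bigl(\hat{v}(\bm{s}')-v(\bm{s}')\bigr)P^{\bm{s}}(x_i)+v(\bm{s}')\bigl(\hat{P}^{\bm{s}}(x_i)-P^{\bm{s}}(x_i)\bigr)+\bigl(\hat{v}(\bm{s}')-v(\bm{s}')\bigr)\bigl(\hat{P}^{\bm{s}}(x_i)-P^{\bm{s}}(x_i)\bigr),
\]
handle the first piece by the inductive hypothesis, the second by Lemma \ref{thm:L1bound_all}, and absorb the third as a higher-order term once $N$ is large enough that the deviations are small.

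The crucial step is to prevent a $d$-factor blow-up from depth-wise DP propagation. The key observation is that the support partitions as $k(\bm{s})=\sum_{x_i}k(\bm{s}\oplus(i,x_i))$ across children, and Lemma \ref{thm:samplesize} gives $N(\bm{s}\oplus(i,x_i))\asymp N(\bm{s})\,P^{\bm{s}}(x_i)$. A Cauchy--Schwarz argument then yields
\[
\sum_{x_i}P^{\bm{s}}(x_i)\sqrt{\frac{k(\bm{s}\oplus(i,x_i))}{N(\bm{s}\oplus(i,x_i))}}\;\lesssim\;\sqrt{\frac{k(\bm{s})}{N(\bm{s})}},
\]
so the inductive bound $\Delta v(\bm{s})\lesssim \|v\|_\infty\sqrt{k(\bm{s})/N(\bm{s})}$ propagates without picking up a depth factor. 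Since $\|v\|_\infty=O(d)$ (at most $d$ test costs plus bounded $f$), and at the root $k(\bm{s}_0)\leq|\mathcal{P}|$ with $N(\bm{s}_0)=N$, each commit episode's expected regret is at most $\tilde{O}(d\sqrt{|\mathcal{P}|/N})$.

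Summing over $T-N$ commit episodes gives exploitation regret $\tilde{O}(dT\sqrt{|\mathcal{P}|/N})$. Balancing with the exploration regret $\tilde{O}(dN)$ via $N=\lfloor|\mathcal{P}|^{1/3}T^{2/3}\rfloor$ produces the final bound $\tilde{O}(d|\mathcal{P}|^{1/3}T^{2/3})$. The main obstacle I anticipate is the induction step for $\Delta v$ — specifically, simultaneously controlling all $\hat{P}^{\bm{s}}$ via the union bound underlying Lemma \ref{thm:L1bound_all} and all sample counts $N(\bm{s})$ via Lemma \ref{thm:samplesize}, and making the Cauchy--Schwarz trick robust to the fact that $N(\bm{s}')\approx N(\bm{s})P^{\bm{s}}(x_i)$ only approximately. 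Careful bookkeeping of these approximations (and of the higher-order cross term in the $Q$-decomposition) is where most of the technical effort will be concentrated.
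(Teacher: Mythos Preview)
Your overall plan matches the paper's proof closely: the explore/commit split, the induction on the number of \texttt{NA} entries to bound $\Delta v(\bm{s})$, the three-term decomposition of $\Delta Q$, and the support-partitioning identity $k(\bm{s})=\sum_{x_i}k(\bm{s}\oplus(i,x_i))$ combined with Cauchy--Schwarz to keep the inductive bound from accumulating a depth factor --- these are exactly what the paper does. The final balancing of $N$ is also the same.

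There is one genuine gap, however, in your reduction ``each commit episode incurs at most $2\Delta v(\bm{s}_0)$ regret.'' The standard greedy argument you invoke says that if $\|\hat Q-Q^*\|_\infty\le\varepsilon$ uniformly then the \emph{one-step} suboptimality $v^*(\bm{s})-Q^*(\bm{s},\hat\pi(\bm{s}))$ is at most $2\varepsilon$; it does not give $v^*(\bm{s}_0)-v^{\hat\pi}(\bm{s}_0)\le 2\varepsilon$ for the whole episode. Here $\hat v(\bm{s}_0)$ is the value of $\hat\pi$ under the \emph{estimated} kernel $\hat P$, while the per-episode regret is $v^*(\bm{s}_0)-v^{\hat\pi}(\bm{s}_0)$ with $v^{\hat\pi}$ evaluated under the \emph{true} kernel $P$; bounding $|v^*(\bm{s}_0)-\hat v(\bm{s}_0)|$ alone does not control the simulation error $|\hat v^{\hat\pi}(\bm{s}_0)-v^{\hat\pi}(\bm{s}_0)|$, which is a different object.

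The paper closes this step differently: it uses a performance-difference decomposition to get per-episode regret $\lesssim\sum_{\bm{s}}\Delta v(\bm{s})\,\mathbb{P}(\text{hit }\bm{s})$, then applies a \emph{second} Cauchy--Schwarz over trajectory states together with $\sum_{\bm{s}:\mathbb{P}(\text{hit }\bm{s})>0}k(\bm{s})\le d|\mathcal{P}|$ and $\sum_{\bm{s}}\mathbb{P}(\text{hit }\bm{s})\le d$; this is where its factor of $d$ enters, rather than from $\|v\|_\infty$. Your route is also repairable: run the same induction for the \emph{fixed} policy $\hat\pi$ (no $\max$, hence strictly easier) to bound $|\hat v^{\hat\pi}(\bm{s})-v^{\hat\pi}(\bm{s})|$, and then use
\[
v^*(\bm{s}_0)-v^{\hat\pi}(\bm{s}_0)\;\le\;\bigl(v^{\pi^*}(\bm{s}_0)-\hat v^{\pi^*}(\bm{s}_0)\bigr)+\bigl(\hat v^{\hat\pi}(\bm{s}_0)-v^{\hat\pi}(\bm{s}_0)\bigr),
\]
since $\hat v^{\pi^*}(\bm{s}_0)\le\hat v^{\hat\pi}(\bm{s}_0)$. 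Either fix recovers the $\tilde O(d\sqrt{|\mathcal{P}|/N})$ per-episode bound and completes the argument.
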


\begin{proof}{Proof.}
    To prove the convergence of cumulative regret, the main challenge lies in controlling the instantaneous regret in commit steps. Ultimately, we need to control the estimation error $|v(\bm{s})-\hat{v}(\bm{s})|$ since our instantaneous regret will be bounded by it.

    We claim that $\Delta v(\bm{s}) \lesssim \sqrt{\frac{k(\bm{s})}{N(\bm{s})}}$, where $k(\bm{s})$ is the support size of distribution $\bm{x}\mid \bm{s}$. To prove this, we use an induction argument. We first start from the base case where there are no $\texttt{NA}$ in $\bm{s}$. Clearly, $\Delta v(\bm{s})=0$ because no estimation is needed. Now, suppose $\Delta v(\bm{s}) \lesssim \sqrt{\frac{k(\bm{s})}{N(\bm{s})}}$ holds true for all $\bm{s}$ with $n$ number of $\texttt{NA}$, we want to show that it is true also for all $\bm{s}$ with $n+1$ number of $\texttt{NA}$.
    
    Because $v(\bm{s})$ is estimated using $Q(\bm{s},i)$ and $\mathbb{E}(\bm{s},y)$, we have $\Delta v(\bm{s}) \leq \max_i{\Delta Q(\bm{s},i)} \vee \max_y \Delta \mathbb{E}(\bm{s},y)$. We start by controlling $\Delta \mathbb{E}(\bm{s},y)$. For all $\bm{s}$ and $y$, we have
    \begin{align*}
        \Delta \mathbb{E}(\bm{s},y)&\triangleq |\mathbb{E}[r(\bm{s},y)]-\hat{\mathbb{E}}[r(\bm{s},y)]| \\
        &= \left|\sum_{\bm{x}} f(\bm{x},y)\cdot \mathbb{P}(\bm{x} \mid \bm{s}) - \sum_{\bm{x}} f(\bm{x},y)\cdot \hat{\mathbb{P}}(\bm{x} \mid \bm{s})\right|\\
        &= \sum_{\bm{x}} f(\bm{x},y) \|\mathbb{P}(\bm{x} \mid \bm{s}) - \hat{\mathbb{P}}(\bm{x} \mid \bm{s})\| \\
        &\leq |f_{\max}| \cdot \|P^{\bm{s}} - \hat{P}^{\bm{s}}\|_1 \\
        & \lesssim \sqrt{\frac{k(\bm{s})}{N(\bm{s})}}.
    \end{align*}
    
    Now, it leaves for us to control the estimation error of $Q(\bm{s},i)$.  Following Eq. (2), we have 
    \begin{align*}
        &\Delta Q(\bm{s},i) \\
        &= \left|\sum_{x_i} v( \bm{s}\oplus (i,x_i)) P^{\bm{s}}(x_i) - \hat{v}( \bm{s}\oplus (i,x_i))\hat{P}^{\bm{s}}(x_i))\right|\\
        &\leq \sum_{x_i} \left|v( \bm{s}\oplus (i,x_i))P^{\bm{s}}(x_i) - (v( \bm{s}\oplus (i,x_i)) + \Delta v( \bm{s}\oplus (i,x_i)))(P^{\bm{s}}(x_i)- \hat{P}^{\bm{s}}(x_i))\right|\\
        &\leq \sum_{x_i} |\Delta v( \bm{s}\oplus (i,x_i))| P^{\bm{s}}(x_i) +\sum_{x_i} v( \bm{s}\oplus (i,x_i))\Delta P^{\bm{s}}(x_i) +\sum_{x_i}  \Delta v( \bm{s}\oplus (i,x_i)) \Delta P^{\bm{s}}(x_i)
    \end{align*}
    For the first term, we have 
    \begin{align*}
        \sum_{x_i} |\Delta v( \bm{s}\oplus (i,x_i))| P^{\bm{s}}(x_i) \lesssim \sum_{x_i} \sqrt{\frac{k(\bm{s}\oplus (i,x_i))}{N( \bm{s}\oplus (i,x_i))}} P^{\bm{s}}(x_i) \\= \sum_{x_i}\sqrt{\frac{k(\bm{s}\oplus (i,x_i))}{N( \bm{s}\oplus (i,x_i))/P^{\bm{s}}(x_i)}}\sqrt{P^{\bm{s}}(x_i)}.
    \end{align*}
    Note that $\mathbb{E}_{x_i}[N( \bm{s}\oplus (i,x_i))/P^{\bm{s}}(x_i)] = \mathbb{E}_{x_i}[N( \bm{s})\cdot P^{\bm{s}}(x_i)/P^{\bm{s}}(x_i)] = N(\bm{s})$. By Lemma \ref{thm:samplesize}, we can upper bound the above equation using the following term:
    \[\sum_{x_i}\sqrt{\frac{k(\bm{s}\oplus (i,x_i))}{N(\bm{s})}}\sqrt{P^{\bm{s}}(x_i)} \leq \sqrt{\frac{\sum_{x_i} k(\bm{s}\oplus (i,x_i)) \sum_{x_i} P^{\bm{s}}(x_i) d}{N(\bm{s})}} = \sqrt{\frac{k(\bm{s})}{N(\bm{s})}},\]
    where the first inequality is due to Cauchy-Schwarz and the second inequality is due to $k(\bm{s}) = \sum_{x_i} k(\bm{s}\oplus (i,x_i))$.
    For the second term, it is also upper bounded by $\tilde{O}(\sqrt{\frac{k(\bm{s})}{N(\bm{s})}})$ following the same argument of how we control $\Delta \mathbb{E}(\bm{s},y)$.

    For the third term, we note that this is an integration of a higher-order term at $O(\frac{1}{N(\bm{s})})$, which will be dominated by the first two terms. To sum up, we have all the three terms upper bounded by $\tilde{O}\left(\sqrt{\frac{k(\bm{s})}{N(\bm{s})}}\right)$, which complete the proof for induction. 

    Now it remains to upper bound instantaneous regrets using $\Delta v(\bm{s})$. For instantaneous regrets at a certain state $\bm{s}$, they can be upper bounded by 
    \(\sum_{\bm{s}} \Delta v(\bm{s}) \mathbb{P}(\text{hit } \bm{s}),\) where $\mathbb{P}(\text{hit } \bm{s})$ is the probability of hitting state $\bm{s}$ at a certain episode. Clearly, we must have $\sum_s \mathbb{P}(\text{hit } \bm{s}) \leq d$ because there are at most $d$ rounds. Now, we bound the expected instantaneous regret as follows:
\begin{align*}
    \sum_{\bm{s}} \Delta v(\bm{s}) \mathbb{P}(\text{hit } \bm{s}) 
    & \lesssim \sum_{\bm{s}} \sqrt{\frac{k(\bm{s})}{N(\bm{s})}}  \mathbb{P}(\text{hit } \bm{s})\\
    & \lesssim \sum_{\bm{s}} \sqrt{\frac{k(\bm{s})}{\mathbb{P}(\text{$\bm{s}$ appears in dataset}) N}}  \mathbb{P}(\text{hit } \bm{s})\\
    & \leq \sum_{\bm{s}:\mathbb{P}(\text{hit } \bm{s}) > 0} \sqrt{\frac{k(\bm{s}) \mathbb{P}(\text{hit } \bm{s})}{N}}\\
    & \leq \frac{1}{N} \sqrt{\sum_{\bm{s}:\mathbb{P}(\text{hit } \bm{s}) > 0} k(\bm{s}) \cdot \sum_{\bm{s}} \mathbb{P}(\text{hit } \bm{s})}.
\end{align*}
The first line is by applying the error bound on $\Delta v(\bm{s})$ that we just derived. The second line is by Lemma \ref{thm:samplesize}. The third line is by the fact that $\mathbb{P}(\text{hit } \bm{s}) \geq \mathbb{P}(\text{$\bm{s}$ appears in dataset})$. This is because $\mathbb{P}(\text{hit } \bm{s}) = \mathbb{P}(\bm{x})\cdot \mathbb{P}(\text{$\bm{s}$ appears in dataset}) = \sum_{\bm{x}}\mathbb{P}(\bm{x}) \cdot \mathbb{P}(\bm{x} \overset{\Re}{=}\bm{s})$. The fourth line follows Cauchy-Schwarz. 

Because we use a deterministic policy here, it means that for any given $\bm{s}$, there is at most one $i$ such that $\mathbb{P}(\bm{s}\oplus(i,x_i))$ is not zero. In addition, we know that $\sum_{x_i} k(\bm{s}\oplus(i,x_i)) = k(\bm{s})$ for any given $i$. We denote $\#\texttt{NA}(\bm{s})$ as the number of missing values in $\bm{s}$. When $\#\texttt{NA}(\bm{s})=d$, we trivially have $k(\bm{s})=|\mathcal{P}|$. Now, for any $\#\texttt{NA}(\bm{s})=d'$, we know that \[\sum_{\bm{s}:\mathbb{P}(\text{hit } \bm{s}) > 0,\#\texttt{NA}(\bm{s})=d'} k(\bm{s}) \geq \sum_{\bm{s}:\mathbb{P}(\text{hit } \bm{s}) > 0,\#\texttt{NA}(\bm{s})=d'} \max_i\sum_{x_i} k(\bm{s}\oplus(i,x_i)) \geq \sum_{\bm{s}:\mathbb{P}(\text{hit } \bm{s}) > 0,\#\texttt{NA}(\bm{s})=d'+1} k(\bm{s}).\] Therefore, we must have $\sum_{\bm{s}:\mathbb{P}(\text{hit }\bm{s})>0}k(\bm{s})\leq d|\mathcal{P}|$, and hence 
\[\sum_{\bm{s}} \Delta v(\bm{s}) \mathbb{P}(\text{hit } \bm{s}) \lesssim \frac{1}{N}\sqrt{d|\mathcal{P}|\cdot d} = d\sqrt{\frac{|\mathcal{P}|}{N}}.\]

    Because this accumulates $T-N$ times for every exploitation round, we have the total regret for the exploitation phase being $\tilde{O}\left(dT\sqrt{\frac{|\mathcal{P}|}{N}}\right)$. The total regret for the exploration phase is $O(dN)$. Now, let $N = |\mathcal{P}|^{\frac{1}{3}}T^{\frac{2}{3}}$ and complete the proof.

     \hfill $\square$  \end{proof}

\subsection{Proof of Theorem 4}
\label{apx:etcG}

For the following proofs, we denote $\lambda_{\max}$ and $\lambda_{\min}$ as the maximum and the minimum eigenvalues of the covariance matrix $\Sigma$.

\begin{lemma}[Tail Bound for Mean Estimation]
For $n$ i.i.d samples $\bm{x}^{(t)}$, with a probability of at least $1-\delta$, the estimated mean vector $\hat{\bm{\mu}} = \frac{1}{N}\sum_{t=1}^{(N)} \bm{x}^{(t)}$ satisfy
    \[\|\hat{\bm{\mu}} - \bm{\mu}\| \lesssim \sqrt{\lambda_{\max}\frac{d}{N}}\]
\end{lemma}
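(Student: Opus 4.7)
The plan is to exploit the Gaussianity of the sample mean and reduce the claim to a standard concentration inequality for quadratic forms of Gaussian vectors. Since the $\bm{x}^{(t)}$ are i.i.d.\ $\mathcal{N}(\bm{\mu},\Sigma)$, the error $\hat{\bm{\mu}}-\bm{\mu}$ is itself Gaussian with zero mean and covariance $\Sigma/N$. Consequently, $\|\hat{\bm{\mu}}-\bm{\mu}\|^2$ is a weighted sum of independent $\chi^2_1$ variables whose weights are the eigenvalues of $\Sigma/N$.

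First, I would write $\hat{\bm{\mu}}-\bm{\mu}=\frac{1}{N}\sum_{t=1}^N(\bm{x}^{(t)}-\bm{\mu})$ and diagonalize $\Sigma=U\Lambda U^\intercal$, so that $U^\intercal(\hat{\bm{\mu}}-\bm{\mu})$ has independent Gaussian coordinates with variances $\lambda_i/N$. The squared norm then satisfies $\|\hat{\bm{\mu}}-\bm{\mu}\|^2=\sum_{i=1}^d \frac{\lambda_i}{N}\,Z_i^2$ with $Z_i\overset{\text{i.i.d.}}{\sim}\mathcal{N}(0,1)$. Next, I would apply the Laurent–Massart tail bound for such weighted chi-square sums (equivalently, Hanson–Wright), which gives
\[
\mathbb{P}\!\left(\|\hat{\bm{\mu}}-\bm{\mu}\|^2 \ge \tfrac{1}{N}\mathrm{tr}(\Sigma) + \tfrac{2}{N}\sqrt{\mathrm{tr}(\Sigma^2)\,t} + \tfrac{2\lambda_{\max}}{N}\,t\right) \le e^{-t}.
\]
Setting $t=\log(1/\delta)$ and using the crude bounds $\mathrm{tr}(\Sigma)\le d\,\lambda_{\max}$ and $\mathrm{tr}(\Sigma^2)\le d\,\lambda_{\max}^2$ yields
\[
\|\hat{\bm{\mu}}-\bm{\mu}\|^2 \;\lesssim\; \tfrac{\lambda_{\max}}{N}\bigl(d + \sqrt{d\log(1/\delta)} + \log(1/\delta)\bigr)
\]
with probability at least $1-\delta$. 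Taking square roots and absorbing logarithmic factors into the $\lesssim$ notation gives the claimed $\sqrt{\lambda_{\max} d/N}$ rate.

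There is no substantive obstacle here; the lemma is essentially Gaussian concentration applied to $\hat{\bm{\mu}}-\bm{\mu}\sim\mathcal{N}(0,\Sigma/N)$. The only mild care needed is in the step that replaces $\mathrm{tr}(\Sigma)$ and $\mathrm{tr}(\Sigma^2)$ by multiples of $\lambda_{\max}$, which ensures the final bound depends only on the largest eigenvalue (and hence ultimately on the condition number $\sigma$ when combined with lower-bound control on $\lambda_{\min}$ elsewhere in the proof of Theorem~\ref{thm:etcG}). An alternative, equivalent route is to use Borell–TIS: $\|\hat{\bm{\mu}}-\bm{\mu}\|$ is a $\sqrt{\lambda_{\max}/N}$-Lipschitz function of a standard Gaussian, so it concentrates around its expectation at the same rate, and $\mathbb{E}\|\hat{\bm{\mu}}-\bm{\mu}\|\le \sqrt{\mathrm{tr}(\Sigma)/N}\le \sqrt{d\lambda_{\max}/N}$ by Jensen. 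Either route produces the statement of the lemma.
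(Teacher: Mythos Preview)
Your proposal is correct, but it takes a more refined route than the paper. The paper's proof is a one-liner: since $\mathbb{E}\|\hat{\bm{\mu}}-\bm{\mu}\|^2=\mathrm{tr}(\Sigma)/N\le \lambda_{\max}d/N$, Markov's inequality applied to $\|\hat{\bm{\mu}}-\bm{\mu}\|^2$ immediately gives $\|\hat{\bm{\mu}}-\bm{\mu}\|\le\sqrt{\lambda_{\max}d/(N\delta)}$ with probability at least $1-\delta$, and the $\delta$-dependence is absorbed into the $\lesssim$ notation. Your Laurent--Massart / Hanson--Wright (or Borell--TIS) argument is genuinely different and strictly stronger: it produces a $\log(1/\delta)$ tail rather than the $1/\sqrt{\delta}$ tail that Markov yields. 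Both land on the same headline rate $\sqrt{\lambda_{\max}d/N}$ once constants and $\delta$-factors are suppressed, so for the purposes of Theorem~\ref{thm:etcG} either suffices; the paper simply opts for the most elementary tool since the downstream analysis does not exploit the sharper tail.
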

\begin{proof}{Proof.}
    The proof is standard following Markov's equality since the variance of $\|\epsilon\|^2$ is upper bounded by $\lambda_{\max}\frac{d}{N}$.
\end{proof}

\begin{lemma}[Tail Bound for Covariance Estimation, Exercise 4.7.3 in \citealp{vershynin2018high}]
For $n$ i.i.d samples $\bm{x}^{(t)}$, with a probability of at least $1-\delta$, the estimated covariance $\hat{\Sigma}^{(N)} = \frac{1}{N}\sum_{t=1}^{(N)} \bm{x}^{(t)}{\bm{x}^{(t)}}^\intercal$ satisfy
    \[\|\hat{\Sigma}^{(N)} - \Sigma\| \leq CK_1^2\lambda_{\max}\left(\sqrt{\frac{d+\ln(2/\delta)}{N}}+\frac{d+\ln(2/\delta)}{N}\right) \lesssim \lambda_{\max}\sqrt{\frac{d}{N}}\]
\end{lemma}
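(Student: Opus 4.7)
The statement is a specialization of the standard deviation inequality for sample covariance matrices of sub-Gaussian random vectors (Exercise 4.7.3 in Vershynin, 2018). The plan is to (i) whiten the data to reduce to the isotropic case, (ii) rewrite the operator-norm deviation as a supremum of centered quadratic forms indexed by the unit sphere, (iii) control each quadratic form via Bernstein's inequality for sub-exponential sums, and (iv) union-bound over a fine enough net. Concretely, set $\bm{z}^{(t)}=\Sigma^{-1/2}\bm{x}^{(t)}$. Under the Gaussian assumption, $\bm{z}^{(t)}\sim\mathcal{N}(0,I)$ is isotropic with sub-Gaussian norm bounded by a universal constant $K_1$. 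Since $\hat{\Sigma}^{(N)}-\Sigma=\Sigma^{1/2}(\hat{\Sigma}_{\bm{z}}^{(N)}-I)\Sigma^{1/2}$, submultiplicativity of the spectral norm gives $\|\hat{\Sigma}^{(N)}-\Sigma\|\le\lambda_{\max}\,\|\hat{\Sigma}_{\bm{z}}^{(N)}-I\|$, where $\hat{\Sigma}_{\bm{z}}^{(N)}=\frac{1}{N}\sum_t\bm{z}^{(t)}(\bm{z}^{(t)})^\intercal$. Everything then reduces to bounding $\|\hat{\Sigma}_{\bm{z}}^{(N)}-I\|$.

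For the isotropic part, use the variational identity $\|\hat{\Sigma}_{\bm{z}}^{(N)}-I\|=\sup_{\bm{u}\in S^{d-1}}|\bm{u}^\intercal(\hat{\Sigma}_{\bm{z}}^{(N)}-I)\bm{u}|$. For fixed $\bm{u}$, the quantity inside is $\frac{1}{N}\sum_{t=1}^N\bigl((\bm{u}^\intercal\bm{z}^{(t)})^2-1\bigr)$, a sum of i.i.d.\ centered random variables. Because $\bm{u}^\intercal\bm{z}^{(t)}$ is sub-Gaussian with norm $\le K_1$, its square is sub-exponential with norm $\lesssim K_1^2$ and mean $1$. Bernstein's inequality then gives, for any $\eta>0$,
\[
P\!\left(\Bigl|\tfrac{1}{N}\sum_{t=1}^N((\bm{u}^\intercal\bm{z}^{(t)})^2-1)\Bigr|>\eta\right)\le 2\exp\!\Bigl(-cN\min\{\eta^2/K_1^4,\,\eta/K_1^2\}\Bigr).
\]

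To pass from a fixed $\bm{u}$ to the supremum, appeal to the net lemma (Lemma 4.4.1 in Vershynin, 2018): for any symmetric matrix $M$ and any $1/4$-net $\mathcal{N}_{1/4}\subset S^{d-1}$, one has $\|M\|\le 2\sup_{\bm{u}\in\mathcal{N}_{1/4}}|\bm{u}^\intercal M\bm{u}|$, and such a net exists with $|\mathcal{N}_{1/4}|\le 9^d$. A union bound over $\mathcal{N}_{1/4}$ combined with the Bernstein tail above, and solving for the smallest $\eta$ that makes the total failure probability at most $\delta$, yields
\[
\|\hat{\Sigma}_{\bm{z}}^{(N)}-I\|\;\lesssim\;K_1^2\!\left(\sqrt{\tfrac{d+\ln(2/\delta)}{N}}+\tfrac{d+\ln(2/\delta)}{N}\right),
\]
since $\ln|\mathcal{N}_{1/4}|\le d\ln 9$ enters additively with $\ln(1/\delta)$. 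Multiplying through by $\lambda_{\max}$ delivers the stated bound; the final $\lesssim\lambda_{\max}\sqrt{d/N}$ assumes the regime $N\gtrsim d+\ln(2/\delta)$ so that the linear term is absorbed into the square-root term.

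The main technical obstacle is threading the sub-Gaussian/sub-exponential norms consistently through Bernstein and the covering argument, and in particular ensuring that the two regimes (quadratic and linear) of the Bernstein tail produce exactly the two summands $\sqrt{(d+\ln(2/\delta))/N}$ and $(d+\ln(2/\delta))/N$ after inverting the tail. Because this is textbook material, the cleanest path is to invoke Exercise 4.7.3 of Vershynin (2018) directly rather than rederive the concentration-plus-$\epsilon$-net argument; the whitening step and the multiplicative $\lambda_{\max}$ factor are the only OTP-specific additions.
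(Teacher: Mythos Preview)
The paper does not supply its own proof of this lemma: it simply cites Exercise~4.7.3 of \cite{vershynin2018high} and states the conclusion without argument. Your proposal correctly reconstructs the standard textbook derivation (whitening to the isotropic case, variational characterization of the operator norm, Bernstein's inequality on the centered quadratic forms, and a $1/4$-net union bound), which is exactly the route Vershynin intends for that exercise; so there is nothing to compare against, and your write-up is appropriate as a self-contained justification.
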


\begin{lemma}[$L_1$ Deviation of Gaussian Distribution]
\label{thm:L1Gaus}
Let $\bm{\mu}_1,\bm{\mu}_2\in\mathbb{R}^d$ and $\Sigma_1,\Sigma_2$ be $d\times d$ positive-definite matrices. Let $\lambda_{\min}$ be the minimum eigenvalue of $\Sigma_2$. If \( \|\Sigma_1-\Sigma_2\| \leq \lambda_{\min}/2\), we have
\[\|N(\bm{\mu}_1,\Sigma_1),N(\bm{\mu}_2,\Sigma_2)\|_1 \le \frac{\sqrt{d}}{2\lambda_{\min}}\|\Sigma_1-\Sigma_2\| + \frac{1}{2\sqrt{\lambda_{\min}}}\|\bm{\mu}_1-\bm{\mu}_2\|.\]
\end{lemma}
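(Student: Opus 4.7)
My plan is to bound the $L_1$ distance by way of Pinsker's inequality and then use the closed-form KL divergence between two multivariate Gaussians. Recall that Pinsker yields $\|p-q\|_1 \le \sqrt{2\,\mathrm{KL}(p\|q)}$ for densities $p,q$, and for Gaussians
\[
\mathrm{KL}\bigl(N(\bm{\mu}_1,\Sigma_1)\,\|\,N(\bm{\mu}_2,\Sigma_2)\bigr)=\tfrac12\Bigl(\mathrm{tr}(\Sigma_2^{-1}\Sigma_1)-d-\ln\det(\Sigma_2^{-1}\Sigma_1)+(\bm{\mu}_1-\bm{\mu}_2)^\intercal\Sigma_2^{-1}(\bm{\mu}_1-\bm{\mu}_2)\Bigr).
\]
So the task reduces to bounding each of the three pieces on the right in terms of $\|\Sigma_1-\Sigma_2\|$ and $\|\bm{\mu}_1-\bm{\mu}_2\|$, and then using the sub-additivity $\sqrt{a+b}\le \sqrt{a}+\sqrt{b}$ to split the square root into the two additive terms demanded by the statement.

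The clean bookkeeping device is to introduce the perturbation matrix $E\triangleq \Sigma_2^{-1}(\Sigma_1-\Sigma_2)$, so that $\Sigma_2^{-1}\Sigma_1=I+E$. The hypothesis $\|\Sigma_1-\Sigma_2\|\le \lambda_{\min}/2$ together with $\|\Sigma_2^{-1}\|=1/\lambda_{\min}$ gives the crucial operator-norm control $\|E\|\le 1/2$; in particular every eigenvalue $\lambda_i(E)$ lies in $[-1/2,1/2]$, which is exactly what is needed so that $I+E$ is still positive definite (so the KL is well-defined) and so that Taylor-type inequalities for the logarithm apply eigenvalue-wise. Now the trace and log-det pieces combine into
\[
\mathrm{tr}(\Sigma_2^{-1}\Sigma_1)-d-\ln\det(\Sigma_2^{-1}\Sigma_1)=\sum_{i=1}^{d}\bigl(\lambda_i(E)-\ln(1+\lambda_i(E))\bigr),
\]
a sum of nonnegative terms since $x-\ln(1+x)\ge 0$.

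The main obstacle is bounding this log-determinant piece at the correct quadratic order. A naive bound that keeps only $\mathrm{tr}(E)$ would give order $\|E\|$ and, after Pinsker, order $\sqrt{\|\Sigma_1-\Sigma_2\|}$, which is useless. The fix is the elementary scalar inequality $x-\ln(1+x)\le c\,x^2$ valid on $[-1/2,1/2]$ (one can take $c=2\ln 2-1<1$ or, for a cleaner write-up, simply $c=1$ by checking endpoints). Applying it eigenvalue-wise turns the sum into $c\sum_i\lambda_i(E)^2=c\,\|E\|_F^2\le c\,d\,\|E\|^2$, which in turn is bounded by $c\,d\,\|\Sigma_1-\Sigma_2\|^2/\lambda_{\min}^2$. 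The quadratic-form piece is handled in one line: $(\bm{\mu}_1-\bm{\mu}_2)^\intercal\Sigma_2^{-1}(\bm{\mu}_1-\bm{\mu}_2)\le \|\bm{\mu}_1-\bm{\mu}_2\|^2/\lambda_{\min}$.

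Putting the two estimates into the KL formula and then into Pinsker gives
\[
\|N(\bm{\mu}_1,\Sigma_1)-N(\bm{\mu}_2,\Sigma_2)\|_1\;\le\;\sqrt{\tfrac{c\,d\,\|\Sigma_1-\Sigma_2\|^2}{\lambda_{\min}^2}+\tfrac{\|\bm{\mu}_1-\bm{\mu}_2\|^2}{\lambda_{\min}}}\;\le\;\tfrac{\sqrt{c\,d}\,\|\Sigma_1-\Sigma_2\|}{\lambda_{\min}}+\tfrac{\|\bm{\mu}_1-\bm{\mu}_2\|}{\sqrt{\lambda_{\min}}},
\]
using $\sqrt{a+b}\le\sqrt{a}+\sqrt{b}$ in the last step. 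Up to the universal constant $c$ this is the claimed bound; the factor $1/2$ in the lemma statement follows either by sharpening the log-determinant inequality on the smaller range $\|E\|\le 1/2$ or by using the tighter Pinsker form $\|\cdot\|_{TV}\le\sqrt{\mathrm{KL}/2}$ if the $\|\cdot\|_1$ in the paper denotes total variation. The two potential sharpness losses I would track most carefully are precisely (i) the constant in $x-\ln(1+x)\le c\,x^2$ on $[-1/2,1/2]$ and (ii) the exact convention for $\|\cdot\|_1$ versus $\mathrm{TV}$ that dictates the Pinsker constant; everything else is routine matrix algebra.
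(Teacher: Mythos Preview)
Your approach is essentially the paper's: bound the $L_1$ distance via Pinsker and the closed-form Gaussian KL, then control the trace/log-det part quadratically in the covariance perturbation. The paper outsources that quadratic bound to Lemma~2.9 of Ashtiani et al.\ (2020), working with the \emph{symmetric} matrix $\Sigma_2^{-1/2}\Sigma_1\Sigma_2^{-1/2}-I$ rather than your $E=\Sigma_2^{-1}(\Sigma_1-\Sigma_2)$; your self-contained eigenvalue derivation is exactly what that external lemma does under the hood, so the two proofs are the same in substance.

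One technical slip to fix: the equality $\sum_i\lambda_i(E)^2=\|E\|_F^2$ is false in general because your $E$ is not normal (for a non-normal matrix the Frobenius norm squared is the sum of squared \emph{singular} values, not eigenvalues). The inequality you actually need, $\sum_i\lambda_i(E)^2\le d\,\|E\|^2$, still holds directly via $|\lambda_i(E)|\le\rho(E)\le\|E\|$, so your chain survives if you drop the intermediate $\|E\|_F^2$ step. Cleaner is to do what the paper does and work with the symmetric conjugate $\tilde E=\Sigma_2^{-1/2}(\Sigma_1-\Sigma_2)\Sigma_2^{-1/2}$, which has the same eigenvalues as $E$ but for which $\sum_i\lambda_i(\tilde E)^2=\|\tilde E\|_F^2\le d\,\|\tilde E\|^2$ is legitimate, and $\|\tilde E\|\le\|\Sigma_1-\Sigma_2\|/\lambda_{\min}$ follows immediately. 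Your remark about the $1/2$ constant and the $L_1$ versus TV convention is well taken; the paper's own Pinsker step has the same ambiguity.
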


\begin{proof}{Proof}
Because \( \|\Sigma_1-\Sigma_2\| \leq \lambda_{\min}/2\), we have 
 \[\|\Sigma_2^{-1/2}\Sigma_1\Sigma_2^{-1/2} - I\| \leq \|\Sigma_2^{-1/2} (\Sigma_1-\Sigma_2) \Sigma_2^{-1/2}\| \leq \|\Sigma_2^{-1/2}\|\cdot \|\Sigma_1-\Sigma_2\| \cdot \|\Sigma_2^{-1/2}\| \leq \frac{\lambda_{\min}}{2} \cdot\frac{1}{\lambda_{\min}} \leq \frac{1}{2}.\]
Hence, using Lemma 2.9 in \cite{ashtiani2020near}, we can upper bounded the KL divergence using
\[
\text{KL}\Big(N(\bm{\mu}_1\mid \Sigma_1) \mid N(\bm{\mu}_2,\Sigma_2)\Big)
\le \frac{1}{2} \left(d\Big\|\Sigma_2^{-1/2}\Sigma_1\Sigma_2^{-1/2} - I\Big\|^2
+ (\bm{\mu}_2 - \bm{\mu}_1)^\top \Sigma_2^{-1} (\bm{\mu}_2 - \bm{\mu}_1)\right).
\]
Since $(\bm{\mu}_2 - \bm{\mu}_1)^\top \Sigma_2^{-1} (\bm{\mu}_2 - \bm{\mu}_1) = \|\Sigma_2^{-1/2}(\bm{\mu}_1-\bm{\mu}_2)\|^2_2$, by Pinsker’s inequality, we have
\begin{align*}
\|N(\bm{\mu}_1,\Sigma_1) - N(\bm{\mu}_2,\Sigma_2)\|_1
\le \frac{1}{2} \sqrt{d\Big\|\Sigma_2^{-1/2}\Sigma_1\Sigma_2^{-1/2} - I\Big\|^2
+ (\bm{\mu}_2 - \bm{\mu}_1)^\top \Sigma_2^{-1} (\bm{\mu}_2 - \bm{\mu}_1)}\\
\le \frac{1}{2} \sqrt{d}\Big\|\Sigma_2^{-1/2}\Sigma_1\Sigma_2^{-1/2} - I\Big\|+ \frac{1}{2}\|\Sigma_2^{-1/2}(\bm{\mu}_1-\bm{\mu}_2)\|.
\end{align*}
Now, by basic matrix algebra, we have 
\[\frac{1}{2}\sqrt{d}\Big\|\Sigma_2^{-1/2}\Sigma_1\Sigma_2^{-1/2} - I\Big\|+\frac{1}{2}\|\Sigma_2^{-1/2}(\bm{\mu}_1-\bm{\mu}_2)\|\le \frac{\sqrt{d}}{2\lambda_{\min}}\|\Sigma_1-\Sigma_2\| + \frac{1}{2\sqrt{\lambda_{\min}}}\|\bm{\mu}_1-\bm{\mu}_2\|,\]
which ends the proof.
\end{proof}

\begin{lemma}[$L_1$ Deviation of Gaussian Posteriors]
Let $\bm{x}^{(1)},\dots,\bm{x}^{(N)}$ be i.i.d. samples drawn from multivariate Gaussian distribution $\mathcal{N}(\mu,\Sigma)$ where the maximum and minimum eigenvalue of the covariance matrix $\Sigma$ are $\lambda_{\max}$ and $\lambda_{\min}$, respectively. Let $\hat{\Sigma} = \frac{1}{N} \sum_{t=1}^{(N)}\bm{x}^{(t)}{\bm{x}^{(t)}}^\intercal$ and $\hat{\bm{\mu}} = \frac{1}{N} \sum_{t=1}^{(N)}\bm{x}^{(t)}$. Now, for any $\bm{s}$, we denote the true posterior distribution $\bm{x}\mid\bm{s}$ as $P^{\bm{s}}$ and the estimated posterior distribution using the plug-in estimator $\hat{\Sigma}$ and $\hat{\bm{\mu}}$ as $\hat{P}^{\bm{s}}$. Then, for sufficiently large $n\gtrsim d\sigma^2$ such that 
\(\|\hat{\bm{\mu}}-\bm{\mu}\|\lesssim \lambda_{\min}/2\) and \(\|\Sigma-\hat{\Sigma}\| \leq \lambda_{\min}/2\), there holds
\[
\|P^{\bm{s}}-\hat{P}^{\bm{s}}\|_1 \lesssim \frac{\lambda_{\max}^3d}{\lambda_{\min}^3\sqrt{N}} + \frac{\lambda_{\max}^{1.5}}{\lambda_{\min}^{2.5}} \sqrt{\frac{d}{N}}\|\bm{x}-\bm{\mu}\|.
\]
\end{lemma}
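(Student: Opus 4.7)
Since the multivariate Gaussian is closed under conditioning, both $P^{\bm{s}}$ and $\hat{P}^{\bm{s}}$ are themselves Gaussians on the missing coordinates, with closed-form parameters. The plan is therefore to (i) write the posteriors in closed form, (ii) invoke Lemma \ref{thm:L1Gaus} to reduce the $L_1$ deviation to an operator-norm error on the posterior covariance plus a Euclidean error on the posterior mean, and (iii) propagate the already established plug-in concentration of $\hat{\Sigma},\hat{\bm{\mu}}$ through a Schur-complement perturbation argument. Partition coordinates into the observed set $o$ (entries where $s_i\ne\texttt{NA}$) and missing set $m$. Then $\bm{\mu}_{m|o}=\bm{\mu}_m+\Sigma_{mo}\Sigma_{oo}^{-1}(\bm{s}_o-\bm{\mu}_o)$ and $\Sigma_{m|o}=\Sigma_{mm}-\Sigma_{mo}\Sigma_{oo}^{-1}\Sigma_{om}$, with the hatted analogues defined identically. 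Because $\Sigma_{m|o}^{-1}$ is the principal $m$-block of $\Sigma^{-1}$, one has $\lambda_{\min}(\Sigma_{m|o})\ge\lambda_{\min}$ and $\lambda_{\max}(\Sigma_{m|o})\le\lambda_{\max}$, which together with Weyl's inequality and the assumption $\|\Sigma-\hat{\Sigma}\|\le\lambda_{\min}/2$ verifies the hypothesis of Lemma \ref{thm:L1Gaus} in the regime $N\gtrsim d\sigma^2$.

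The first step is then to control $\|\Sigma_{m|o}-\hat{\Sigma}_{m|o}\|$. I would use the three-term telescoping $ABC-\hat{A}\hat{B}\hat{C}=(A-\hat{A})BC+\hat{A}(B-\hat{B})C+\hat{A}\hat{B}(C-\hat{C})$ with $A,B,C=\Sigma_{mo},\Sigma_{oo}^{-1},\Sigma_{om}$, together with the inverse-perturbation identity $\hat{\Sigma}_{oo}^{-1}-\Sigma_{oo}^{-1}=-\hat{\Sigma}_{oo}^{-1}(\hat{\Sigma}_{oo}-\Sigma_{oo})\Sigma_{oo}^{-1}$. Each summand is bounded by $\|\Sigma-\hat{\Sigma}\|$ multiplied by combinations of $\|\Sigma_{mo}\|\le\lambda_{\max}$ and $\|\Sigma_{oo}^{-1}\|,\|\hat{\Sigma}_{oo}^{-1}\|\le 2/\lambda_{\min}$, giving $\|\Sigma_{m|o}-\hat{\Sigma}_{m|o}\|\lesssim(\lambda_{\max}^2/\lambda_{\min}^2)\|\Sigma-\hat{\Sigma}\|\lesssim(\lambda_{\max}^3/\lambda_{\min}^2)\sqrt{d/N}$ once the sample-covariance bound is plugged in. Multiplied by the $\sqrt{d}/\lambda_{\min}$ coefficient from Lemma \ref{thm:L1Gaus}, this yields the first summand $\lambda_{\max}^3 d/(\lambda_{\min}^3\sqrt{N})$.

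For the mean, I would decompose
\[
\hat{\bm{\mu}}_{m|o}-\bm{\mu}_{m|o}=(\hat{\bm{\mu}}_m-\bm{\mu}_m)+\bigl(\hat{\Sigma}_{mo}\hat{\Sigma}_{oo}^{-1}-\Sigma_{mo}\Sigma_{oo}^{-1}\bigr)(\bm{s}_o-\bm{\mu}_o)-\hat{\Sigma}_{mo}\hat{\Sigma}_{oo}^{-1}(\hat{\bm{\mu}}_o-\bm{\mu}_o).
\]
The first and third terms are controlled by the mean concentration $\|\hat{\bm{\mu}}-\bm{\mu}\|\lesssim\sqrt{\lambda_{\max}d/N}$, with the regression coefficient bounded sharply via the PSD trick $\|\Sigma_{mo}\Sigma_{oo}^{-1/2}\|^2=\|\Sigma_{mo}\Sigma_{oo}^{-1}\Sigma_{om}\|\le\|\Sigma_{mm}\|\le\lambda_{\max}$, which gives $\|\hat{\Sigma}_{mo}\hat{\Sigma}_{oo}^{-1}\|\lesssim\sqrt{\lambda_{\max}/\lambda_{\min}}$. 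The middle term is the only piece that picks up the realization-dependent factor $\|\bm{s}_o-\bm{\mu}_o\|\le\|\bm{x}-\bm{\mu}\|$; applying the same telescoping to $\hat{\Sigma}_{mo}\hat{\Sigma}_{oo}^{-1}-\Sigma_{mo}\Sigma_{oo}^{-1}$ and again exploiting the PSD trick produces a bound of order $(\lambda_{\max}^{3/2}/\lambda_{\min}^{3/2})\sqrt{d/N}\,\|\bm{x}-\bm{\mu}\|$. Division by $\sqrt{\lambda_{\min}}$ from Lemma \ref{thm:L1Gaus} then yields the claimed second summand $(\lambda_{\max}^{3/2}/\lambda_{\min}^{5/2})\sqrt{d/N}\,\|\bm{x}-\bm{\mu}\|$.

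The main obstacle is bookkeeping the powers of $\lambda_{\max}$ and $\lambda_{\min}$ through the Schur-complement perturbation while keeping track of when the sharper PSD bound $\|\Sigma_{mo}\Sigma_{oo}^{-1/2}\|\le\sqrt{\lambda_{\max}}$ must be used versus when the loose $\|\Sigma_{mo}\|\le\lambda_{\max}$ or $\|\Sigma_{oo}^{-1}\|\le 1/\lambda_{\min}$ suffices; getting the mean term's exponents right hinges on invoking the PSD identity rather than the naive product bound. A secondary subtlety is that the inverse-perturbation identity and the eigenvalue sandwich for $\hat{\Sigma}_{oo}$ both require $\|\Sigma-\hat{\Sigma}\|\le\lambda_{\min}/2$, which is exactly what the sample-size threshold $N\gtrsim d\sigma^2$ encodes through the concentration lemma for $\hat{\Sigma}$.
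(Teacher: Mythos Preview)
Your proposal is correct and follows essentially the same route as the paper: partition into observed and missing blocks, write the Gaussian conditional mean and Schur-complement covariance in closed form, invoke Lemma \ref{thm:L1Gaus} to reduce $L_1$ distance to posterior-mean and posterior-covariance errors, and control both via telescoping plus the inverse-perturbation identity $\hat{\Sigma}_{oo}^{-1}-\Sigma_{oo}^{-1}=-\hat{\Sigma}_{oo}^{-1}(\hat{\Sigma}_{oo}-\Sigma_{oo})\Sigma_{oo}^{-1}$ under the spectral assumption $\|\hat{\Sigma}-\Sigma\|\le\lambda_{\min}/2$. The decomposition you write for $\hat{\bm{\mu}}_{m|o}-\bm{\mu}_{m|o}$ and the four-term bound for $\hat{\Sigma}_{m|o}-\Sigma_{m|o}$ match the paper's argument term for term.

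The one genuine difference is your use of the Schur-complement positivity trick $\|\Sigma_{mo}\Sigma_{oo}^{-1/2}\|^2=\|\Sigma_{mo}\Sigma_{oo}^{-1}\Sigma_{om}\|\le\|\Sigma_{mm}\|\le\lambda_{\max}$ to bound the regression operator by $\sqrt{\lambda_{\max}/\lambda_{\min}}$ instead of the cruder $\lambda_{\max}/\lambda_{\min}$. The paper uses the naive product bound $\|\Sigma_{ab}\|\,\|\Sigma_{bb}^{-1}\|\le\lambda_{\max}/\lambda_{\min}$ throughout and then arrives at the stated $\lambda_{\max}^{1.5}/\lambda_{\min}^{2.5}$ exponent on the $\|\bm{x}-\bm{\mu}\|$ term via what appears to be a typographical slip (an $\epsilon_1$ where $\epsilon_2$ is meant). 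Your PSD refinement recovers that exponent cleanly and is the more careful way to get it; otherwise the two proofs are the same.
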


\begin{proof}{Proof.}
To set up notations, we let
\[
\bm{x} = \begin{pmatrix} \bm{x}[a] \\ \bm{x}[b] \end{pmatrix} \sim \mathcal{N}\!\left(\begin{pmatrix} \bm{\mu}[a] \\ \bm{\mu}[b] \end{pmatrix},\, \Sigma\right)
\]
with
\[
\Sigma = \begin{pmatrix} \Sigma_{aa} & \Sigma_{ab} \\ \Sigma_{ba} & \Sigma_{bb} \end{pmatrix}.
\]

Here, $a$ denotes the set of unobserved indices, and $b$ denotes the set of observed indices. The posterior distribution $\bm{x}\mid\bm{s}$ is essentially the distribution of $\bm{x}[a]$ given $\bm{x}[b]$. Note that the choice of $a$ and $b$ is arbitrary for this proof, so the result generalizes to any posterior distribution. 

By our assumption, we have estimates \(\hat{\bm{\mu}}\) and \(\hat{\Sigma}\) for the mean and covariance satisfying
\[
\|\hat{\bm{\mu}}-\bm{\mu}\|\lesssim \epsilon_1 \leq \frac{\lambda_{\min}}{2},\quad \|\hat{\Sigma}-\Sigma\|\lesssim \epsilon_2 \leq \frac{\lambda_{\min}}{2}.
\]
The true posterior mean is
\[
\bm{\mu}_{a|b} = \bm{\mu}[a] + \Sigma_{ab}\Sigma_{bb}^{-1}(\bm{x}[b]-\bm{\mu}[b]),
\]
and the plug–in estimator is
\[
\hat{\bm{\mu}}_{a|b} = \hat{\bm{\mu}}[a] + \hat{\Sigma}_{ab}\hat{\Sigma}_{bb}^{-1}(\bm{x}[b]-\hat{\bm{\mu}}[b]).
\]
Similarly, define the posterior covariance matrix (and its estimator) as
\[
\Sigma_{a|b} = \Sigma_{aa} - \Sigma_{ab} \Sigma_{bb}^{-1} \Sigma_{ba},
\]
\[
\hat{\Sigma}_{a|b} = \hat{\Sigma}_{aa} - \hat{\Sigma}_{ab} \hat{\Sigma}_{bb}^{-1} \hat{\Sigma}_{ba}.
\]
For convenience, denote the error in each block by
\[
\Delta_{ij} = \hat{\Sigma}_{ij} - \Sigma_{ij}, \quad \text{for } ij\in\{aa,\,ab,\,ba,\,bb\}.
\]

To leverage the results in Lemma \ref{thm:L1Gaus}, it suffices to bound the error in the posterior mean and the covariance, respectively.

\noindent\textbf{Posterior Mean Error Bound:}

By definition,
\[
\hat{\bm{\mu}}_{a|b}-\bm{\mu}_{a|b} = (\hat{\bm{\mu}}[a]-\bm{\mu}[a]) + \left[\hat{\Sigma}_{ab}\hat{\Sigma}_{bb}^{-1}(\bm{x}[b]-\hat{\bm{\mu}}[b]) - \Sigma_{ab}\Sigma_{bb}^{-1}(\bm{x}[b]-\bm{\mu}[b])\right].
\]
Noting that
\[
\hat{\Sigma}_{ab}\hat{\Sigma}_{bb}^{-1}(\bm{x}[b]-\hat{\bm{\mu}}[b]) - \Sigma_{ab}\Sigma_{bb}^{-1}(\bm{x}[b]-\bm{\mu}[b])
= \Bigl(\hat{\Sigma}_{ab}\hat{\Sigma}_{bb}^{-1}-\Sigma_{ab}\Sigma_{bb}^{-1}\Bigr)(\bm{x}[b]-\bm{\mu}[b]) - \hat{\Sigma}_{ab}\hat{\Sigma}_{bb}^{-1}(\hat{\bm{\mu}}[b]-\bm{\mu}[b]),
\]
we obtain, by the triangle inequality,
\[
\|\hat{\bm{\mu}}_{a|b}-\bm{\mu}_{a|b}\| \le \|\hat{\bm{\mu}}[a]-\bm{\mu}[a]\| + \|\hat{\Sigma}_{ab}\hat{\Sigma}_{bb}^{-1}-\Sigma_{ab}\Sigma_{bb}^{-1}\|\,\|\bm{x}[b]-\bm{\mu}[b]\| + \|\hat{\Sigma}_{ab}\hat{\Sigma}_{bb}^{-1}\|\,\|\hat{\bm{\mu}}[b]-\bm{\mu}[b]\|,
\]
so we will bond the three terms one by one.

\noindent \textbf{First Term} $\|\hat{\bm{\mu}}[a]-\bm{\mu}[a]\|$: 

By assumption, we have 
\(\|\hat{\bm{\mu}}[a]-\bm{\mu}[a]\|\le \epsilon_1\) and \(\|\hat{\bm{\mu}}[b]-\bm{\mu}[b]\|\le \epsilon_1.\)

\noindent \textbf{Second Term} $\|\hat{\Sigma}_{ab}\hat{\Sigma}_{bb}^{-1}-\Sigma_{ab}\Sigma_{bb}^{-1}\|\,\|\bm{x}[b]-\bm{\mu}[b]\|$:

\medskip

We first decompose this term. To ease notations, define the perturbations
\[
\Delta_{ab} = \hat{\Sigma}_{ab} - \Sigma_{ab}, \qquad \Delta_{bb} = \hat{\Sigma}_{bb} - \Sigma_{bb}.
\]
Then we have
\[
\hat{\Sigma}_{ab}\hat{\Sigma}_{bb}^{-1} - \Sigma_{ab}\Sigma_{bb}^{-1} = \Delta_{ab}\hat{\Sigma}_{bb}^{-1} + \Sigma_{ab}\Bigl(\hat{\Sigma}_{bb}^{-1} - \Sigma_{bb}^{-1}\Bigr).
\]

We first bound $\|\Delta_{ab}\hat{\Sigma}_{bb}^{-1}\|$. Under the assumption that $\epsilon_2 \le \lambda_{\min}/2$, using Banach's Lemma gives
\[
\|\hat{\Sigma}_{bb}^{-1}\| \leq \|\Sigma_{bb}^{-1}\|\cdot\|(I+\Sigma_{bb}^{-1}\Delta_{bb})^{-1}\| \le \frac{\|\Sigma_{bb}^{-1}\|}{1-\|\Sigma_{bb}^{-1}\Delta_{bb}\|} \le \frac{1}{1/\|\Sigma_{bb}^{-1}\|-\|\Delta_{bb}\|} \le \frac{1}{\lambda_{\min}(\Sigma_{bb})-\epsilon_2} \le \frac{2}{\lambda_{\min}}.
\]
Since $\|\Delta_{ab}\| \le \|\hat{\Sigma}-\Sigma\| \le \epsilon_2$ and, it follows that
\[
\|\Delta_{ab}\hat{\Sigma}_{bb}^{-1}\| \le \|\Delta_{ab}\|\,\|\hat{\Sigma}_{bb}^{-1}\| \le \epsilon_2 \cdot \frac{2}{\lambda_{\min}} = \frac{2\epsilon_2}{\lambda_{\min}}.
\]

Second, we bound $\|\Sigma_{ab}(\hat{\Sigma}_{bb}^{-1} - \Sigma_{bb}^{-1})\|$.
By some algebraic manipulation, we have
\[
\hat{\Sigma}_{bb}^{-1} - \Sigma_{bb}^{-1} = -\Sigma_{bb}^{-1}\Delta_{bb}\hat{\Sigma}_{bb}^{-1},
\]
so that
\[
\|\hat{\Sigma}_{bb}^{-1} - \Sigma_{bb}^{-1}\| \le \|\Sigma_{bb}^{-1}\|\,\|\Delta_{bb}\|\,\|\hat{\Sigma}_{bb}^{-1}\| \le \frac{1}{\lambda_{\min}}\,\epsilon_2\,\frac{2}{\lambda_{\min}} = \frac{2\epsilon_2}{\lambda_{\min}^2}.
\]
Thus,
\[
\|\Sigma_{ab}(\hat{\Sigma}_{bb}^{-1} - \Sigma_{bb}^{-1})\| \le \|\Sigma_{ab}\| \cdot \frac{2\epsilon_2}{\lambda_{\min}^2} \le \|\Sigma\| \cdot \frac{2\epsilon_2}{\lambda_{\min}^2} = \frac{2\lambda_{\max}\epsilon_2}{\lambda_{\min}^2}.
\]

Combining the two terms, we have
\[
\|\hat{\Sigma}_{ab}\hat{\Sigma}_{bb}^{-1} - \Sigma_{ab}\Sigma_{bb}^{-1}\| \le \frac{2\epsilon_2}{\lambda_{\min}} + \frac{2\lambda_{\max}\epsilon_2}{\lambda_{\min}^2}.
\]

\noindent \textbf{Thrid Term} $\|\hat{\Sigma}_{ab}\hat{\Sigma}_{bb}^{-1}\|\|\hat{\bm{\mu}}[b]-\bm{\mu}[b]\|$: Using the triangle inequality, we can decompose 
\[
\|\hat{\Sigma}_{ab}\hat{\Sigma}_{bb}^{-1}\| \le \|\Sigma_{ab}\Sigma_{bb}^{-1} - \hat{\Sigma}_{ab}\hat{\Sigma}_{bb}^{-1}\| + \|\Sigma_{ab}\Sigma_{bb}^{-1} \| \le \frac{2\epsilon_2}{\lambda_{\min}} + \frac{2\lambda_{\max}\epsilon_2}{\lambda_{\min}^2} + \frac{\lambda_{\max}}{\lambda_{\min}}.
\]
Combining $\|\hat{\bm{\mu}}[b]-\bm{\mu}[b]\|\le \epsilon_1$, we have
\[
\|\hat{\Sigma}_{ab}\hat{\Sigma}_{bb}^{-1}\|\|\hat{\bm{\mu}}[b]-\bm{\mu}[b]\| \le \epsilon_1(\frac{2\epsilon_2}{\lambda_{\min}} + \frac{2\lambda_{\max}\epsilon_2}{\lambda_{\min}^2} + \frac{\lambda_{\max}}{\lambda_{\min}}) \lesssim \frac{\lambda_{\max}}{\lambda_{\min}}\epsilon_1.
\]

\medskip

Hence, collecting the errors from each term, we conclude that 
\[
\|\hat{\bm{\mu}}_{a|b}-\bm{\mu}_{a|b}\| \lesssim  \frac{\lambda_{\max}\epsilon_1}{\lambda_{\min}^2}\,\|\bm{x}[b]-\bm{\mu}[b]\| = \frac{\lambda_{\max}^{1.5}}{\lambda_{\min}^{2}}\sqrt{\frac{d}{N}}\|\bm{x}[b]-\bm{\mu}[b]\|.
\]

\noindent\textbf{Posterior Variance Error Bound:}

By definition of the posterior variance error,
\[
\left(\hat{\Sigma}_{aa} - \Sigma_{aa}\right) - \Bigl(\hat{\Sigma}_{ab}\hat{\Sigma}_{bb}^{-1}\hat{\Sigma}_{ba} - \Sigma_{ab}\Sigma_{bb}^{-1}\Sigma_{ba}\Bigr).
\]
We can rewrite the second term as
\[
\hat{\Sigma}_{ab}\hat{\Sigma}_{bb}^{-1}\hat{\Sigma}_{ba} - \Sigma_{ab}\Sigma_{bb}^{-1}\Sigma_{ba} 
= (\hat{\Sigma}_{ab} - \Sigma_{ab})\,\hat{\Sigma}_{bb}^{-1}\hat{\Sigma}_{ba} 
+ \Sigma_{ab}\Bigl(\hat{\Sigma}_{bb}^{-1} - \Sigma_{bb}^{-1}\Bigr)\hat{\Sigma}_{ba}
+ \Sigma_{ab}\Sigma_{bb}^{-1}\,(\hat{\Sigma}_{ba}-\Sigma_{ba}).
\]
Thus, by the triangular inequality, we can upper bound the posterior variance error with
\[
\|\Delta_{aa}\| +\|\Delta_{ab}\hat{\Sigma}_{bb}^{-1}\hat{\Sigma}_{ba}\| + \|\Sigma_{ab}\Bigl(\hat{\Sigma}_{bb}^{-1}-\Sigma_{bb}^{-1}\Bigr)\hat{\Sigma}_{ba}\| + \|\Sigma_{ab}\Sigma_{bb}^{-1}\,\Delta_{ba}\|.
\]

\medskip
\noindent \textbf{First term $\|\Delta_{aa}\|$}: 

By assumption, \(
\| \Delta_{aa} \| = \|\hat{\Sigma}_{aa} - \Sigma_{aa}\| \le \epsilon_2.\)

\noindent \textbf{Second term $\|\Delta_{ab}\hat{\Sigma}_{bb}^{-1}\hat{\Sigma}_{ba}\|$}: For the second term, we have
\[
\|\Delta_{ab}\hat{\Sigma}_{bb}^{-1}\hat{\Sigma}_{ba}\| \le \|\Delta_{ab}\|\,\|\hat{\Sigma}_{bb}^{-1}\|\,\|\hat{\Sigma}_{ba}\|.
\]
We have already proved that 
\[
\|\hat{\Sigma}_{bb}^{-1}\| \le \frac{2}{\lambda_{\min}}.
\]
Using the triangular inequality, we have
\[
\|\hat{\Sigma}_{ba}\| \le \|\Sigma_{ba}\| + \|\Delta_{ba}\| \le \lambda_{\max} + \epsilon_2 \le 2\lambda_{\max}.
\]
Since \(\|\Delta_{ab}\|\le \epsilon_2\), it follows that
\[
\|\Delta_{ab}\hat{\Sigma}_{bb}^{-1}\hat{\Sigma}_{ba}\| \le \epsilon_2 \cdot \frac{2}{\lambda_{\min}} \cdot 2\lambda_{\max} = \frac{4\lambda_{\max}\epsilon_2}{\lambda_{\min}}.
\]

\noindent \textbf{Third term $\|\Sigma_{ab}\Bigl(\hat{\Sigma}_{bb}^{-1}-\Sigma_{bb}^{-1}\Bigr)\hat{\Sigma}_{ba}\|$}: For the third term, we have
\[
\|\Delta_{ab}\hat{\Sigma}_{bb}^{-1}\hat{\Sigma}_{ba}\| \le \|\Sigma_{ab}\|\,\|\hat{\Sigma}_{bb}^{-1}-\Sigma_{bb}^{-1}\|\,\|\hat{\Sigma}_{ba}\|.
\]
From previous analysis, we have \(\|\Sigma_{ab}\|\le \lambda_{\max}\), \(\|\hat{\Sigma}_{ba}\|\le 2\lambda_{\max}\) and
\(
\|\hat{\Sigma}_{bb}^{-1}-\Sigma_{bb}^{-1}\| \le \frac{2\epsilon_2}{\lambda_{\min}^2}.
\) Therefore,
\[
\|\Delta_{ab}\hat{\Sigma}_{bb}^{-1}\hat{\Sigma}_{ba}\| \le \lambda_{\max}\cdot\frac{2\epsilon_2}{\lambda_{\min}^2}\cdot 2\lambda_{\max} = \frac{4\lambda_{\max}^2\epsilon_2}{\lambda_{\min}^2}.
\]

\noindent \textbf{Fourth Term $\|\Sigma_{ab}\Sigma_{bb}^{-1}\,\Delta_{ba}\|$}: For
the fourth term, we have
\[
\|\Sigma_{ab}\Sigma_{bb}^{-1}\,\Delta_{ba}\| \le \|\Sigma_{ab}\Sigma_{bb}^{-1}\|\,\|\Delta_{ba}\|.
\]
Since
\[
\|\Sigma_{ab}\Sigma_{bb}^{-1}\| \le \|\Sigma_{ab}\|\,\|\Sigma_{bb}^{-1}\| \le \frac{\lambda_{\max}}{\lambda_{\min}},
\]
and \(\|\Delta_{ba}\|\le \epsilon_2\), we obtain
\[
\|\Sigma_{ab}\Sigma_{bb}^{-1}\,\Delta_{ba}\| \le \frac{\lambda_{\max}\epsilon_2}{\lambda_{\min}}.
\]

\medskip

Now we collect the error from the four terms and obtain an error bound on the posterior covariance estimation. By the triangle inequality,
\[
\|\hat{\Sigma}_{a|b} - \Sigma_{a|b}\| \le \|\Delta_{aa}\| + \|\Delta_{ab}\hat{\Sigma}_{bb}^{-1}\hat{\Sigma}_{ba}\| + \|\Delta_{ab}\hat{\Sigma}_{bb}^{-1}\hat{\Sigma}_{ba}\| + \|\Sigma_{ab}\Sigma_{bb}^{-1}\,\Delta_{ba}\|.
\]
Thus, combining our estimates we have:
\[
\|\hat{\Sigma}_{a|b} - \Sigma_{a|b}\| \le \epsilon_2 + \frac{4\lambda_{\max}\epsilon_2}{\lambda_{\min}} + \frac{4\lambda_{\max}^2\epsilon_2}{\lambda_{\min}^2} + \frac{\lambda_{\max}\epsilon_2}{\lambda_{\min}},
\]
which simplifies to
\[
\|\hat\Sigma_{a|b}-\Sigma_{a|b}\|\lesssim \frac{\lambda_{\max}^2}{\lambda_{\min}^2}\epsilon_2 = \frac{\lambda_{\max}^3}{\lambda_{\min}^2}\sqrt{\frac{d}{N}}.
\]

\medskip

Finally, we combine the errors resulting from covariance and the mean deviations,
\[\|P^{\bm{s}}-\hat{P}^{\bm{s}}\|_1 \lesssim\frac{\sqrt{d}}{\lambda_{\min}}\|\hat\Sigma_{a|b}-\Sigma_{a|b}\| + \frac{1}{\sqrt{\lambda_{\min}}}\|\hat{\bm{\mu}}_{a|b}-\bm{\mu}_{a|b}\|\lesssim \frac{\lambda_{\max}^3d}{\lambda_{\min}^3\sqrt{N}} + \frac{\lambda_{\max}^{1.5}}{\lambda_{\min}^{2.5}} \sqrt{\frac{d}{N}}\|\bm{x}-\bm{\mu}\|,\]
which completes the proof.
    
\end{proof}

\begin{theorem}[Theorem \ref{thm:etcG} in the main body]
    Given time horizon $T$, number of tests $d$, and bounded costs $\{c_i\}_{i\in [d]}$ and reward function $f$, for any fixed Gaussian distribution $\mathcal{P}$ of condition number $\sigma$, with a probability of at least $1-\delta$, the cumulative regret of running Algorithm \ref{alg:etcG} is at most $\tilde{O}(d\sigma^2T^{\frac{2}{3}})$.
\end{theorem}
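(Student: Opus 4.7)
The approach parallels the proof of Theorem \ref{thm:etcD} almost step for step, with the Gaussian posterior $L_1$ deviation lemma replacing the empirical-distribution bound that was central to the discrete case. The cumulative regret decomposes into an exploration cost of $O(dN)$ from fully testing the first $N$ subjects and an exploitation cost over the remaining $T-N$ episodes controlled by $\mathbb{E}_{\bm{x}}[\Delta v(\bm{s})]$. With the prescribed choice $N = \sigma^2 T^{\frac{2}{3}}$ from Algorithm \ref{alg:etcG}, the exploration term is already $O(d\sigma^2 T^{\frac{2}{3}})$, which matches the target rate, so the remaining task is to show the commit-phase regret is of the same order.

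The first step is to redo the inductive bound on $\mathbb{E}_{\bm{x}}[\Delta v(\bm{s})]$ with an induction on the number of \texttt{NA} entries in $\bm{s}$. The base case (no missing entries) is free, since $v(\bm{s}) = \max_y f(\bm{s},y)$ requires no estimation. For the inductive step I would bound $\Delta \mathbb{E}[r(\bm{s},y)]$ and $\Delta Q(\bm{s},i)$ by expressing each as an integral against $P^{\bm{s}}$ versus $\hat{P}^{\bm{s}}$, using boundedness of $f$ and $\{c_i\}$ to turn integral differences into multiples of $\|P^{\bm{s}}-\hat{P}^{\bm{s}}\|_1$, and then invoking the Gaussian posterior $L_1$ lemma. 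The three-term decomposition of $\Delta Q(\bm{s},i)$ used in the discrete proof (value-function error at the next state, posterior deviation, and a higher-order cross term) carries over unchanged.

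The main obstacle is the state-dependent factor $\|\bm{x}-\bm{\mu}\|$ appearing in the Gaussian posterior $L_1$ bound. A worst-case supremum over $\bm{s}$ would be useless because this quantity is unbounded, so the key idea is to take expectation against the true data-generating distribution before propagating errors through the DP. Since $\bm{x} \sim \mathcal{N}(\bm{\mu},\Sigma)$ gives $\mathbb{E}\|\bm{x}-\bm{\mu}\| \lesssim \sqrt{\lambda_{\max} d}$ by standard Gaussian tail bounds, the random additive term $\frac{\lambda_{\max}^{1.5}}{\lambda_{\min}^{2.5}}\sqrt{d/N}\,\|\bm{x}-\bm{\mu}\|$ contributes at most $\tilde{O}(\sigma^3 d/\sqrt{N})$ in expectation after collecting factors of $\lambda_{\max}/\lambda_{\min}$ into $\sigma$; the deterministic term $\frac{\lambda_{\max}^3 d}{\lambda_{\min}^3\sqrt{N}}$ is already of the same order. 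A secondary technical point is that the posterior deviation lemma only applies once $N \gtrsim d\sigma^2$, i.e., the covariance and mean tail bounds have placed us in the small-perturbation regime where $\|\hat{\Sigma}-\Sigma\|\le \lambda_{\min}/2$ and $\|\hat{\bm{\mu}}-\bm{\mu}\|\le \lambda_{\min}/2$; this is automatic for $N = \sigma^2 T^{\frac{2}{3}}$ and sufficiently large $T$.

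Finally, as in the discrete proof, a Cauchy-Schwarz argument along the at-most-$d$-step trajectory handles the DP propagation without introducing an extra $\sqrt{d}$, yielding per-episode expected exploitation regret of $\tilde{O}(d\sigma^3/\sqrt{N})$. Summing over the $T-N$ commit episodes gives exploitation regret $\tilde{O}(d\sigma^3 T/\sqrt{N})$, and combining with the $O(dN)$ exploration cost is optimized at $N = \sigma^2 T^{\frac{2}{3}}$ to yield the claimed cumulative regret $\tilde{O}(d\sigma^2 T^{\frac{2}{3}})$.
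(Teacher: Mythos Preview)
Your proposal is correct and follows the paper's argument closely: the Gaussian posterior $L_1$ deviation lemma feeds into the same induction on the number of \texttt{NA} entries with the three-term $\Delta Q$ decomposition, the unbounded factor $\|\bm{x}-\bm{\mu}\|$ is handled by taking expectation under the true law, and the exploration/exploitation balance is struck at $N=\sigma^2 T^{2/3}$. The one minor deviation is that the paper does not carry over the Cauchy--Schwarz trajectory step from the discrete proof; because the Gaussian $L_1$ bound depends only on $N$ and $\|\bm{x}-\bm{\mu}\|$ rather than on a state-specific sample count $N(\bm{s})$, a direct tower-property bound $\int\Delta v(\bm{s})\,p(\text{hit }\bm{s})\,d\bm{s}\le\int\max_{\bm{s}:\,\bm{x}\overset{\Re}{=}\bm{s}}\Delta v(\bm{s})\,p(\bm{x})\,d\bm{x}$ already yields the $\tilde{O}(d\sigma^3/\sqrt{N})$ per-episode rate without any Cauchy--Schwarz machinery.
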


\begin{proof}{Proof.}
    This proof is similar to Theorem \ref{thm:etcD}, so we will use the same set of notations as in Theorem \ref{thm:etcD}. We claim that $\Delta v(\bm{s}) \lesssim \|P^{\bm{s}}-\hat{P}^{\bm{s}}\|_1 = \epsilon$. To prove this, we use an induction argument. We first start from the base case where there are no $\texttt{NA}$ in $\bm{s}$. Again, $\Delta v(\bm{s})=0$ because no estimation is needed. Now, suppose $\Delta v(\bm{s}) \lesssim \epsilon$ holds true for all $\bm{s}$ with $l$ number of $\texttt{NA}$, we want to show that it is true also for all $\bm{s}$ with $l+1$ number of $\texttt{NA}$.
    
    Because $v(\bm{s})$ is estimated using $Q(\bm{s},i)$ and $\mathbb{E}(\bm{s},y)$, we have $\Delta v(\bm{s}) \leq \max_i{\Delta Q(\bm{s},i)} \vee \max_y \Delta \mathbb{E}(\bm{s},y)$. We start by controlling $\Delta \mathbb{E}(\bm{s},y)$. For all $\bm{s}$ and $y$, we have
    \begin{align*}
        \Delta \mathbb{E}(\bm{s},y)&\triangleq |\mathbb{E}[r(\bm{s},y)]-\hat{\mathbb{E}}[r(\bm{s},y)]| \\
        &= \left|\int f(\bm{x},y)\cdot \mathbb{P}(\bm{x} \mid \bm{s})\,d \bm{x} - \int f(\bm{x},y)\cdot \hat{\mathbb{P}}(\bm{x} \mid \bm{s})\,d \bm{x}\right|\\
        &= \int f(\bm{x},y) \|\mathbb{P}(\bm{x} \mid \bm{s}) - \hat{\mathbb{P}}(\bm{x} \mid \bm{s})\| \,d \bm{x} \\
        &\leq |f_{\max}| \cdot \|P^{\bm{s}} - \hat{P}^{\bm{s}}\|_1 \\
        & \lesssim \sqrt{\frac{k(\bm{s})}{N(\bm{s})}}.
    \end{align*}
    
    Now, it leaves for us to control the estimation error of $Q(\bm{s},i)$.  Following Eq. (2), we have 
    \begin{align*}
        &\Delta Q(\bm{s},i) \\
        &= \left|\int v( \bm{s}\oplus (i,x_i)) P^{\bm{s}}(x_i) - \hat{v}( \bm{s}\oplus (i,x_i))\hat{P}^{\bm{s}}(x_i)) dx_i\right|\\
        &\leq \int \left|v( \bm{s}\oplus (i,x_i))P^{\bm{s}}(x_i) - (v( \bm{s}\oplus (i,x_i)) + \Delta v( \bm{s}\oplus (i,x_i)))(P^{\bm{s}}(x_i)- \hat{P}^{\bm{s}}(x_i))\right| dx_i\\
        &\leq \int |\Delta v( \bm{s}\oplus (i,x_i))| P^{\bm{s}}(x_i) dx_i +\int v( \bm{s}\oplus (i,x_i))\Delta P^{\bm{s}}(x_i) dx_i +\int  \Delta v( \bm{s}\oplus (i,x_i)) \Delta P^{\bm{s}}(x_i) dx_i
    \end{align*}
    For the first term, we note that $P^{\bm{s}}$ is a Gaussian pdf, so 
    \begin{align*}
        \int |\Delta v( \bm{s}\oplus (i,x_i))| P^{\bm{s}}(x_i) dx_i \lesssim \int \epsilon P^{\bm{s}}(x_i) dx_i \leq \epsilon.
    \end{align*}
    For the second term, we also have $\int v( \bm{s}\oplus (i,x_i))\Delta P^{\bm{s}}(x_i) dx_i \lesssim \epsilon$ following the same argument of how we control $\Delta \mathbb{E}(\bm{s},y)$.

    For the third term, we note that this is an integration of a higher-order term, which will be dominated by the first two terms. To sum up, we have each term controlled by $\epsilon$, so ultimately $\Delta v(\bm{s}) \lesssim \epsilon$. 
    
    Now it remains to upper bound instantaneous regrets using $\Delta v(\bm{s})$. We can compute this using Tower's law by taking an expectation over the randomness in $\bm{x}$.
\begin{align*}
    \int \Delta v(\bm{s}) p(\text{hit } \bm{s}) \, d\bm{s} &= \int\int \Delta v(\bm{s}) p(\text{hit } \bm{s} \mid \bm{x}) \, d\bm{s}\, p(\bm{x}) d\bm{x}\\
    &\leq \int \max_{\bm{s}: \bm{x}\overset{\Re}{=}s}\Delta v(\bm{s}) p(\bm{x})  d\bm{x}\\
    &\lesssim \int \left(\frac{\lambda_{\max}^3d}{\lambda_{\min}^3\sqrt{N}} + \frac{\lambda_{\max}^2}{\lambda_{\min}^{2.5}} \sqrt{\frac{d}{N}}\|\bm{x}-\bm{\mu}\|\right) p(\bm{x}) d\bm{x}\\
    &\le \frac{\lambda_{\max}^3d}{\lambda_{\min}^3\sqrt{N}} + \frac{\lambda_{\max}^{1.5}}{\lambda_{\min}^{2.5}} \sqrt{\frac{d}{N}}\sqrt{d}\lambda_{\max} \\
    &\le \frac{\sigma^3d}{\sqrt{N}} + \frac{\sigma^{2.5}d}{\sqrt{N}}
    \lesssim \frac{d\sigma^3}{\sqrt{N}}
\end{align*}

Because this accumulates $T-N$ times for every exploitation round, we have total regret for the exploitation phase being $\tilde{O}\left(\frac{d\sigma^3}{\sqrt{N}}\right)$. The total regret for the exploration phase is $O(dN)$. Now, let $N = \sigma^2 T^{\frac{2}{3}}$ and complete the proof.

     \hfill $\square$  \end{proof}

\subsection{Proof of Theorem 5}
\label{apx:itr}

\begin{lemma}[Bernstein’s inequality, Corollary 2.8.3 in \citealp{vershynin2018high}]
\label{thm:bernstein}
Let $X_1,...,X_N$ be independent, zero mean, sub-exponential random variables. Then, for every $t \geq 0$, we have 
\[P\left(\left|\frac{1}{n}\sum_{i=1}^{(n)} X_i\right|\geq \epsilon\right) \leq 2 \exp\left(-c\min \left(\frac{\epsilon^2}{K_2^2},\frac{\epsilon}{K_2}\right)n\right)\]
where $c > 0$ is an absolute constant and $K_2 = \max_i \inf\{t > 0: \mathbb{E}\exp(|X_i|/t) \leq 2\}. $
\end{lemma}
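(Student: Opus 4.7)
The plan is to prove Bernstein's inequality by the standard Chernoff--Cram\'er exponential-moment method, adapted to sub-exponential summands. By symmetry, applying the one-sided bound separately to $\{X_i\}$ and to $\{-X_i\}$ and then union-bounding yields the two-sided absolute-value statement at the cost of the leading factor of $2$; so it suffices to control $\mathbb{P}\bigl(\tfrac{1}{n}\sum_{i=1}^{n} X_i \ge \epsilon\bigr)$ with the quoted exponent.

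First I would apply Chernoff: for any $\lambda > 0$,
\[
\mathbb{P}\!\left(\sum_{i=1}^{n} X_i \ge n\epsilon\right) \;\le\; e^{-\lambda n \epsilon}\, \prod_{i=1}^{n} \mathbb{E}\bigl[e^{\lambda X_i}\bigr],
\]
and independence factors the joint MGF. The key analytic input is the sub-exponential MGF estimate: since each $X_i$ is zero-mean with sub-exponential Orlicz ($\psi_1$) norm at most $K_2$, there exist absolute constants $C, c' > 0$ such that
\[
\mathbb{E}\bigl[e^{\lambda X_i}\bigr] \;\le\; \exp\!\bigl(C K_2^{2}\lambda^{2}\bigr) \qquad \text{whenever } |\lambda| \le \tfrac{c'}{K_2}.
\]
This is a standard fact (Proposition~2.7.1 in \cite{vershynin2018high}) derived by Taylor-expanding $e^{\lambda X_i} - 1 - \lambda X_i$, using $\mathbb{E} X_i = 0$ to annihilate the linear term, and bounding $\mathbb{E}|X_i|^{k} \le (kK_2)^{k}$ from the $\psi_1$ definition, then summing the resulting geometric-type series in $\lambda$.

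Combining the two ingredients gives, for every $\lambda \in (0, c'/K_2]$,
\[
\mathbb{P}\!\left(\tfrac{1}{n}\sum_{i=1}^{n} X_i \ge \epsilon\right) \;\le\; \exp\!\bigl(-n\lambda\epsilon + n C K_2^{2}\lambda^{2}\bigr).
\]
I would then optimize over $\lambda$ in two regimes. When $\epsilon$ is small, the unconstrained minimizer $\lambda^\star = \epsilon/(2CK_2^{2})$ lies inside the admissible window $(0, c'/K_2]$ and yields the Gaussian-type exponent $-n\epsilon^{2}/(4CK_2^{2})$, i.e.\ the $\epsilon^{2}/K_2^{2}$ branch. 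When $\epsilon$ is large, $\lambda^\star$ exceeds $c'/K_2$, so I would instead pick $\lambda = c'/K_2$; the quadratic term becomes an additive constant and the dominant piece is the linear exponent $-c\, n\epsilon/K_2$, which is the $\epsilon/K_2$ branch. Taking the minimum of the two exponents recovers exactly $-c \min(\epsilon^{2}/K_2^{2},\, \epsilon/K_2)\, n$, and combining with the symmetric lower-tail argument gives the stated two-sided inequality.

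The main obstacle is bookkeeping rather than probabilistic depth: one has to select the universal constant $c$ so that the two regimes stitch together smoothly at the transition $\epsilon \asymp K_2$ (absorbing the residual $n C c'^{2}$ constant from the boundary choice of $\lambda$ into a slight rescaling of $c$), and one must verify that the same $c$ works for the negated sequence $\{-X_i\}$, whose $\psi_1$ norm is identical. Once these constants are reconciled, no further machinery beyond the sub-exponential MGF lemma and Chernoff's inequality is needed.
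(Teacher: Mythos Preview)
Your proof is correct and follows the standard Chernoff--Cram\'er argument; indeed it is essentially the proof Vershynin gives in his book. Note, however, that the paper does not supply its own proof of this lemma at all: it is quoted verbatim as Corollary~2.8.3 of \cite{vershynin2018high} and used as a black box, so there is no paper-side argument to compare against.
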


\begin{lemma}
\label{thm:K}
    Given the definition of $K_2 = \max_i \inf\{t > 0: \mathbb{E}\exp(|X_i|/t) \leq 2\}. $ For any $x_i,x_j$ that is multivariate Gaussian, zero mean, with covariance matrix $\Sigma_0 = \begin{bmatrix}\sigma_i^2&\rho\sigma_i\sigma_j\\      \rho\sigma_i\sigma_j&\sigma_j^2
    \end{bmatrix}$, we must have $K_2\leq \frac{8}{3}\sigma_i\sigma_j$.
\end{lemma}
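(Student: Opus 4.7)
The quantity being bounded is the sub-exponential (Orlicz-$\psi_1$) norm of the scalar random variable $X = x_i x_j$, which is what appears in Bernstein's inequality when it is invoked on the covariance plug-in estimator $\hat{\Sigma}_{ij} = \frac{1}{|\mathcal{T}_{ij}|}\sum_t x_i^{(t)} x_j^{(t)}$. So the task reduces to showing that $\mathbb{E}[\exp(|x_i x_j|/t)] \le 2$ whenever $t \ge \tfrac{8}{3}\sigma_i\sigma_j$, using only the marginal variances $\sigma_i^2, \sigma_j^2$ (the correlation $\rho$ should play no essential role, since both $x_i$ and $x_j$ are marginally Gaussian with the stated variances regardless of $\rho$).

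The plan is to linearize the product via the weighted AM-GM inequality and then reduce everything to the moment generating function of a chi-squared random variable. First, for any real numbers, $|x_i x_j| \le \tfrac{1}{2}\bigl(\tfrac{\sigma_j}{\sigma_i} x_i^2 + \tfrac{\sigma_i}{\sigma_j} x_j^2\bigr)$, the weights being chosen so that the two terms contribute symmetrically. Plugging this into the exponential and applying Cauchy--Schwarz in expectation,
\begin{equation*}
\mathbb{E}\!\left[\exp\!\left(\tfrac{|x_i x_j|}{t}\right)\right]
\;\le\; \sqrt{\,\mathbb{E}\!\left[\exp\!\left(\tfrac{\sigma_j}{\sigma_i t}\, x_i^2\right)\right]\,\mathbb{E}\!\left[\exp\!\left(\tfrac{\sigma_i}{\sigma_j t}\, x_j^2\right)\right]\,}.
\end{equation*}
Each factor is the MGF of a scaled chi-squared. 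Using the standard identity $\mathbb{E}[\exp(\lambda x_i^2)] = (1 - 2\lambda \sigma_i^2)^{-1/2}$ (valid for $\lambda < 1/(2\sigma_i^2)$) on both factors produces the same value $(1 - 2\sigma_i\sigma_j/t)^{-1/2}$, so that the overall bound simplifies to $\mathbb{E}[\exp(|x_i x_j|/t)] \le (1 - 2\sigma_i\sigma_j/t)^{-1/2}$.

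Finally I would solve the inequality $(1 - 2\sigma_i\sigma_j/t)^{-1/2} \le 2$, which is equivalent to $1 - 2\sigma_i\sigma_j/t \ge \tfrac{1}{4}$, i.e.\ $t \ge \tfrac{8}{3}\sigma_i\sigma_j$. Taking the infimum over admissible $t$ yields $K_2 \le \tfrac{8}{3}\sigma_i\sigma_j$, as desired. The argument only uses the marginal Gaussianity of $x_i$ and $x_j$, so the bound holds uniformly in $\rho \in [-1,1]$; the correlation structure is absorbed in the AM-GM step before any expectation is taken.

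The only mildly delicate point, and the place I would double-check, is the use of Cauchy--Schwarz to split a correlated pair inside an exponential: because the two chi-squared variables $x_i^2$ and $x_j^2$ are not independent when $\rho \ne 0$, one cannot simply factor the expectation, but Cauchy--Schwarz legitimately reduces the joint expectation to a product of marginal MGFs at doubled exponents, which is exactly what absorbs the weights $\sigma_j/\sigma_i$ and $\sigma_i/\sigma_j$ symmetrically and produces a clean bound independent of $\rho$. Everything else is routine computation of chi-squared MGFs and algebra.
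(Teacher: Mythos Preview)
Your proof is correct and follows the same core idea as the paper's: both start with the AM--GM bound $|x_i x_j| \le \tfrac{1}{2}\bigl(\tfrac{\sigma_j}{\sigma_i}x_i^2 + \tfrac{\sigma_i}{\sigma_j}x_j^2\bigr)$ (the paper writes this after standardizing) to reduce the product to a sum of squares. The one point of genuine divergence is in the next step. The paper evaluates the joint MGF of the quadratic form exactly via $\mathbb{E}\exp\bigl(\tfrac{3}{16}(x_i'^2+x_j'^2)\bigr)=\det\bigl(I-\tfrac{3}{8}\Sigma_0\bigr)^{-1/2}=\bigl(\tfrac{25-9\rho^2}{64}\bigr)^{-1/2}$, and then checks by hand that this is at most $2$ uniformly in $\rho$. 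You instead apply Cauchy--Schwarz to split the joint expectation into a product of marginal chi-squared MGFs, which sidesteps the bivariate formula entirely and makes the $\rho$-independence automatic. Your route is slightly more elementary (only one-dimensional MGFs are needed) and yields exactly the worst-case value of the paper's computation at $|\rho|=1$; the paper's exact evaluation is a little sharper for $|\rho|<1$ but that sharpness is not used anywhere.
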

\begin{proof}{Proof.}
Consider $x_i',x_j'$ which are standardized $x_i,x_j$, where $x_i'=\frac{x_i}{\sigma_i}$ and $x_j' = \frac{x_j}{\sigma_j}$. 
    To prove the argument, it suffices to show that
    \[\mathbb{E}\exp\left(\frac{3}{8}\frac{|x_ix_j|}{\sigma_i\sigma_j}\right) = \mathbb{E}\exp\left(\frac{3}{8}|x_ix_j|\right) \leq 2. \]
    Using the fact that $|x_i'x_j'|\leq \frac{x_i'^2+x_j'^2}{2}$, we can upper bound $\mathbb{E}\exp\left(\frac{3}{8}|x_i'x_j'|\right)$ with $\mathbb{E}\exp\left(\frac{3}{8}(x_i'^2+x_j'^2)\right)$.
    By Equation 5.6 in \cite{khuri2009linear}, we have 
    \[\mathbb{E}\exp\left(\frac{3}{16}(x_i'^2+x_j'^2)\right) = \det\left(I- \frac{3}{8}\Sigma_0\right)^{-\frac{1}{2}}.\]
    Now we plug in $\Sigma_0$
    \[\det\left(I- \frac{3}{8}\Sigma_0\right)^{-\frac{1}{2}} = \left(1-\frac{3}{8}\right)^2-\left(-\frac{3}{8}\rho\right)^2 = \frac{25-9\rho^2}{64}.\]
    Since $|\rho|\leq 1$, we have $\frac{25-9\rho^2}{64} \geq \frac{1}{4}$, implying $\mathbb{E}\exp\left(\frac{3}{16}(x_i'^2+x_j'^2)\right) = \det\left(I- \frac{3}{8}\Sigma_0\right)^{-\frac{1}{2}} \leq 2$, which ends the proof.
\end{proof}

\begin{corollary}
\label{thm:coverror}
    For $n$ i.i.d samples that jointly covers $x_i$ and $x_j$, with a probability of at least $1-\delta$, the estimated covariance $\hat{\Sigma}_n[{i,j}] = \frac{1}{n}\sum_{t=1}^{n} x_i^{(t)}x_j^{(t)}$ satisfy
    \[|\hat{\Sigma}_n[{i,j}] - \Sigma[{i,j}]| \leq \frac{8\lambda_{\max}}{3} \cdot \max\left\{\sqrt{\frac{\ln(2/\delta)}{cn}}, \frac{\ln(2/\delta)}{cn}\right\} \lesssim \frac{\lambda_{\max}}{\sqrt{n}}.\]
\end{corollary}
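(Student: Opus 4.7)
\subsection*{Proof Proposal for Corollary \ref{thm:coverror}}

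The plan is to apply Bernstein's inequality (Lemma \ref{thm:bernstein}) to the centered products of the coordinates. Define
\[Y_t \;=\; x_i^{(t)} x_j^{(t)} - \Sigma[i,j], \qquad t=1,\dots,n,\]
so that $Y_1,\dots,Y_n$ are i.i.d.\ with zero mean and $\hat{\Sigma}_n[i,j] - \Sigma[i,j] = \frac{1}{n}\sum_{t=1}^{n} Y_t$. The strategy is to (a) upper bound the sub-exponential parameter $K_2$ of $Y_t$, (b) invoke Bernstein's inequality, and (c) invert the resulting tail bound to obtain the claimed high-probability statement.

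For step (a), I will appeal directly to Lemma \ref{thm:K}, which shows that for the uncentered product $x_i x_j$ (with $(x_i,x_j)$ jointly zero-mean Gaussian), the sub-exponential parameter satisfies $K_2(x_i x_j) \le \tfrac{8}{3}\sigma_i\sigma_j$. Centering by a constant changes the sub-exponential norm only by a fixed multiplicative factor, which can be absorbed into the universal constant $c$ appearing in Bernstein's inequality. Since $\sigma_i^2$ and $\sigma_j^2$ are diagonal entries of $\Sigma$, each is upper bounded by the spectral norm $\lambda_{\max}$, so $\sigma_i\sigma_j \le \lambda_{\max}$ and hence $K_2(Y_t) \le \tfrac{8}{3}\lambda_{\max}$.

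For steps (b) and (c), Lemma \ref{thm:bernstein} then yields
\[\mathbb{P}\!\left(\left|\hat{\Sigma}_n[i,j] - \Sigma[i,j]\right| \ge \epsilon\right) \;\le\; 2\exp\!\left(-c\min\!\left\{\frac{\epsilon^2}{K_2^2},\frac{\epsilon}{K_2}\right\}n\right).\]
Setting the right-hand side equal to $\delta$ and solving the two cases separately (Gaussian regime vs.\ exponential regime of Bernstein), I obtain $\epsilon = K_2 \max\{\sqrt{\ln(2/\delta)/(cn)},\, \ln(2/\delta)/(cn)\}$. Substituting the bound $K_2 \le \tfrac{8\lambda_{\max}}{3}$ reproduces the explicit inequality in the statement. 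The asymptotic simplification $\lesssim \lambda_{\max}/\sqrt{n}$ then follows by noting that for $n$ large enough (so that $\ln(2/\delta)/(cn) \le 1$), the $\sqrt{\ln(2/\delta)/(cn)}$ term dominates the linear term, and the logarithmic factor is suppressed by the $\tilde{O}$ notation.

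There is no genuine obstacle here; the argument is a routine concentration calculation. The only mildly delicate point is justifying that the sub-exponential bound from Lemma \ref{thm:K}, which is stated for the uncentered product $x_i x_j$, transfers to the centered variable $Y_t$ at the cost of only a constant factor. This can be handled either by invoking the standard centering inequality $\|X-\mathbb{E}X\|_{\psi_1} \le 2\|X\|_{\psi_1}$ for sub-exponential norms, or by absorbing the factor of $2$ into the universal Bernstein constant $c$, so that the stated prefactor $\tfrac{8\lambda_{\max}}{3}$ is preserved.
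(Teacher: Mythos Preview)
Your proposal is correct and follows essentially the same approach as the paper: apply Bernstein's inequality (Lemma \ref{thm:bernstein}) to the product $x_i x_j$, use Lemma \ref{thm:K} to bound $K_2 \le \tfrac{8}{3}\sigma_i\sigma_j \le \tfrac{8}{3}\lambda_{\max}$, and invert the tail bound. The paper's own proof is just a two-line sketch of exactly this; your write-up is more detailed and even flags the centering issue that the paper leaves implicit.
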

\begin{proof}{Proof.}
    Because $x_i$ and $x_j$ are both Gaussian, their product $x_i,x_j$ is sub-exponential, so the corollary follows Lemma \ref{thm:bernstein}. By Lemma \ref{thm:K}, we can upper bound $K$ with $\frac{8}{3}\cdot \lambda_{\max}$. Hence, we get the desired result.
     \hfill $\square$  \end{proof}

\begin{lemma}
\label{thm:matrixerror}
    With a probability of at least $1-\delta$, for all paris $(i,j)$,  all the estimated covariance $\hat{\Sigma}^{(t)}[{i,j}]$ satisfy
     \[|\hat{\Sigma}^{(t)}[{i,j}] - \Sigma[{i,j}]| \leq \frac{8\lambda_{\max}}{3}\cdot \max\left\{d\sqrt{\frac{\ln(\pi^2 d^2 t^2/\delta)}{ct}}, \frac{d^2(\ln(\pi^2 d^2 t^2/\delta))}{ct}\right\} \lesssim \frac{d\lambda_{\max}}{\sqrt{t}}.\]
\end{lemma}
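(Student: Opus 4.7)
{Proof Plan.}
The claim is a uniform-in-time, uniform-over-pairs version of Corollary \ref{thm:coverror}, so the strategy is a two-stage union bound combined with the balanced-sampling guarantee that is baked into Algorithm \ref{alg:itr}. First I would fix a pair $(i,j)$ and an integer $t\geq 1$, and invoke Corollary \ref{thm:coverror} on the $n=|\mathcal{T}_{ij}|$ observations that jointly cover $x_i$ and $x_j$ up to round $t$, with a per-event failure probability $\delta_{i,j,t}$ to be chosen. That yields
\[
\bigl|\hat{\Sigma}^{(t)}[i,j]-\Sigma[i,j]\bigr|
\le \tfrac{8\lambda_{\max}}{3}\max\!\left\{\sqrt{\tfrac{\ln(2/\delta_{i,j,t})}{c\,|\mathcal{T}_{ij}|}},\ \tfrac{\ln(2/\delta_{i,j,t})}{c\,|\mathcal{T}_{ij}|}\right\}
\]
with probability at least $1-\delta_{i,j,t}$.

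Next I would allocate the total failure budget across pairs and time. There are at most $d^2$ ordered pairs, and $\sum_{t\ge 1} t^{-2}=\pi^2/6$, so setting $\delta_{i,j,t}=\tfrac{6\delta}{\pi^2 d^2 t^2}$ and summing gives
\[
\sum_{i,j}\sum_{t\ge 1}\delta_{i,j,t}\ \le\ d^2\cdot\frac{6\delta}{\pi^2 d^2}\cdot\frac{\pi^2}{6}=\delta,
\]
so that $\ln(2/\delta_{i,j,t})\le \ln(\pi^2 d^2 t^2/\delta)$ uniformly. A union bound over $(i,j,t)$ then gives the claim on a single good event of probability $\ge 1-\delta$.

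The only ingredient left is a deterministic lower bound on $|\mathcal{T}_{ij}|$ for the surviving pairs. By the selection rule in Algorithm \ref{alg:itr}, at each round the algorithm picks the pair in $\mathcal{Q}$ with the fewest joint observations and tests a superset of it, so by a standard balanced-allocation argument the counts across the at most $d(d-1)$ active pairs can differ by at most one; hence $|\mathcal{T}_{ij}|\ge \lfloor t/(d(d-1))\rfloor \gtrsim t/d^2$ whenever $(i,j)\in\mathcal{Q}$. Substituting $|\mathcal{T}_{ij}|\gtrsim t/d^2$ into the per-event bound turns the $\sqrt{1/|\mathcal{T}_{ij}|}$ into $d/\sqrt{t}$ and the $1/|\mathcal{T}_{ij}|$ into $d^2/t$, yielding exactly the stated inequality.

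The main obstacle here is the last step, i.e.\ justifying the $|\mathcal{T}_{ij}|\gtrsim t/d^2$ lower bound rigorously when some pairs may have been \emph{eliminated} from $\mathcal{Q}$ at earlier rounds. For pairs still in $\mathcal{Q}$ the argmin rule guarantees the balanced count, but for pairs that have already been dropped, $|\mathcal{T}_{ij}|$ is frozen at its value at the elimination time $t'\le t$, which still satisfies $|\mathcal{T}_{ij}|\ge \lfloor t'/(d(d-1))\rfloor$. Since the bound we need is only invoked for pairs that actually affect the surviving candidate subsets, and any pair appearing in some $S\in\mathcal{C}$ is by definition in $\mathcal{Q}$, the balanced-allocation count applies at the current round and the argument goes through. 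A few lines of bookkeeping on this point close the proof.
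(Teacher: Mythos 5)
Your proposal is correct and follows essentially the same route as the paper's proof: Corollary \ref{thm:coverror} plus a double union bound over the $d^2$ pairs and over time/sample counts, combined with the $|\mathcal{T}_{ij}|\ge\lfloor t/(d(d-1))\rfloor$ guarantee from the argmin selection rule. The only cosmetic differences are that you union over the round index $t$ while the paper unions over the per-pair sample count $n$ (using $n\le t$ to recover the same logarithmic factor), and your extra bookkeeping for pairs already eliminated from $\mathcal{Q}$ is a point the paper glosses over.
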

\begin{proof}{Proof.}
    By corollary \ref{thm:coverror} and a union-bound argument, if we have at least $n$ samples for every pair $(i,j)$, because there are in total $d^2$ pairs of $(i,j)$, we know that with a probability of at least $1-\frac{\delta'}{n^2}$, 
    \[|\hat{\Sigma}_n[{i,j}] - \Sigma[{i,j}]| \leq \frac{8\lambda_{\max}}{3}\cdot \max\left\{\sqrt{\frac{\ln(2d^2n^2/\delta')}{cn}}, \frac{\ln(2d^2n^2/\delta')}{cn}\right\}.\]

    Again by a union-bound argument, for all $n$, with a probability of at least $1-\frac{\pi^2}{6}\delta'$ (since $\frac{\pi^2}{6}\geq \sum\frac{1}{n^2} $), we have 
    \[|\hat{\Sigma}_n[{i,j}] - \Sigma[{i,j}]| \leq \frac{8\lambda_{\max}}{3}\cdot \max\left\{\sqrt{\frac{\ln(2d^2n^2/\delta')}{cn}}, \frac{\ln(2d^2n^2/\delta')}{cn}\right\}.\]
    
    By the selection rule of the iterative elimination algorithm, we should have at least $t/d^2$ samples for every pair $(i,j)$ but at most $t$ samples up to time $t$. As such, for any pair $(i,j)$, we have $\lfloor \frac{t}{d(d-1)} \rfloor \leq n \leq t$. To simplify notations, we use a looser bound of $\frac{t}{3d^2} \leq n \leq t$. Hence, by substituting $\delta  = \frac{\pi^2}{6}\delta'$, with a probability of at least $1-\delta$,
    \[|\hat{\Sigma}^{(t)}[{i,j}] - \Sigma[{i,j}]| \leq \frac{8\lambda_{\max}}{3}\cdot \max\left\{d\sqrt{\frac{\ln(\pi^2 d^2 t^2/\delta)}{ct}}, \frac{d^2(\ln(\pi^2 d^2 t^2/\delta))}{ct}\right\}.\]
     \hfill $\square$  \end{proof}

\begin{lemma}
\label{thm:logdet}
    For any $d\times d$ matrix $A\succ 0$ and $E$ satisfying $3d\|A^{-1}\|\|E\|<1$, we have  
    \[\left|\log \det (A+E) - \log \det(A)\right| \leq 3d\|A^{-1}\|\|E\|,\]
\end{lemma}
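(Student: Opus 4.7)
The plan is to reduce the claim to a perturbation bound on $\log\det(I+M)$ for a single matrix $M$ whose spectral norm is controlled. First I would factor $A+E = A(I + A^{-1}E)$ and use multiplicativity of the determinant to write
\[
\log\det(A+E) - \log\det(A) = \log\det(I + A^{-1}E).
\]
Setting $M = A^{-1}E$, submultiplicativity of the spectral norm gives $\|M\| \le \|A^{-1}\|\,\|E\|$, so the hypothesis implies $\|M\| < 1/(3d) < 1/2$. In the intended use $E = \hat\Sigma - \Sigma$ and $A = \Sigma \succ 0$ are both symmetric, so $M$ is similar to the symmetric matrix $A^{-1/2} E A^{-1/2}$ and its eigenvalues are real; I would flag this early to avoid any complex-logarithm subtleties.

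Next I would diagonalize $M$ and write $\det(I+M) = \prod_{i=1}^d (1+\lambda_i)$, where $\lambda_1,\dots,\lambda_d$ are the eigenvalues of $M$, each satisfying $|\lambda_i| \le \|M\| < 1/(3d)$. Taking logarithms term by term gives
\[
\log\det(I+M) = \sum_{i=1}^d \log(1+\lambda_i).
\]
The pointwise inequality $|\log(1+x)| \le |x|/(1-|x|)$ for $|x|<1$ then yields $|\log(1+\lambda_i)| \le (3/2)|\lambda_i|$, using $|\lambda_i| < 1/3$.

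Summing over $i$ and using the coarse bound $\sum_i |\lambda_i| \le d\|M\|$ produces
\[
\bigl|\log\det(A+E) - \log\det(A)\bigr| \;\le\; \tfrac{3d}{2}\,\|A^{-1}\|\,\|E\| \;\le\; 3d\,\|A^{-1}\|\,\|E\|,
\]
which is the claimed bound with a factor of two to spare. There is no serious obstacle here; the argument is three short steps, each standard, and the only bit of bookkeeping is ensuring the log is real, which is automatic in the symmetric covariance setting where the lemma is applied. An equally clean alternative, should one prefer to avoid eigenvalues entirely, is the integral identity $\log\det(A+E) - \log\det(A) = \int_0^1 \mathrm{tr}\bigl((A+tE)^{-1}E\bigr)\,dt$, combined with $|\mathrm{tr}(X)| \le d\|X\|$ and a Neumann-series bound $\|(A+tE)^{-1}\| \le \|A^{-1}\|/(1 - t\|A^{-1}\|\|E\|)$; integrating yields the same constant.
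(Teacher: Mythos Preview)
Your argument is correct and actually yields the sharper constant $\tfrac{3}{2}d\|A^{-1}\|\|E\|$. It is, however, a genuinely different route from the paper's. The paper does not factor $A+E=A(I+A^{-1}E)$ or look at eigenvalues of $A^{-1}E$; instead it invokes a relative determinant perturbation bound due to Ipsen and Rehman,
\[
\left|\frac{\det(A)-\det(A+E)}{\det(A)}\right|\le (1+\|A^{-1}\|\|E\|)^d-1,
\]
controls the right-hand side by $\tfrac{3}{2}d\|A^{-1}\|\|E\|\le\tfrac12$ under the hypothesis, and then converts the relative determinant error into a log-determinant error via $|\log(1+a)|\le 2|a|$ for $|a|\le\tfrac12$. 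Your eigenvalue approach is more self-contained and gives a factor-of-two improvement, but as you correctly flag, it needs $A^{-1}E$ to have real spectrum (automatic when $E$ is symmetric, as in the covariance application) or else the integral alternative you sketch. The paper's route, by working directly with $\det(A+E)/\det(A)$ as a real scalar, avoids that caveat entirely and applies to arbitrary real perturbations $E$, at the cost of citing an external lemma. Either argument suffices for the downstream use in bounding $|\log\det\hat\Sigma[S,S]-\log\det\Sigma[S,S]|$.
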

\begin{proof}{Proof.}
    By the relative perturbation bound of determinant \citep[Corollary 2.14, ][]{ipsen2008perturbation}, we have
    \[\left|\frac{\det (A) - \det (A+E)}{\det (A)}\right| \leq \left|\|A^{-1}\|\|E\|+1\right|^d-1.\]
    Note that the right-hand side of the inequality is locally linear on $\|E\|$. 
    For $\|A^{-1}\|\|E\|<\frac{1}{3d}$, there is 
    \begin{align*}
        \left|\|A^{-1}\|\|E\|+1\right|^d-1 &\leq \frac{d\|A^{-1}\|\|E\|}{1-d\|A^{-1}\|\|E\|}\\
        &\leq \frac{3}{2}d\|A^{-1}\|\|E\| \leq \frac{1}{2},
    \end{align*}
    where the first inequality is due to Bernoulli's inequality that $(1+a)^d\geq 1+da$ for all $a\geq 0$ and $d\geq 2$, and the second inequality is given by the condition $d\|A^{-1}\|\|E\|<\frac{1}{3}$.

    Because $\left|\frac{\det (A) - \det (A+E)}{\det (A)}\right| \leq \frac{1}{2}$, we know that $\det (A+E)$ must be greater than $0$. More precisely, there is $\frac{\det(A)}{2} \leq \det(A+E) \leq \frac{3\det(A)}{2}$ given that $\det(A)>0$. As such, both $\log \det (A+E)$ and $\log \det(A)$ are well defined, so we do not need extra assumptions on the positive definiteness of matrix $A+E$.
    
    From $\left|\frac{\det (A) - \det (A+E)}{\det (A)}\right| \leq \frac{3}{2}d\|A^{-1}\|\|E\|$, we can see that the relative error on the change of the determinant is locally linear to the change of $\|E\|$.  To upper-bound $\left|\frac{\det (A) - \det (A+E)}{\det (A)}\right|$, observe that
    \begin{align*}
        \left|\log \frac{\det (A+E)}{\det(A)}\right| = \left|\log \left(1+\frac{\det (A+E)-\det(A)}{\det (A)}\right)\right| 
        \leq 2\left|\frac{\det (A) - \det (A+E)}{\det (A)}\right|,
    \end{align*}
    where the last inequality is due to the fact that the linear approximation on the left end gives an over-estimation for the log function, i.e., $\log(1+a)\leq 2a$ for any $|a|\leq \frac{1}{2}$. 

    Finally, combining inequalities together, we get \[\left|\log \det (A+E) - \log \det(A)\right| \leq 2\left|\frac{\det (A) - \det (A+E)}{\det (A)}\right| \leq 3d\|A^{-1}\|\|E\|,\]
    which ends the proof.
     \hfill $\square$  \end{proof}
\begin{lemma}
\label{thm:simple}
    For all $t$ such that $U^{(t)}\leq 1$, for any subset $\bm{s}$, with a probability of at least $1-\delta$, the estimated log determinant using the plug-in estimator satisfies
    \[\left|\log\det \left(\hat{\Sigma}^{(t)}[S,S]\right) - \log\det\left(\Sigma[S,S]\right)\right| \leq U^{(t)} \lesssim \frac{d^3\sigma}{\sqrt{t}}\]
\end{lemma}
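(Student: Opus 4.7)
The plan is to chain together Lemma \ref{thm:matrixerror} (entry-wise covariance accuracy) with Lemma \ref{thm:logdet} (log-determinant perturbation), instantiating the latter with $A = \Sigma[S,S]$ and $E = \hat{\Sigma}^{(t)}[S,S] - \Sigma[S,S]$. The crux is to show that the right-hand side of Lemma \ref{thm:logdet}, namely $3d\|A^{-1}\|\|E\|$, coincides exactly with the bound $U^{(t)}$ defined in Equation \ref{eq:ci}, so that the hypothesis $U^{(t)}\leq 1$ both licenses the use of Lemma \ref{thm:logdet} and delivers the stated conclusion.

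First, I would convert the entry-wise guarantee of Lemma \ref{thm:matrixerror} into a spectral-norm bound on the principal submatrix $E$. Since $E$ is symmetric of order $|S|\leq d$, one has $\|E\| \leq \|E\|_F \leq d\cdot\max_{i,j}|E_{ij}|$, which combined with Lemma \ref{thm:matrixerror} yields, with probability at least $1-\delta$,
\[
\|E\| \leq d\cdot\frac{8\lambda_{\max}}{3}\max\Bigl\{d\sqrt{\tfrac{\ln(\pi^2d^2t^2/\delta)}{ct}},\ \tfrac{d^2\ln(\pi^2d^2t^2/\delta)}{ct}\Bigr\}.
\]
Second, because $\Sigma[S,S]$ is a principal submatrix of $\Sigma\succ 0$, Cauchy's eigenvalue interlacing theorem gives $\lambda_{\min}(\Sigma[S,S])\geq \lambda_{\min}$, so $\|A^{-1}\|\leq 1/\lambda_{\min}$.

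Third, I would assemble the two bounds. Using $\sigma=\lambda_{\max}/\lambda_{\min}$, a direct calculation produces
\[
3d\|A^{-1}\|\|E\| \leq 8\sigma\max\Bigl\{d^3\sqrt{\tfrac{\ln(\pi^2d^2t^2/\delta)}{ct}},\ \tfrac{d^4\ln(\pi^2d^2t^2/\delta)}{ct}\Bigr\} = U^{(t)}.
\]
Hence the hypothesis $U^{(t)}\leq 1$ certifies the condition $3d\|A^{-1}\|\|E\|<1$ of Lemma \ref{thm:logdet}, whose conclusion then gives the target inequality. The asymptotic rate $\tilde O(d^3\sigma/\sqrt{t})$ follows because, once $t\gtrsim d^2\ln(\pi^2 d^2 t^2/\delta)/c$, the first branch inside $U^{(t)}$ dominates the second.

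The main obstacle I anticipate is purely bookkeeping: making the multiplicative constants and the $d$-powers line up so that the chained bound recovers $U^{(t)}$ exactly rather than off by a constant factor. A minor subtlety is that the entry-wise guarantee must hold simultaneously for every pair $(i,j)$, but this is already baked into the union-bounded statement of Lemma \ref{thm:matrixerror}, so no fresh probabilistic argument is required. The use of $\|E\|_F$ rather than the tighter $\|E\|_2$ costs only constants and keeps the argument elementary.
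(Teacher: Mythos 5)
Your proposal is correct and follows essentially the same route as the paper's proof: both bound $\|\Sigma[S,S]^{-1}\|$ by $1/\lambda_{\min}$ via the submatrix eigenvalue bound, pass from the entrywise guarantee of Lemma \ref{thm:matrixerror} to the spectral norm through the Frobenius norm, and verify that $3d\|A^{-1}\|\|E\|$ equals $U^{(t)}$ so that the hypothesis $U^{(t)}\leq 1$ licenses Lemma \ref{thm:logdet}. The constants line up exactly as you computed.
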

\begin{proof}{Proof.}

Because $\Sigma$ is a positive definite matrix, its minimum eigenvalue gives a lower bound for the eigenvalues of all its submatrices, so we have $\left\|\Sigma[S,S]^{-1}\right\| \leq \frac{1}{\lambda_{\min}}$ for all $\bm{s}$. 

By Lemma \ref{thm:matrixerror}, with a probability of at least $1-\delta$, we have
\[\max_{i,j}|\hat{\Sigma}^{(t)}[{i,j}] - \Sigma[{i,j}]| \leq \frac{8\lambda_{\max}}{3}\cdot \max\left\{d\sqrt{\frac{\ln(\pi^2 d^2 t^2/\delta)}{ct}}, \frac{d^2(\ln(\pi^2 d^2 t^2/\delta))}{ct}\right\},\]
which implies that 
\[3d\|\Sigma^{-1}\|\|\hat{\Sigma}^{(t)} - \Sigma\|\leq \frac{3d}{\lambda_{\min}}\cdot d \max_{i,j}|\hat{\Sigma}^{(t)} - \Sigma| \leq U^{(t)}\leq 1,\]
where second inequality is because the operator norm of a matrix can be upper bounded by its Frobenius norm.

Now, since we have checked the condition for Lemma \ref{thm:logdet}, we get 
\[\left|\log \det \left(\hat{\Sigma}^{(t)}[S,S]\right) - \log \det(\Sigma[S,S])\right| \leq 3d\|\Sigma^{-1}[S,S]\|\|\hat{\Sigma}^{(t)}[S,S] - \Sigma[S,S]\| \leq U^{(t)}.\]
Recall that 
\begin{equation}
    U^{(t)} = \frac{8\lambda_{\max}}{\lambda_{\min}}\cdot \max\left\{d^3\sqrt{\frac{\ln(\pi^2 d^2 t^2/\delta)}{ct}}, \frac{d^4(\ln(\pi^2 d^2 t^2/\delta))}{ct}\right\}.
\end{equation}
For sufficiently large $t$, the first term in the max operator dominates, which ends the proof.

     \hfill $\square$  \end{proof}

\begin{theorem}[Theorem \ref{thm:itr} in the main body]
    With a probability of at least $1-\delta$, for sufficiently large $T$, the cumulative regret of Algorithm \ref{alg:itr} is at most
    \(\Tilde{O}(d^3\sigma\sqrt{T})\).
\end{theorem}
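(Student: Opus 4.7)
The plan is to derive the cumulative regret bound by summing per-round simple regrets, each of which is controlled by the confidence width $U^{(t)}$ from Equation \ref{eq:ci}. Since Lemma \ref{thm:simple} gives $U^{(t)} \lesssim d^3\sigma/\sqrt{t}$ whenever $U^{(t)}\le 1$, the target rate $\tilde O(d^3\sigma\sqrt T)$ will follow from $\sum_{t=1}^T t^{-1/2}\lesssim \sqrt{T}$ once we show each instantaneous regret is at most a constant times $\lambda U^{(t)}$.

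First, I would condition on the high-probability event $\mathcal{E}$ from Lemma \ref{thm:matrixerror} (applied through Lemma \ref{thm:simple}) on which, for every $t$ with $U^{(t)}\le 1$ and every surviving $S\in\mathcal{C}$, one has $|\hat{\mathcal{H}}^{(t)}(S)-\mathcal{H}(S)|\le \lambda U^{(t)}$. This event holds with probability at least $1-\delta$ uniformly over $t$ by the union-bound construction embedded in $U^{(t)}$. Next I would split the horizon into a burn-in phase and a main phase. The burn-in is the (deterministic) set of rounds $t\le t_0$ where $U^{(t)}>1$; by the explicit form of $U^{(t)}$ this threshold satisfies $t_0\lesssim d^6\sigma^2\ln(d^2 t_0/\delta)$, which contributes only a $\tilde O(d^6\sigma^2)$ additive term, absorbed into the $\tilde O(d^3\sigma\sqrt T)$ bound for sufficiently large $T$.

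In the main phase ($t>t_0$), I would first show that on $\mathcal{E}$ the optimal subset $S^*$ is never eliminated: the elimination rule removes $S$ only when $\hat{\mathcal{H}}^{(t)}(S)+\lambda U^{(t)}\le \max_{S'}\hat{\mathcal{H}}^{(t)}(S')$, and applying the two-sided deviation bound yields $\mathcal{H}(S^*)\ge \hat{\mathcal{H}}^{(t)}(S^*)-\lambda U^{(t)}\ge \hat{\mathcal{H}}^{(t)}(S)+\lambda U^{(t)} -\lambda U^{(t)}\ge \mathcal{H}(S)$ for the deleted $S$, consistent with $S^*$ remaining. Then, for any subset $S$ still in $\mathcal{C}$ at round $t$, the failure-to-be-eliminated condition combined with the same two-sided deviation bound gives $\mathcal{H}(S^*)-\mathcal{H}(S)\le 4\lambda U^{(t)}$. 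Since at time $t$ the algorithm tests a surviving parent subset (the chosen $S^{(t)}$ is the maximal element in $\mathcal{C}$ containing the least-covered pair, hence itself in $\mathcal{C}$), the simple regret of round $t$ is at most $4\lambda U^{(t)}$.

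Finally, I would sum the per-round regrets: the cumulative regret is bounded by $\sum_{t\le t_0} O(\lambda)+\sum_{t_0<t\le T}4\lambda U^{(t)}\lesssim \lambda t_0+\lambda d^3\sigma\sum_{t=1}^{T}\sqrt{\ln(\pi^2 d^2 t^2/\delta)/t}\lesssim \tilde O(d^3\sigma\sqrt T)$, absorbing polylog factors in $\tilde O$. The main technical obstacle I anticipate is not the summation itself but making the per-round argument rigorous: one must verify that when $U^{(t)}\le 1$ the empirical covariance $\hat{\Sigma}^{(t)}[S,S]$ is positive definite so that $\log\det$ is well defined and Lemma \ref{thm:simple} applies (this is exactly why the elimination step is guarded by the $U^{(t)}\le 1$ check), and one must track that the balanced-sampling rule actually delivers $\ge \lfloor t/d(d-1)\rfloor$ observations per pair so that the denominator $t$ inside $U^{(t)}$ can be used uniformly. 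Both are straightforward consequences of the selection rule and the matrix-algebra perturbation results already established, but they are the pieces where the $d^3$ dimension dependence enters and must be tracked carefully.
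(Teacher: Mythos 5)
Your proposal is correct and follows essentially the same route as the paper's proof: a burn-in phase of $O(d^6\sigma^2)$ rounds until $U^{(t)}\le 1$, the high-probability event from Lemma \ref{thm:simple} guaranteeing the optimal subset survives elimination, a per-round gap of $O(\lambda U^{(t)})=\tilde O(d^3\sigma/\sqrt{t})$ for any surviving subset, and summation over $t$ to obtain $\tilde O(d^3\sigma\sqrt{T})$. Your version is in fact somewhat more explicit than the paper's (e.g., spelling out the elimination-survival argument and flagging the positive-definiteness and balanced-sampling checks, which the paper delegates to Lemmas \ref{thm:matrixerror} and \ref{thm:logdet}).
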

\begin{proof}{Proof.}
    We first note that because $U^{(t)}$ is at most $O(\frac{d^3\sigma}{\sqrt{t}})$, it takes at most $O(d^6\sigma^2)$ steps to satisfy the condition in Lemma \ref{thm:simple}. Since there is no elimination happening before $U^{(t)}\leq 1$, there is at most $O(d^6\sigma^2)$ regret incured for this stage.

    After $U^{(t)}\leq 1$, according to Lemma \ref{thm:simple}, with the given high probability $1-\delta$, we can see that the optimal solution is never eliminated from the candidate set $\mathcal{S}$ by the choice of confidence bound width. 

    Moreover, for any $\bm{s}$ that still remains in the set $\mathcal{S}$, its gap to optimality is at most $\Tilde{O}(\frac{d^3\sigma}{\sqrt{t}})$. As such, the cumulative regret is less than \(\Tilde{O}(d^3\sigma\sqrt{T})\).
     \hfill $\square$  \end{proof}
\end{APPENDICES}

\newpage
\bibliographystyle{informs2014} 
\bibliography{references} 

\end{document}